\documentclass[11pt]{article}

\usepackage[margin=1in]{geometry}

\usepackage[english]{babel}

\usepackage{amsmath,amsthm,amssymb}
\usepackage{mathtools}
\usepackage{mathrsfs}
\usepackage{amsfonts}
\usepackage{bm}

\usepackage{graphicx}
\usepackage{booktabs}
\usepackage{caption}
\usepackage{subcaption}

\usepackage{enumitem}

\usepackage{algorithm}
\usepackage{algorithmic}

\usepackage[hidelinks]{hyperref}

\newcommand{\R}{\mathbb{R}}

\newcommand{\E}{\mathbb{E}}
\newcommand{\Prob}{\mathbb{P}}

\newcommand{\cE}{\mathcal{E}}
\newcommand{\cH}{\mathcal{H}}
\newcommand{\cK}{\mathcal{K}}
\newcommand{\cL}{\mathcal{L}}
\newcommand{\cN}{\mathcal{N}}
\newcommand{\Om}{\Omega}
\newcommand{\Rd}{\mathbb{R}^d}
\newcommand{\RD}{\mathbb{R}^D}

\newcommand{\diag}{\operatorname{diag}}

\newcommand{\rng}{\operatorname{rng}}
\newcommand{\rnk}{\operatorname{rnk}}

\newcommand{\tr}{\operatorname{Tr}}
\newcommand{\id}{\operatorname{id}}

\newcommand{\norm}[1]{\lVert#1\rVert}
\newcommand{\abs}[1]{|#1|}

\DeclarePairedDelimiterX{\frobinner}[2]{\langle}{\rangle_F}{#1\, ,\, #2}

\newcommand{\pder}[2][]{\frac{\partial#1}{\partial#2}}

\newcommand{\sigmin}{\sigma_{\min}}
\newcommand{\sigmax}{\sigma_{\max}}

\newtheorem{thm}{Theorem}[section]
\newtheorem{lem}[thm]{Lemma}
\newtheorem{prop}[thm]{Proposition}
\newtheorem{cor}[thm]{Corollary}

\newtheorem{assumption}[thm]{Assumption}

\title{Geometric regularization of autoencoders via observed stochastic dynamics}
\author{
Sean Hill\thanks{Department of Mathematics and Statistics, University at Albany, SUNY (\texttt{smhill@albany.edu}).}
\and
Felix X.-F.\ Ye\thanks{Department of Mathematics and Statistics, University at Albany, SUNY (\texttt{xye2@albany.edu}).}
}
\date{}

\begin{document}
\sloppy
\maketitle

\begin{abstract}
Stochastic dynamical systems with slow or metastable
behavior evolve, on long time scales, on an unknown
low-dimensional manifold in high-dimensional ambient
space.
Building a reduced simulator from short-burst ambient
ensembles is a long-standing problem: local-chart
methods like ATLAS suffer from exponential landmark
scaling and per-step reprojection, while autoencoder
alternatives leave tangent-bundle geometry poorly
constrained, and the errors propagate into the
learned drift and diffusion.
We observe that the ambient covariance~$\Lambda$
already encodes coordinate-invariant tangent-space
information, its range spanning the tangent bundle.
Using this, we construct a tangent-bundle penalty and
an inverse-consistency penalty for a three-stage
pipeline (chart learning, latent drift, latent
diffusion) that learns a single nonlinear chart and
the latent SDE.
The penalties induce a function-space metric, the
$\rho$-metric, strictly weaker than the Sobolev $H^1$
norm yet achieving the same chart-quality
generalization rate up to logarithmic factors.
For the drift, we derive an encoder-pullback target
via It\^o's formula on the learned encoder and prove
a bias decomposition showing the standard
decoder-side formula carries systematic error for any
imperfect chart.
Under a $W^{2,\infty}$ chart-convergence assumption,
chart-level error propagates controllably to weak
convergence of the ambient dynamics and to
convergence of radial mean first-passage times.
Experiments on four surfaces embedded in up to $201$
ambient dimensions reduce radial MFPT error by
$50$--$70\%$ under rotation dynamics and achieve the
lowest inter-well MFPT error on most
surface--transition pairs under metastable
M\"uller--Brown Langevin dynamics, while reducing
end-to-end ambient coefficient errors by up to an
order of magnitude relative to an unregularized
autoencoder.
\end{abstract}

\medskip
\noindent\textbf{Keywords:}
autoencoders, manifold learning, stochastic differential equations, geometric regularization, It\^o calculus, generalization bounds

\medskip
\noindent\textbf{MSC codes:} 60H10, 58J65, 68T07


\section{Introduction}\label{sec:intro}

Many stochastic systems of scientific interest evolve, on long time
scales, on an unknown low-dimensional invariant manifold
$M\subset\R^D$~\cite{chirikjian,Hsu,PA_book08,NB_book06}.
The effective dynamics on~$M$ arise, for example, after averaging or
homogenization of a multiscale (fast--slow)
SDE~\cite{leimkuhler2015molecular,PA_book08}, and the practical goal is
to learn a reduced simulator that faithfully reproduces the slow
dynamics, including drift, diffusion, and passage-time statistics, directly from
data.
Because the full system is stiff, long direct simulations are prohibitively expensive.
A data-driven alternative is to run \emph{short-burst ensembles}
from strategically chosen initial conditions: each burst is
cheap, embarrassingly parallel, and reveals local effective dynamics
and local geometry at the initial point~\cite{Miles17,yym}.
The ATLAS framework~\cite{Miles17,yym} organizes this idea into a
complete simulator.
A well-distributed network of landmarks~$\{x_i\}_{i=1}^m$ is placed
on~$M$, and short-burst ensembles at each landmark yield local
estimates of the ambient drift and covariance
$\{b(x_i),\Lambda(x_i)\}_{i=1}^m$ with statistical
guarantees~\cite{Miles17}.
A Gaussian-kernel smoother then interpolates these estimates into
continuous fields, and the simulator steps in ambient coordinates using
linear tangent-plane models at the landmarks.
However, three additional computational bottlenecks arise in the ATLAS simulator:
\begin{enumerate}[leftmargin=*,nosep]
\item \textbf{Curse of dimensionality.}
  The number of landmarks~$m$ and the number of neighbors used in
  kernel smoothing both scale \emph{exponentially} with the manifold
  dimension~$d$.
  The root cause is the reliance on linear local models, namely tangent
  ellipsoids glued together to cover~$M$.
\item \textbf{Re-projection onto~$M$.}
  Because ATLAS simulates in ambient $\R^D$, every time step requires
  re-projection onto~$M$ via weighted oblique affine projections at
  neighboring landmarks; without this, sample paths quickly drift off
  the manifold due to accumulation of linearization error.
\item \textbf{Per-step simulation cost.}
  Each step blends $D\times D$ covariance matrices from neighboring
  landmarks at $O(m D^2)$ cost;
  in our experiments ATLAS is already computationally infeasible at
  $D{=}201$ with only $m{=}200$ landmarks and $d{=}2$.
\end{enumerate}
Autoencoders~\cite{lee2020model,otto2019linearly,peterfreund2020local,
floryan2022data,champion2019data,kevrekidis2024thinner,
schonsheck2020chartautoencodersmanifoldstructured}
address all three ATLAS bottlenecks:
an encoder~$\pi\!:\R^D\!\to\R^d$ and
decoder~$\phi\!:\R^d\!\to\R^D$
replace exponentially many linear patches with a single nonlinear
learned chart, and simulating intrinsically in $d$~dimensions
eliminates re-projection entirely.
Several lines of work build on this idea.
Autoencoder-based methods have been combined with
deterministic latent
dynamics~\cite{champion2019data,lusch2018deep,linot2020deep,
floryan2022data,bertalan2019learning}
and, more recently, with latent
SDEs~\cite{hasan2022identifying,li2020scalable},
but none use observed dynamics to regularize
the chart geometry; the contractive
autoencoder~\cite{contractiveAutoEncoders} and the
pullback metric of~\cite{arvanitidis2018latent}
regularize geometry but without a dynamics-derived
tangent target.
Outside the autoencoder framework,
the equation-free approach~\cite{kevrekidis2003equation}
learns SDE coefficients in externally supplied reduction
coordinates, and diffusion-map
methods~\cite{coifman2008diffusion,evangelou2023double,dietrich2023learning}
estimate generator eigenfunctions from dense trajectory
data in a spectral regime.
Our pipeline instead targets the sparse-landmark setting
($N{=}50$) and requires a decoder for ambient-space
reconstruction; neither the equation-free nor the
spectral setting is directly comparable.
When the manifold is \emph{known},
Riemannian diffusion
models~\cite{debortoli,huang2022riemannian} and
normalizing
flows~\cite{mathieu2020riemannian} define dynamics
intrinsically.
No existing method jointly learns a geometrically
regularized chart and SDE coefficients from sparse
ambient observations.

Training with reconstruction loss alone, however,
controls only the $L^2$ error of the chart map, leaving the
first-order geometry of tangent spaces poorly
constrained~\cite{floryan2022data}, especially in the
sparse-data regime where only tens of observations are
available.
The resulting geometric errors amplify when the chart
is used to simulate dynamics: the It\^o correction
depends on both the decoder Jacobian and
Hessian~\cite{armstrong2018intrinsicjets,Armstrong2024},
so tangent-bundle and curvature inaccuracies propagate
directly into the recovered drift and diffusion, and
the chart cannot extrapolate reliably beyond its
training set.
Existing autoencoder
theory~\cite{schonsheck2020chartautoencodersmanifoldstructured,
liu2023deepnonparametricestimationintrinsic} achieves a squared
generalization error of order $m^{-2/(d+2)}\log^4 m$ from point-cloud
data but does not address this geometric gap.
Sobolev training yields
provably better generalization~\cite{yang2025deeperwiderperspectiveoptimal,
yang2025deepneuralnetworksgeneral}, but requires
chart-derivative labels;
semi-supervised
methods~\cite{schonsheck2024semisupervisedmanifoldlearningcomplexity}
improve chart quality when function values are observed,
but treating drift and covariance as regression
targets in latent coordinates yields chart-dependent
quantities.
Neither is directly applicable.

Our framework addresses this geometric gap by
exploiting a key observation: the ambient
covariance~$\Lambda$ already encodes coordinate-invariant
geometric information.
Its range spans the tangent
space~\cite{coifman2006diffusion}, and the resulting tangent
projector $P=D\phi\,g^{-1}D\phi^\top$ is invariant under
reparameterization.
Penalizing the discrepancy between~$P$ and the data-derived
projector regularizes first-order geometry without
chart-derivative labels and defines a function-space
metric~$\rho$ strictly between~$L^2$ and the
Sobolev~$H^1$ norm.
For the latent drift, It\^o's formula applied to the learned
encoder yields an exact pullback target, avoiding the
systematic bias of the decoder-side formula.
This paper develops the single-chart theory for this
program; extending to a multi-chart atlas is a natural
next step discussed in Section~\ref{sec:discussion}.

\noindent \textbf{Contributions.}
\begin{enumerate}
\item A three-stage pipeline whose geometric losses
  (tangent-bundle, drift, diffusion) are coordinate-invariant:
  it extracts geometric penalties from~$\Lambda$, fits the
  latent drift via an encoder-pullback target, and
  learns the diffusion under metric-weighted losses.
\item A generalization guarantee showing that an idealized
  $\rho$-ERM achieves the same rate as Sobolev $H^1$
  training for chart quality, with controlled propagation
  to SDE coefficients and weak convergence
  (\S\ref{sec:generalization}).

\end{enumerate}

\noindent \textbf{Outline.}
Section~\ref{sec:geometry} derives geometric conditions and the
encoder-pullback drift target from SDE coefficients.
Section~\ref{sec:rho-metric} introduces the $\rho$-metric and
geometric penalties.
Section~\ref{sec:generalization} establishes generalization bounds,
the decoder-side bias analysis, and error propagation to dynamics.
Section~\ref{sec:experiments} reports experiments;
Section~\ref{sec:discussion} discusses extensions.

\section{Geometry from Stochastic Dynamics}\label{sec:geometry}

We work in the \emph{single-chart} setting throughout:
the manifold~$M$ is assumed to be covered by one
coordinate chart.
This restricts the theory to manifolds diffeomorphic to
a bounded open subset of~$\R^d$, but the penalties and
pipeline extend chart-wise to an atlas
(Section~\ref{sec:discussion}).
Let $M$ be a smooth, connected, $d$-dimensional Riemannian
submanifold of~$\R^D$ ($d<D$), parameterized by a smooth embedding
$\phi:\Om\to\R^D$ (the \emph{decoder}) from a fixed open,
bounded domain $\Om\subset\R^d$, with encoder
$\pi=\phi^{-1}:M\to \Om$.
We write $D\phi$ for the Jacobian,
$g=D\phi^\top D\phi$ for the induced metric, and
$P=D\phi\,g^{-1}D\phi^\top$ for the orthogonal projector onto
$T_{\phi(z)}M$.

\subsection{It\^o transformation rules}\label{ssec:background}

Expressed in It\^o form in ambient coordinates, the effective dynamics
on~$M$ take the form
$$dX_t = b(X_t)\,dt + \eta(X_t)\,dW_t, \quad X_t\in M\subset\R^D,$$
where $b$ is the effective drift, $\eta$ is the ambient diffusion
coefficient, and $\Lambda=\eta\eta^\top$ is the rank-$d$
covariance.
Within a chart $(\Om,\phi)$, the local-coordinate SDE takes the form
$$dZ_t = \mu(Z_t)\,dt + \sigma(Z_t)\,dB_t, \quad Z_t\in \Om\subset\R^d.$$
We write $\Sigma=\sigma\sigma^\top$ for the local covariance and
define the It\^o correction vector
$q^i(\Sigma) := \frobinner{\Sigma}{\nabla^2\phi^i}$ for
$i=1,\dots,D$.

\begin{lem}[Local to ambient]\label{lem:local-to-ambient}
  The ambient drift and covariance are
  \[
    b = D\phi\,\mu + \tfrac{1}{2}\,q(\Sigma),
    \qquad
    \Lambda = D\phi\,\Sigma\,D\phi^\top.
  \]
\end{lem}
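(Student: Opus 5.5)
We will apply It\^o's formula to the process $X_t=\phi(Z_t)$, where $Z_t$ solves the local SDE $dZ_t=\mu(Z_t)\,dt+\sigma(Z_t)\,dB_t$ in $\Om$. The plan is to compute $dX_t$ and identify the drift and diffusion coefficients by matching them with the ambient It\^o form. Since $\phi$ is a smooth embedding, each component $\phi^i:\Om\to\R$ is $C^2$. Consequently It\^o's formula applies componentwise, at least up to the exit time of $Z_t$ from $\Om$, or after localizing with stopping times on compact subsets of $\Om$.

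First, for each $i=1,\dots,D$, It\^o's formula gives
\[
  d\phi^i(Z_t) = \nabla\phi^i(Z_t)^\top dZ_t + \tfrac12 \sum_{j,k=1}^d \partial_j\partial_k\phi^i(Z_t)\, d\langle Z^j,Z^k\rangle_t .
\]
Next, substitute $dZ_t=\mu\,dt+\sigma\,dB_t$ and use $d\langle Z^j,Z^k\rangle_t=(\sigma\sigma^\top)_{jk}\,dt=\Sigma_{jk}\,dt$. The second-order term becomes $\tfrac12\frobinner{\Sigma}{\nabla^2\phi^i}\,dt=\tfrac12 q^i(\Sigma)\,dt$. Stacking the $D$ components, with $\nabla\phi^{i\top}$ forming the rows of $D\phi$, yields
\[
  dX_t = \bigl(D\phi\,\mu + \tfrac12 q(\Sigma)\bigr)\,dt + D\phi\,\sigma\,dB_t .
\]

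It remains to match this with $dX_t=b\,dt+\eta\,dW_t$. The drift coefficient is read off directly: $b=D\phi\,\mu+\tfrac12 q(\Sigma)$, evaluated at $z=\pi(x)$. For the diffusion, the natural representation is $\eta=D\phi\,\sigma$ with $W=B$, which gives $\Lambda=\eta\eta^\top=D\phi\,\Sigma\,D\phi^\top$.

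The main subtlety is that ambient coefficients are only determined in law. The ambient noise $W$ may be of different dimension from $B$, so $\eta$ is not unique, but $\Lambda$ is: the quadratic covariation $d\langle X\rangle_t=\Lambda(X_t)\,dt$ is intrinsic to the process, and it equals $D\phi\,\Sigma\,D\phi^\top\,dt$ by the computation above. The drift is likewise pinned down as the finite-variation part of the semimartingale decomposition of $X_t$, which is unique. Equivalently, one can compare generators: for test functions $f$ on $\R^D$, the chain rule applied to $f\circ\phi$ shows that the ambient generator $b\cdot\nabla f+\tfrac12\frobinner{\Lambda}{\nabla^2 f}$ coincides with the local generator acting on $f\circ\phi$. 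This pins down both $b$ and $\Lambda$ on $M$ and justifies the identification. Beyond this uniqueness point, the remaining steps are routine bookkeeping.
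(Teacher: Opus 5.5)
Your proposal is correct and follows essentially the same route as the paper: apply It\^o's formula componentwise to $X^i=\phi^i(Z)$, read off the drift $D\phi\,\mu+\tfrac12 q(\Sigma)$ from the first- and second-order terms, and obtain $\Lambda=D\phi\,\Sigma\,D\phi^\top$ from the diffusion coefficient $D\phi\,\sigma$. Your additional remarks are sound points the paper leaves implicit: localization up to exit from $\Omega$, and why $b$ and $\Lambda$ are uniquely determined even though $\eta$ is not (finite-variation part of the semimartingale decomposition, quadratic covariation, generator comparison).
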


\begin{proof}
  Apply It\^o's formula to each component $X^i = \phi^i(Z)$.
  The first-order term gives $D\phi^i\,\mu$, and the second-order
  quadratic-variation term gives
  $\frac{1}{2}\frobinner{\Sigma}{\nabla^2\phi^i}$.
  Stacking over $i=1,\dots,D$ yields the ambient drift~$b$.
  The diffusion coefficient transforms by the chain rule as
  $D\phi\,\sigma$, so the covariance is
  $\Lambda = D\phi\,\Sigma\,D\phi^\top$.
\end{proof}

To invert this transformation and recover the latent drift~$\mu$ from
the ambient data $(b,\Lambda)$, we need an identity that relates the
It\^o correction in the two charts.

\begin{lem}[Quadratic covariation under chart change]\label{lem:qcov-change}
  With $\phi,\pi,\Lambda,\Sigma$ as above, assume
  $\rng(\Lambda(x))\subset T_xM$ for all $x\in M$.
  Then
  $\frobinner{\Lambda}{\nabla^2\pi}
    = -D\pi\,\frobinner{\Sigma}{\nabla^2\phi}$.
\end{lem}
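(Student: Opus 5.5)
The plan is to extend $\pi$ smoothly to a neighborhood of $M$ in $\R^D$ (so that $\nabla^2\pi$ makes sense as an ambient Hessian) and differentiate the identity $\pi(\phi(z)) = z$ twice in $z$. Differentiating once gives $D\pi(\phi(z))\,D\phi(z) = I_d$. Differentiating this again with respect to $z_j, z_k$ gives, for each component $\pi^a$,
\[
  \sum_{i,l=1}^D \partial_{il}\pi^a\,\partial_j\phi^i\,\partial_k\phi^l
  \;+\; \sum_{i=1}^D \partial_i\pi^a\,\partial_{jk}\phi^i \;=\; 0 ,
\]
which in matrix form reads $D\phi^\top\,\nabla^2\pi^a\,D\phi = -\sum_i \partial_i\pi^a\,\nabla^2\phi^i$.

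Next I contract this identity against $\Sigma$ in the Frobenius inner product. The right-hand side becomes $-\sum_i \partial_i\pi^a\,\frobinner{\Sigma}{\nabla^2\phi^i} = -\bigl(D\pi\,\frobinner{\Sigma}{\nabla^2\phi}\bigr)^a$. On the left, cyclicity of the trace gives
\[
  \frobinner{\Sigma}{D\phi^\top\nabla^2\pi^a D\phi} = \frobinner{D\phi\,\Sigma\,D\phi^\top}{\nabla^2\pi^a}.
\]
The key input is that $\Lambda = D\phi\,\Sigma\,D\phi^\top$. This is where the hypothesis $\rng(\Lambda)\subset T_xM$ enters. Given $\Lambda$ with range in $T_xM = \rng(D\phi)$, I define $\Sigma := D\pi\,\Lambda\,D\pi^\top$, i.e.\ the latent covariance induced by the chart. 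Then $P\Lambda P = \Lambda$, and since $D\phi\,D\pi$ restricted to $T_xM$ is the identity, we get $D\phi\,\Sigma\,D\phi^\top = \Lambda$. This is consistent with Lemma~\ref{lem:local-to-ambient}. Substituting into the contracted identity yields $\frobinner{\Lambda}{\nabla^2\pi^a} = -\bigl(D\pi\,\frobinner{\Sigma}{\nabla^2\phi}\bigr)^a$ for each $a$, and stacking over $a$ gives the claim.

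The main obstacle is the role of the extension of $\pi$ off $M$. The Hessian $\nabla^2\pi$ depends on the extension, but only its tangential block $D\phi^\top\nabla^2\pi\,D\phi$ enters once we contract against $\Lambda = P\Lambda P$. So the identity is extension-independent precisely because of the range hypothesis. I would make this explicit: without the hypothesis, normal components of $\Lambda$ would pair with the arbitrary normal-normal and normal-tangent blocks of $\nabla^2\pi$.

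A secondary point is that $D\pi\,D\phi = I$ holds for any extension, but $D\phi\,D\pi$ equals the identity only on $T_xM$. That is exactly the range of $\Lambda$, so the reconstruction step goes through.
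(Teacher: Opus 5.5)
Your proof is correct and follows the paper's argument essentially verbatim: differentiate $\pi\circ\phi=\id_\Om$ twice, contract the second-order chain-rule identity against $\Sigma$, and use the range hypothesis to get $\Lambda = D\phi\,\Sigma\,D\phi^\top$ with $\Sigma = D\pi\,\Lambda\,D\pi^\top$. One caution on your side remark about extensions: the tangential block $D\phi^\top\nabla^2\pi^a D\phi = -\sum_i \partial_i\pi^a\,\nabla^2\phi^i$ is itself extension-dependent, because $D\pi$ acts on the normal (second-fundamental-form) part of $\nabla^2\phi$ in an extension-dependent way, so in general both sides of the identity change with the extension and only their equality is extension-independent.
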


\begin{proof}
  Differentiate the identity $\pi\circ\phi=\id_\Om$.
  At first order, $D\pi\,D\phi=I_d$.
  Differentiating once more gives
  $\pder[^2\pi^i]{x^k\partial x^l}\,
  \pder[\phi^l]{z^m}\,\pder[\phi^k]{z^j}
  + \pder[\pi^i]{x^k}\,\pder[^2\phi^k]{z^j\partial z^m} = 0$.
  Contracting both sides with a $d\times d$ PSD matrix $S_{mj}$
  and recognizing the Frobenius inner products gives
  $\frobinner{D\phi\,S\,D\phi^\top}{\nabla^2\pi^i}
  = -\bigl(D\pi\,\frobinner{S}{\nabla^2\phi}\bigr)_i$.
  Stacking over $i$ gives the identity for any PSD~$S$.
  Tangentiality ($\rng\Lambda\subset T_xM$) ensures
  $\Lambda=D\phi\,\Sigma\,D\phi^\top$ with
  $\Sigma=D\pi\,\Lambda\,D\pi^\top\in S_+^d$
  (PSD since $\Sigma=\sigma\sigma^\top$);
  $S=\Sigma$ is therefore an admissible choice and
  recovers the stated identity.
\end{proof}

\begin{lem}[Ambient to local]\label{lem:ambient-to-local}
  With $\phi,\pi,b,\Lambda,q$ as above, assume
  $\rng(\Lambda(x))\subset T_xM$ for all $x\in M$.
  Then $Z=\pi(X)$ solves a local SDE with
  covariance and drift 
  $$\Sigma = D\pi\,\Lambda\,D\pi^\top, \qquad \mu = D\pi\Bigl[b - \tfrac{1}{2}\,q(\Sigma)\Bigr].$$
\end{lem}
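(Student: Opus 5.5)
We will apply It\^o's formula to $Z=\pi(X)$, with $\pi$ extended to a smooth map on a tubular neighbourhood of $M$ so that $D\pi$ and $\nabla^2\pi$ make sense in ambient coordinates, and then rewrite the resulting It\^o correction using Lemma~\ref{lem:qcov-change}. Since $X_t\in M$ almost surely and the SDE coefficients only see derivatives of $\pi$ at points of $M$, the choice of extension should not matter. On $M$, $D\pi\,D\phi=I_d$ holds for any extension, as is already used in Lemma~\ref{lem:qcov-change}.

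\textbf{Step 1: componentwise It\^o formula.} For each $i=1,\dots,d$, It\^o's formula applied to $Z^i=\pi^i(X)$ with $dX=b\,dt+\eta\,dW$ gives
\[
dZ^i = \Bigl(D\pi^i\,b + \tfrac12\frobinner{\Lambda}{\nabla^2\pi^i}\Bigr)dt + D\pi^i\,\eta\,dW_t .
\]
Stacking over $i$, the drift is $D\pi\,b+\tfrac12\frobinner{\Lambda}{\nabla^2\pi}$ and the diffusion coefficient is $D\pi\,\eta$. The covariance is therefore $D\pi\,\eta\eta^\top D\pi^\top = D\pi\,\Lambda\,D\pi^\top=\Sigma$. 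Setting $\sigma:=D\pi\,\eta$, the Brownian term $\sigma\,dW_t$ with a $D$-dimensional $W$ can be replaced, by the usual martingale representation and L\'evy characterization argument, with $\tilde\sigma\,dB_t$ for a $d$-dimensional $B$ and any $\tilde\sigma$ satisfying $\tilde\sigma\tilde\sigma^\top=\Sigma$. This shows $Z$ solves a local SDE with covariance $\Sigma$.

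\textbf{Step 2: rewrite the correction term.} Tangentiality gives $\Lambda = D\phi\,\Sigma\,D\phi^\top$. To see this, note that $P=D\phi\,D\pi$ on $T_xM$ for the tangent-valued derivative, and $P\Lambda P=\Lambda$ because $\rng\Lambda\subset T_xM$. This is exactly the hypothesis under which Lemma~\ref{lem:qcov-change} applies, and it yields
\[
\frobinner{\Lambda}{\nabla^2\pi} = -D\pi\,\frobinner{\Sigma}{\nabla^2\phi} = -D\pi\,q(\Sigma).
\]
Substituting into the drift from Step~1 gives $\mu = D\pi\,b - \tfrac12 D\pi\,q(\Sigma) = D\pi\bigl[b-\tfrac12 q(\Sigma)\bigr]$, as claimed.

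\textbf{Main obstacle.} The delicate point is the dependence on the extension of $\pi$ off $M$. Both $D\pi$ and $\nabla^2\pi$ appear in Step~1, and the Hessian term genuinely depends on normal derivatives unless $\Lambda$ is tangential. I would handle this by relying on Lemma~\ref{lem:qcov-change}: its identity was derived purely from differentiating $\pi\circ\phi=\id_\Om$, which holds for any extension, and the contraction with $\Lambda=D\phi\,\Sigma\,D\phi^\top$ only probes $\nabla^2\pi$ along tangent directions. For the same reason, $D\pi\,b$ only sees tangential components of $b$ in the combination that matters. A secondary consistency check is to confirm that the formula inverts Lemma~\ref{lem:local-to-ambient}. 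Substituting $b=D\phi\,\mu+\tfrac12 q(\Sigma)$ and using $D\pi\,D\phi=I_d$ recovers $\mu$ exactly.
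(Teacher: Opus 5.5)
Your proof is correct and follows the paper's argument exactly: It\^o's formula on $Z=\pi(X)$ gives drift $D\pi\,b+\tfrac12\frobinner{\Lambda}{\nabla^2\pi}$ and covariance $D\pi\,\Lambda\,D\pi^\top$, and Lemma~\ref{lem:qcov-change} turns the Hessian term into $-D\pi\,q(\Sigma)$. One caveat on your extension remark, which the proof does not need: for curved $M$ the tangential contraction $\frobinner{\Lambda}{\nabla^2\pi}=-D\pi\,q(\Sigma)$ does depend on the extension, because $q(\Sigma)$ has a normal component. Only the full drift $D\pi[b-\tfrac12 q(\Sigma)]$ is extension-independent, either because $b-\tfrac12 q(\Sigma)=D\phi\,\mu$ is tangential when $X$ stays on $M$, or simply by uniqueness of the semimartingale decomposition of $Z=\pi(X)$.
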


\begin{proof}
  Apply It\^o's formula to $Z=\pi(X)$.
  The drift of $Z$ is
  $\mu = D\pi\,b + \tfrac{1}{2}\,\frobinner{\Lambda}{\nabla^2\pi}$
  and the local covariance is
  $\Sigma = D\pi\,\Lambda\,D\pi^\top$.
  Applying Lemma~\ref{lem:qcov-change} to the Hessian correction
  yields
  $\mu = D\pi\,b + \tfrac{1}{2}\,\frobinner{\Lambda}{\nabla^2\pi}
       = D\pi\,b - \tfrac{1}{2}\,D\pi\,\frobinner{\Sigma}{\nabla^2\phi}
       = D\pi\bigl[b - \tfrac{1}{2}\,\frobinner{\Sigma}{\nabla^2\phi}\bigr]$,
  as claimed.
\end{proof}

\noindent\textbf{Computing the latent drift target.}\label{ssec:enc-pull}
The local-coordinate coefficients of
Lemma~\ref{lem:ambient-to-local} coincide with
the optimal mean-square approximation in the It\^o-jet
framework~\cite{armstrong2018intrinsicjets,Armstrong2024}.
The lemma gives two equivalent representations
of the latent drift~$\mu$: a \emph{decoder-side} form
$\mu = D\pi[b - \tfrac{1}{2}q(\Sigma)]$ that requires the
decoder Hessian via~$q(\Sigma)$, and an \emph{encoder-side} form
from the intermediate step of the proof that uses only the
encoder derivatives:
\begin{equation}\label{eq:enc-pull}
  \mu
  = D\pi\,b + \tfrac{1}{2}\,\frobinner{\Lambda}{\nabla^2\pi}.
\end{equation}
For a learned encoder~$\pi_\theta$, the Jacobian $D\pi_\theta$
and Hessian $\nabla^2\pi_\theta$ are available via automatic
differentiation, so the latent drift target
\begin{equation}\label{eq:enc-pull-target}
  \mu(z_i)
  = D\pi_\theta(x_i)\,b(x_i)
    + \tfrac{1}{2}\,\frobinner{\Lambda(x_i)}{\nabla^2\pi_\theta(x_i)}
\end{equation}
is computable without touching the decoder.
This \emph{encoder-pullback drift} trains the latent drift
network in Stage~2 (Algorithm~\ref{alg:pipeline}).
For an imperfect autoencoder the two formulas diverge: the
decoder-side target carries a deterministic bias for any fixed
learned chart
(Proposition~\ref{prop:dec-bias}), making the choice between them
consequential for trajectory fidelity.

\subsection{Geometric consistency and coordinate invariance}\label{ssec:tangent-from-cov}

The Itô rules above show how to convert between ambient and latent
SDE coefficients given a chart.
We now turn to the question of what the ambient covariance~$\Lambda$
reveals about the \emph{geometry} of~$M$ itself.
The relation $\Lambda = D\phi\,\Sigma\,D\phi^\top$
(Lemma~\ref{lem:local-to-ambient}) implies that the
range of~$\Lambda$ must span the tangent space defined by~$\phi$;
we call this the \emph{geometric consistency} condition~(\textbf{GC}).

Since $\Sigma\in S_{++}^d$ and $D\phi$ has rank~$d$,
$\Lambda$ has rank~$d$ with
$\rng(\Lambda) = \rng(D\phi) = T_xM$.
Writing
$\Lambda = Q\,\diag(\lambda_1,\dots,\lambda_D)\,Q^\top$
with $\lambda_1\ge\cdots\ge\lambda_d > 0
= \lambda_{d+1}=\cdots=\lambda_D$,
the top-$d$ eigenvectors
$U_d := Q_{1:d}\in\R^{D\times d}$ satisfy
$U_d^\top U_d = I_d$ and
$P = U_d U_d^\top$ is the orthogonal projection onto
$T_xM$.
Thus the tangent projector $P(x)$ can be recovered from
the ambient covariance~$\Lambda(x)$ by spectral
truncation, connecting observable dynamics data to the
tangent bundle of~$M$.
For~(GC) to serve as a training penalty, it must
not depend on the parameterization of~$\Om$.

\begin{lem}[Coordinate invariance]\label{lem:coord-invariance}
  Let $\alpha:\tilde{\Om}\to\Om$ be a diffeomorphism and
  $\tilde\phi = \phi\circ\alpha$.
  \textup{(i)}~The orthogonal projector
  $P=D\phi\,g^{-1}D\phi^\top$ is invariant:
  $\tilde P = P$.
  \textup{(ii)}~The It\^o drift~$\mu$ is not a geometric vector,
  but the difference of any two It\^o drifts sharing a common
  local covariance~$\Sigma$ transforms as a contravariant vector:
  $\tilde\mu_1 - \tilde\mu_2 = D\alpha^{-1}(\mu_1-\mu_2)$.
  In particular,
  $\norm{\mu_1-\mu_2}_{\tilde g}^2
  = \norm{\mu_1-\mu_2}_g^2$.
  \textup{(iii)}~For any symmetric $(2{,}0)$-tensor difference
  $\Delta\Sigma = \Sigma_1 - \Sigma_2$,
  the metric-weighted norm
  $\tr\bigl((g\,\Delta\Sigma)^2\bigr)$ is invariant.
\end{lem}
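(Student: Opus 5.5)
The plan is to reduce all three parts to the chain rule $D\tilde\phi(\tilde z) = D\phi(\alpha(\tilde z))\,D\alpha(\tilde z)$, writing $A := D\alpha$ (invertible) and evaluating everything at corresponding points $z=\alpha(\tilde z)$. Two consequences will be used throughout. The metric transforms as $\tilde g = A^\top g A$. By Lemma~\ref{lem:ambient-to-local} applied with $\tilde\pi=\alpha^{-1}\circ\pi$, the local covariance transforms as $\tilde\Sigma = A^{-1}\Sigma A^{-\top}$; equivalently, this follows from matching $\Lambda = D\phi\,\Sigma\,D\phi^\top = D\tilde\phi\,\tilde\Sigma\,D\tilde\phi^\top$ and using the injectivity of $D\phi$.

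For (i), substitute directly:
$\tilde P = D\phi A\,(A^\top g A)^{-1} A^\top D\phi^\top = D\phi A A^{-1} g^{-1} A^{-\top} A^\top D\phi^\top = P$.

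For (iii), compute $\tilde g\,\Delta\tilde\Sigma = A^\top g A\,A^{-1}\Delta\Sigma A^{-\top} = A^\top (g\,\Delta\Sigma) A^{-\top}$. This is a similarity transform of $g\,\Delta\Sigma$. Its square is therefore similar to $(g\,\Delta\Sigma)^2$, and the trace is preserved. This uses only that $\Delta\Sigma$ transforms like $\Sigma$, which holds because the rule $\tilde\Sigma = A^{-1}\Sigma A^{-\top}$ is linear.

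Part (ii) is the main obstacle, because $\mu$ picks up a second-order It\^o correction. Apply It\^o's formula to $\tilde Z = \alpha^{-1}(Z)$:
\[
\tilde\mu = D\alpha^{-1}\,\mu + \tfrac12\,\frobinner{\Sigma}{\nabla^2\alpha^{-1}},
\]
with the Hessian term taken componentwise. This is the same computation as in Lemma~\ref{lem:local-to-ambient}, with $\alpha^{-1}$ in place of $\phi$. The correction term depends only on $\Sigma$ and on $\alpha$. So for two drifts $\mu_1,\mu_2$ sharing the covariance $\Sigma$, the correction cancels in the difference, giving $\tilde\mu_1-\tilde\mu_2 = D\alpha^{-1}(\mu_1-\mu_2) = A^{-1}(\mu_1-\mu_2)$. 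Note also that both transformed drifts share $\tilde\Sigma$, so the statement is self-consistent.

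The same formula exhibits the non-tensoriality of $\mu$: whenever $\alpha$ is nonlinear, the Hessian term is nonzero. The norm identity then follows by writing $v=\mu_1-\mu_2$ and computing
$\norm{A^{-1}v}_{\tilde g}^2 = v^\top A^{-\top} A^\top g A A^{-1} v = v^\top g v$.

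The one point to check carefully is that "sharing a common $\Sigma$" is what makes the Hessian correction identical for the two drifts. This holds because that correction is linear in $\Sigma$ and involves no other data. It is precisely this that justifies metric-weighted drift losses with a fixed covariance.
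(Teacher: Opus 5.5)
Your proposal is correct and follows the paper's proof essentially step for step: the chain rule $\tilde g = D\alpha^\top g\,D\alpha$ cancels in (i), the It\^o correction $\tfrac12\frobinner{\Sigma}{\nabla^2\alpha^{-1}}$ cancels in the difference for (ii), and the similarity $\tilde g\,\widetilde{\Delta\Sigma} = D\alpha^\top(g\,\Delta\Sigma)(D\alpha)^{-\top}$ gives (iii); you also justify the law $\tilde\Sigma = D\alpha^{-1}\Sigma(D\alpha)^{-\top}$, which the paper simply asserts. One minor quibble: your aside that the Hessian term is nonzero whenever $\alpha$ is nonlinear is too strong (for isotropic $\Sigma$ and a conformal $\alpha^{-1}$ in $d=2$, or for affine $\alpha$, it vanishes identically), so non-tensoriality should be phrased as ``the correction is nonzero in general,'' which is all the lemma needs.
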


\begin{proof}
  \textbf{(i)}
  The chain rule gives
  $D[\tilde{\phi}]=D\phi\,D\alpha$, so
  $\tilde{g}=D\alpha^\top g\,D\alpha$ and
  $\tilde{P}
  =D[\tilde{\phi}]\,\tilde{g}^{-1}D[\tilde{\phi}]^\top
  =D\phi\,g^{-1}D\phi^\top=P$.

  \textbf{(ii)}
  Under $\alpha$, the It\^o drift acquires a Hessian correction
  $\tilde\mu = D\alpha^{-1}\mu
  + \tfrac{1}{2}\frobinner{\Sigma}{\nabla^2\alpha^{-1}}$.
  Both $\mu_1$ and $\mu_2$ receive the same correction, so
  $\tilde\mu_1-\tilde\mu_2 = D\alpha^{-1}(\mu_1-\mu_2)$.
  Then
  $\norm{\tilde\mu_1-\tilde\mu_2}_{\tilde g}^2
  = (\mu_1{-}\mu_2)^\top (D\alpha)^{-\top}D\alpha^\top g\,D\alpha\,D\alpha^{-1}(\mu_1{-}\mu_2)
  = \norm{\mu_1-\mu_2}_g^2$.

  \textbf{(iii)}
  The covariance difference transforms as
  $\widetilde{\Delta\Sigma} = D\alpha^{-1}\Delta\Sigma\,(D\alpha)^{-\top}$
  and the metric as $\tilde g = D\alpha^\top g\,D\alpha$.
  Multiplying:
  $\tilde g\,\widetilde{\Delta\Sigma}
  = D\alpha^\top g\,\Delta\Sigma\,(D\alpha)^{-\top}$.
  Squaring and taking the trace, the similarity factors cancel:
  $\tr\bigl((\tilde g\,\widetilde{\Delta\Sigma})^2\bigr)
  = \tr\bigl(D\alpha^\top(g\,\Delta\Sigma)^2 (D\alpha)^{-\top}\bigr)
  = \tr\bigl((g\,\Delta\Sigma)^2\bigr)$.
\end{proof}

Consequently, both~(GC) and the tangent-bundle penalty derived from
it in Section~\ref{sec:rho-metric} are chart-invariant, and the
metric-weighted losses for drift (Stage~2) and diffusion (Stage~3)
in Algorithm~\ref{alg:pipeline} are coordinate-invariant.

\section{The \texorpdfstring{$\rho$}{rho}-Metric and Geometric Penalties}\label{sec:rho-metric}

In this section we introduce the function-space metric
that underlies our geometric losses.
Section~\ref{ssec:rho-metric-def} motivates a well-conditioned
Jacobian assumption and defines the $\rho$-metric.
Section~\ref{ssec:efficient-computation} specifies the empirical
$\rho$-loss, details the data requirements, and shows that the
full pipeline is efficiently computable from dynamics observations.

\subsection{The \texorpdfstring{$\rho$}{rho}-metric on
  \texorpdfstring{$E^d(s)$}{Ed(s)}}\label{ssec:rho-metric-def}

Our metric will penalize the tangent-projector mismatch
$\norm{P_\phi-P_\psi}_F$.
A natural question is whether this quantity is controlled by the
Jacobian error $\norm{D\phi-D\psi}_F$.
In general the answer is \emph{no}: when the smallest singular value of
$D\phi$ degenerates, an arbitrarily small Jacobian perturbation can
rotate the tangent space by a large angle.
This motivates restricting attention to charts whose Jacobians have a
uniform lower singular-value bound.

Let $R^d(s):=\{X\in\R^{D\times d}:\sigmin(X)\ge s\}$
denote the set of full-rank matrices with controlled smallest singular value.
We first record two identities about the orthogonal
projection map $f(X) = X(X^\top X)^{-1}X^\top$ that
will be used throughout.

\begin{lem}[Frobenius distance of orthogonal projections]\label{lem:OP-frob-distance}
  Let $P_1$ and $P_2$ be $D\times D$, rank-$d$
  orthogonal projections.
  Write $P_i = H_i H_i^\top$ with $H_i^\top H_i = I_d$,
  and let $N_i$ be $D\times(D-d)$ with orthonormal
  columns spanning $(\rng P_i)^\perp$.
  Then
  \[
    \tfrac{1}{2}\norm{P_1-P_2}_F^2
    = d - \norm{H_1^\top H_2}_F^2
    = \norm{N_1^\top H_2}_F^2
    = \norm{N_2^\top H_1}_F^2.
  \]
\end{lem}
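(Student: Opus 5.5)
The plan is to expand the squared Frobenius norm using the trace inner product, and then to rewrite everything through the orthonormal bases $H_i$ and $N_i$. Because $P_i$ is symmetric and idempotent, we have $\norm{P_i}_F^2=\tr(P_i)=d$. Therefore
\[
\norm{P_1-P_2}_F^2 = \tr(P_1)+\tr(P_2)-2\tr(P_1P_2) = 2d - 2\tr(P_1P_2).
\]
By cyclicity of the trace,
\[
\tr(P_1P_2)=\tr(H_1H_1^\top H_2H_2^\top)=\tr\bigl((H_1^\top H_2)(H_1^\top H_2)^\top\bigr)=\norm{H_1^\top H_2}_F^2,
\]
which gives the first equality $\tfrac12\norm{P_1-P_2}_F^2=d-\norm{H_1^\top H_2}_F^2$.

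For the second equality, I will use the completeness relation $H_1H_1^\top+N_1N_1^\top=I_D$. This holds because $[H_1\;N_1]$ is a $D\times D$ orthogonal matrix: the columns of $H_1$ span $\rng P_1$ and the columns of $N_1$ span its orthogonal complement. Applying it to $H_2$ gives
\[
d=\tr(H_2^\top H_2)=\tr(H_2^\top H_1H_1^\top H_2)+\tr(H_2^\top N_1N_1^\top H_2)=\norm{H_1^\top H_2}_F^2+\norm{N_1^\top H_2}_F^2 .
\]
Rearranging yields $d-\norm{H_1^\top H_2}_F^2=\norm{N_1^\top H_2}_F^2$.

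The third equality follows by swapping the roles of the indices $1$ and $2$. The same computation gives $d-\norm{H_2^\top H_1}_F^2=\norm{N_2^\top H_1}_F^2$, and $\norm{H_2^\top H_1}_F=\norm{H_1^\top H_2}_F$ because a matrix and its transpose have the same Frobenius norm.

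There is no real obstacle in this argument. The only point that needs care is justifying $H_1H_1^\top+N_1N_1^\top=I_D$ from the orthonormality of the columns and the orthogonal decomposition $\R^D=\rng P_1\oplus(\rng P_1)^\perp$. It is also worth noting that the right-hand sides do not depend on the choice of bases $H_i$, $N_i$, since $H_iH_i^\top=P_i$ and $N_iN_i^\top=I_D-P_i$ are basis-independent. Consequently every equality can be rewritten as a trace of products of the $P_i$ and $I_D-P_i$.
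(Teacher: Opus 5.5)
Your proof is correct and follows essentially the same route as the paper: expand $\norm{P_1-P_2}_F^2$ via traces using $\norm{P_i}_F^2=d$, identify $\tr(P_1P_2)=\norm{H_1^\top H_2}_F^2$ by cyclicity, then split $H_2$ with $H_1H_1^\top+N_1N_1^\top=I_D$ and use $\norm{H_2}_F^2=d$. Your justification of the completeness relation and of the third equality by symmetry supplies details that the paper leaves implicit.
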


\begin{proof}
  Expanding gives
  $\norm{P_1-P_2}_F^2
  = 2(d - \frobinner{P_1}{P_2})$
  since $\norm{P_i}_F^2=d$.
  The cyclic property of the trace yields
  $\frobinner{P_1}{P_2}=\norm{H_1^\top H_2}_F^2$.
  For the normal form, use
  $I=H_iH_i^\top+N_iN_i^\top$ to decompose $H_2$ and
  note $\norm{H_2}_F^2=d$.
\end{proof}

The lower singular-value bound is necessary: without
it, the projection map is not Lipschitz.
Setting $\phi(t)=\epsilon\,t\,\bm{e}_1$ and
$\psi(t)=\epsilon\,t\,\bm{e}_2$ gives
$\norm{D\phi-D\psi}_F=\sqrt{2}\,\epsilon$ but
$\norm{P_\phi-P_\psi}_F=\sqrt{2}$, so the ratio
blows up as $\epsilon\to 0$.
On $R^d(s)$, however, the projection map is Lipschitz:

\begin{lem}[Lipschitz property of the projection map]\label{lem:OP-lipschitz}
  Let $f(X) = X(X^\top X)^{-1}X^\top$ be the orthogonal projection onto
  $\rng X$.
  Then for all $X,Y\in R^d(s)$,
  \[
    \norm{f(X)-f(Y)}_F
    \le \frac{\sqrt{2}}{s}\,\norm{X-Y}_F.
  \]
\end{lem}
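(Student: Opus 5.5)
We will use the normal-form identity of Lemma~\ref{lem:OP-frob-distance}. Write $P_X=f(X)$ and $P_Y=f(Y)$, and let $N_X$ be a $D\times(D-d)$ matrix with orthonormal columns spanning $(\rng X)^\perp$. Let $H_Y = Y(Y^\top Y)^{-1/2}$ be an orthonormal basis of $\rng Y$. Lemma~\ref{lem:OP-frob-distance} gives
\[
  \tfrac12\norm{P_X-P_Y}_F^2=\norm{N_X^\top H_Y}_F^2 .
\]
The whole proof then reduces to bounding $\norm{N_X^\top H_Y}_F$ by $\norm{X-Y}_F/s$.

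The key observation is that $N_X^\top X=0$. Therefore
\[
  N_X^\top Y = N_X^\top (Y-X),
  \qquad
  N_X^\top H_Y = N_X^\top (Y-X)\,(Y^\top Y)^{-1/2}.
\]
We then use the submultiplicative bound $\norm{AB}_F\le\norm{A}_F\norm{B}_2$. Since $N_X$ has orthonormal columns, $\norm{N_X^\top (Y-X)}_F\le\norm{X-Y}_F$. Since $Y\in R^d(s)$, $\norm{(Y^\top Y)^{-1/2}}_2=1/\sigmin(Y)\le 1/s$. Combining,
\[
  \norm{N_X^\top H_Y}_F\le \norm{X-Y}_F/s ,
\]
and hence $\norm{P_X-P_Y}_F\le \sqrt2\,\norm{X-Y}_F/s$, which is the claim. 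Only $Y\in R^d(s)$ is used here, together with $X$ having full rank so that $f(X)$ is a rank-$d$ projection.

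The only step that needs care is checking that the hypotheses of Lemma~\ref{lem:OP-frob-distance} hold. Both $f(X)$ and $f(Y)$ must be rank-$d$ orthogonal projections, which follows from full column rank (guaranteed by $s>0$). We must also confirm that $H_Y=Y(Y^\top Y)^{-1/2}$ satisfies $H_Y^\top H_Y=I_d$ and $H_YH_Y^\top=f(Y)$. Both are one-line computations with the symmetric square root.

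I expect no real obstacle. The one subtle point is choosing the asymmetric form $\norm{N_X^\top H_Y}_F$ rather than $d-\norm{H_X^\top H_Y}_F^2$. With the asymmetric form, the orthogonality $N_X^\top X=0$ turns the projector difference directly into the perturbation $Y-X$. With the symmetric form, we would instead have to handle the polar factors of both matrices, which gives a worse constant.
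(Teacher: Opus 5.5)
Your proposal is correct and matches the paper's proof: the paper also applies the normal form of Lemma~\ref{lem:OP-frob-distance} with $H_2 = Y\,g_Y^{-1/2}$, uses $(I-f(X))X=0$ to replace $Y$ by $Y-X$, and concludes with the contraction bound together with $\norm{g_Y^{-1/2}}_2=\sigmin(Y)^{-1}\le s^{-1}$. The only difference is that you write $N_X^\top$ where the paper writes $I-f(X)=N_XN_X^\top$; these give the same Frobenius norm because $N_X$ is an isometry.
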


\begin{proof}
  Write $g_Y=Y^\top Y$.
  Since $f(X)X=X$, we have $(I-f(X))X=0$, so by
  Lemma~\ref{lem:OP-frob-distance} applied with
  $H_2=Y\,g_Y^{-1/2}$,
  $\tfrac{1}{2}\norm{f(X)-f(Y)}_F^2
  = \norm{(I-f(X))\,Y\,g_Y^{-1/2}}_F^2
  = \norm{(I-f(X))\,(Y-X)\,g_Y^{-1/2}}_F^2$.
  Since $I-f(X)$ is a contraction and
  $\norm{g_Y^{-1/2}}_2=\sigmin(Y)^{-1}\le s^{-1}$,
  the right-hand side is at most
  $s^{-2}\norm{X-Y}_F^2$.
\end{proof}

With this Lipschitz bound in hand, we define the chart class and the
metric.
Let
$E^d(s):=\{\phi\in H^1(\Om;\R^D):D\phi(z)\in R^d(s)
\text{ a.e.\ on } \Om\}$
denote the set of well-conditioned charts, i.e.\ those whose Jacobians have
$\sigmin\ge s$ at almost every point.
The requirement is mild: any $C^1$
immersion~\cite{lee2012smooth}
$\phi_\star:\bar\Om\to\RD$ satisfies
$s_0:=\min_{\bar\Om}\sigmin(D\phi_\star)>0$ by the extreme value
theorem, so $\phi_\star\in E^d(s_0)$ automatically.
Every Monge patch
$\phi(z)=(z^\top,f(z))^\top$ satisfies $\sigmin(D\phi)\ge 1$
because $D\phi=[I_d;\,Df]$.
A smaller~$s_0$ enlarges the Lipschitz constant in
Lemma~\ref{lem:OP-lipschitz} and hence the generalization bounds.
In practice the assumption is not explicitly enforced during
training; we verify it post-hoc in
Section~\ref{sec:experiments}.
To use $E^d(s)$ as a hypothesis class for ERM, we first verify
that it is closed under $H^1$ limits.

\begin{lem}\label{lem:Ed-closed}
  $E^d(s)$ is closed in~$H^1$.
\end{lem}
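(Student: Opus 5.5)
Take a sequence $\phi_n\in E^d(s)$ with $\phi_n\to\phi$ in $H^1(\Om;\RD)$; we must show $D\phi(z)\in R^d(s)$ for a.e.\ $z\in\Om$. The plan splits into a pointwise statement about matrices and a measure-theoretic passage from $H^1$ convergence to pointwise convergence.

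\textbf{Step 1 (the matrix set is closed).} The map $X\mapsto\sigmin(X)$ on $\R^{D\times d}$ is continuous, indeed $1$-Lipschitz in operator norm: Weyl's inequality gives $|\sigmin(X)-\sigmin(Y)|\le\norm{X-Y}_2\le\norm{X-Y}_F$. Hence $R^d(s)=\sigmin^{-1}([s,\infty))$ is closed in $\R^{D\times d}$. Since $s>0$, matrices in $R^d(s)$ automatically have full column rank, so no separate rank condition is needed.

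\textbf{Step 2 (a.e.\ convergence of Jacobians).} $H^1$ convergence gives $D\phi_n\to D\phi$ in $L^2(\Om;\R^{D\times d})$. Pass to a subsequence $D\phi_{n_k}\to D\phi$ pointwise a.e.\ on $\Om$; this is the standard Riesz--Fischer subsequence argument.

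\textbf{Step 3 (combine).} Each $\phi_{n_k}$ has an exceptional null set $N_k$ off which $D\phi_{n_k}(z)\in R^d(s)$. Let $N_0$ be the null set where pointwise convergence fails. Off the countable union $N_0\cup\bigcup_k N_k$, which is still null, we have $D\phi(z)=\lim_k D\phi_{n_k}(z)\in R^d(s)$ by Step~1. Hence $\phi\in E^d(s)$.

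The only mildly delicate points are the following. First, we must use a subsequence, since $L^2$ convergence does not give pointwise convergence for the full sequence. Second, we must collect countably many null sets, so that the a.e.\ conclusion survives. Third, we must justify the continuity of $\sigmin$, which I would cite as Weyl's perturbation inequality for singular values. No use of Lemma~\ref{lem:OP-lipschitz} is required.
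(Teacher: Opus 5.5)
Your proof is correct and follows essentially the same route as the paper: $L^2$ convergence of the Jacobians, an a.e.-convergent subsequence, and Weyl's inequality making $\sigmin$ $1$-Lipschitz, so that the bound $\sigmin\ge s$ passes to the limit a.e. Your explicit union of the countably many exceptional null sets is a detail the paper leaves implicit.
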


\begin{proof}
  Let $\phi_k\in E^d(s)$ with $\phi_k\to\phi$ in $H^1$.
  By definition of the $H^1$ norm,
  $\norm{D\phi_k-D\phi}_{L^2}\le\norm{\phi_k-\phi}_{H^1}\to 0$,
  so $D\phi_k\to D\phi$ in $L^2(\Om;\R^{D\times d})$.
  By the $L^p$ subsequence theorem, there exists a subsequence
  $D\phi_{k_j}\to D\phi$ pointwise almost everywhere.
  Weyl's singular-value perturbation inequality gives
  $|\sigmin(A)-\sigmin(B)|\le\norm{A-B}_F$ for any matrices
  $A,B$ of the same size, so $\sigmin$ is $1$-Lipschitz on
  $\R^{D\times d}$.
  Therefore
  $\sigmin(D\phi_{k_j}(z))\to\sigmin(D\phi(z))$
  for almost every $z\in\Om$.
  Since $\sigmin(D\phi_{k_j}(z))\ge s$ for all~$j$, the limit
  satisfies $\sigmin(D\phi(z))\ge s$ almost everywhere, hence
  $\phi\in E^d(s)$.
\end{proof}

With the chart class in place, we combine the pointwise $L^2$
reconstruction error and the tangent-projector discrepancy from
Lemma~\ref{lem:OP-lipschitz} into a single metric.

\begin{lem}[The $\rho$-metric]\label{lem:rho-metric}
  For $\phi,\psi\in E^d(s)$, define
  \[
    \rho(\phi,\psi)^2
    := \norm{\phi-\psi}_{L^2}^2
     + \tfrac{1}{2}\,\norm{P_\phi - P_\psi}_{F,L^2}^2,
  \]
  where $P_\phi = f(D\phi)$ is the orthogonal projection onto
  $\rng(D\phi)$ as in Lemma~\ref{lem:OP-lipschitz}.
  Then the following hold:
  \textup{\textbf{(i)}}~$\rho$ is a metric on $E^d(s)$.\quad
  \textup{\textbf{(ii)}}~$\norm{\phi-\psi}_{L^2} \le \rho(\phi,\psi) \le
    C_s\,\norm{\phi-\psi}_{H^1}$
    with $C_s = \max(1,\,s^{-1})$.\quad
  \textup{\textbf{(iii)}}~$\rho$ is not equivalent to the $H^1$ norm.
\end{lem}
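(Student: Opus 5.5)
The three parts can be handled separately, and each uses results already in hand.

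For (i), I would view $\rho$ as the norm of the map $\phi\mapsto(\phi,\tfrac{1}{\sqrt2}P_\phi)$, which sends $E^d(s)$ into the Hilbert space $L^2(\Om;\R^D)\times L^2(\Om;\R^{D\times D})$, and pull back the product $L^2$ distance. Symmetry and the triangle inequality then follow from the Minkowski inequality in that product space. Finiteness holds because $\norm{P_\phi}_F^2=d$ pointwise and $\Om$ is bounded, so $P_\phi\in L^2$. For definiteness, $\rho(\phi,\psi)=0$ forces $\norm{\phi-\psi}_{L^2}=0$, hence $\phi=\psi$ as elements of $H^1$, that is, a.e. Strictly speaking this makes $\rho$ a pseudometric lifted to a metric on equivalence classes, which is the usual convention.

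For (ii), the lower bound is immediate, since the projector term is nonnegative. For the upper bound, I would apply Lemma~\ref{lem:OP-lipschitz} pointwise. Since $D\phi(z),D\psi(z)\in R^d(s)$ for a.e.\ $z$, it gives $\tfrac12\norm{P_\phi-P_\psi}_F^2\le s^{-2}\norm{D\phi-D\psi}_F^2$ a.e. Integrating over $\Om$ yields
\[
\rho^2\le\norm{\phi-\psi}_{L^2}^2+s^{-2}\norm{D\phi-D\psi}_{L^2}^2\le C_s^2\norm{\phi-\psi}_{H^1}^2,
\]
with $C_s=\max(1,s^{-1})$, using $\norm{\cdot}_{H^1}^2=\norm{\cdot}_{L^2}^2+\norm{D\cdot}_{L^2}^2$.

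For (iii), the main step is a construction. I would build a sequence in $E^d(s)$ whose $\rho$-distance to a fixed chart tends to zero while the $H^1$ distance stays bounded below. The idea is to use the invariance of $P$ under reparameterization from Lemma~\ref{lem:coord-invariance}(i): changing the Jacobian inside the tangent plane leaves $P$ unchanged. Concretely, I would take $\psi(z)=(z,0)$ and $\phi_k(z)=(z+k^{-1}h(kz),0)$, with $h$ a smooth periodic map whose derivative $Dh$ has operator norm at most $\tfrac12$ everywhere but is not zero. Both charts then have range in $\R^d\times\{0\}$, so $P_{\phi_k}=P_\psi$ and $\sigmin(D\phi_k)\ge\tfrac12$; take $s=\tfrac12$. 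In one dimension one can use $h(t)=\tfrac12\sin t$. This gives $\rho(\phi_k,\psi)=\norm{k^{-1}h(k\cdot)}_{L^2}=O(1/k)\to0$. On the other hand, $\norm{D\phi_k-D\psi}_{L^2}^2=\int_\Om|Dh(kz)|^2dz$ converges to $|\Om|$ times the mean of $|Dh|^2$, which is positive, by Riemann--Lebesgue averaging. Hence no constant $c>0$ satisfies $\norm{\phi-\psi}_{H^1}\le c\,\rho(\phi,\psi)$.

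The one delicate point is ensuring that the oscillating charts stay in $E^d(s)$ for the fixed $s$. Bounding $\norm{Dh}_2\le\tfrac12$ handles this, since then $\sigmin(I+Dh)\ge\tfrac12$. If $s$ is prescribed and exceeds $\tfrac12$, I would instead scale both charts by a constant $a\ge 2s$, which preserves all the relevant properties.
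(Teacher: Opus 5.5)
Your proposal is correct and follows essentially the same route as the paper. Part (i) uses the triangle inequality in the product $L^2$ space; the paper splits into components and applies the Euclidean triangle inequality in $\R^2$, which is the same argument. Part (ii) applies Lemma~\ref{lem:OP-lipschitz} pointwise and integrates. Part (iii) uses an oscillating in-plane perturbation of the flat chart, which leaves $P$ unchanged while Riemann--Lebesgue keeps the Jacobian error bounded below. Your scaling device in (iii) is a small improvement: the paper's choice $0<a<1-s$ only works when $s<1$, whereas scaling both charts by $a\ge 2s$ covers every $s>0$.
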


\begin{proof}
  \textbf{(i)}
  Non-negativity and symmetry are immediate.
  If $\rho(\phi,\psi)=0$ then both terms vanish; in particular
  $\norm{\phi-\psi}_{L^2}=0$, so $\phi=\psi$ a.e.\ on $\Om$ and hence
  $D\phi = D\psi$ a.e., giving $\phi=\psi$ in $H^1\cap E^d(s)$.
  For the triangle inequality, let $\phi,\psi,\chi\in E^d(s)$ and
  write $a_1=\norm{\phi-\psi}_{L^2}$,
  $b_1=\tfrac{1}{\sqrt{2}}\norm{P_\phi-P_\psi}_{F,L^2}$,
  and define $(a_2,b_2)$, $(a_3,b_3)$ analogously for
  $(\psi,\chi)$ and $(\phi,\chi)$.
  The $L^2$ triangle inequality gives $a_3\le a_1+a_2$ and
  $b_3\le b_1+b_2$, so $$\rho(\phi,\chi)
    = \sqrt{a_3^2+b_3^2}
    \le \sqrt{(a_1{+}a_2)^2+(b_1{+}b_2)^2}
    \le \rho(\phi,\psi)+\rho(\psi,\chi),$$
  where the last inequality is the triangle inequality for the
  Euclidean norm on~$\R^2$.

\smallskip
  \noindent\textbf{(ii)}
  The lower bound $\norm{\cdot}_{L^2}\le\rho$ is immediate from the
  definition.
  For the upper bound, Lemma~\ref{lem:OP-lipschitz} gives
  $\norm{P_\phi(x)-P_\psi(x)}_F \le \frac{\sqrt{2}}{s}\,
  \norm{D\phi(x)-D\psi(x)}_F$
  pointwise a.e.
  Squaring, integrating over $\Om$, and combining with the $L^2$ term
  yields
   $$ \rho(\phi,\psi)^2
    \le \norm{\phi-\psi}_{L^2}^2
     + \frac{1}{s^2}\,\norm{D\phi-D\psi}_{L^2}^2
    \le C_s^2\,\norm{\phi-\psi}_{H^1}^2.$$

\smallskip
	  \noindent\textbf{(iii)}
	Let $z=(z_1,\dots,z_d)\in \Om$ and define
	  $\phi(z)=\sum_{j=1}^d z_j\,\bm e_j $ and
	 $\phi_k(z)=\phi(z)-\frac{a}{k}\cos(k z_1)\,\bm e_1$
	  for $0<a<1-s$.
	  Then
	  $\partial_1\phi_k(z)=(1+a\sin(k z_1))\,\bm e_1$ and
	  $\partial_j\phi_k(z)=\bm e_j$ for $j=2,\dots,d$, so
	  $\sigmin(D\phi_k)\ge 1-a>s$ and $\norm{D\phi_k}_F$ is uniformly
	  bounded.  Hence $\phi_k\in E^d(s)$.
	  Both $P_{\phi_k}$ and $P_\phi$ equal the constant orthogonal
	  projection onto $\mathrm{span}\{\bm e_1,\dots,\bm e_d\}$, so
	  $\norm{P_{\phi_k}-P_\phi}_{F,L^2}=0$.
	  Moreover $\norm{\phi_k-\phi}_{L^2}\to 0$ as $k\to\infty$
	  (the oscillation has amplitude $a/k$).
	  Hence $\rho(\phi_k,\phi)\to 0$.
	  However,
	  $D\phi_k - D\phi$ has first column $a\sin(k z_1)\,\bm e_1$ and all
	  other columns zero, so
	   $$ \norm{D\phi_k-D\phi}_{L^2(\Om)}^2
	    = a^2\int_\Om \sin^2(k z_1)\,dz
	    = \frac{a^2}{2}\abs{\Om}
	      -\frac{a^2}{2}\int_\Om \cos(2k z_1)\,dz
	    \rightarrow \frac{a^2}{2}\abs{\Om}>0,$$
	  by the Riemann--Lebesgue lemma.
	  Thus $\phi_k\not\to\phi$ in $H^1$, so $\rho$ is strictly weaker
	  than~$H^1$.
	\end{proof}

In summary, $\rho$ sits strictly between $L^2$ and $H^1$ on
$E^d(s)$: it controls tangent-space alignment without requiring
full derivative matching.
This is precisely the gap exploited by the tangent-bundle penalty.

\subsection{Efficient computation from dynamics
  data}\label{ssec:efficient-computation}

The pointwise $\rho$-loss at a latent coordinate~$z$ is
$$\ell_\rho(z;\theta)
:= \norm{\phi_\theta(z) - \phi_\star(z)}^2
 + \tfrac{1}{2}\,\norm{P_{\phi_\theta}(z) - P_{\phi_\star}(z)}_F^2,$$
where $\phi_\star$ is the target chart and $\phi_\theta$ is the
decoder parametrized by~$\theta$.
For a probability measure $\nu$ on $\Om$, the
\emph{population risk} is
$R_{\nu,\rho}(\theta) := \int_\Om \ell_\rho(z;\theta)\,d\nu(z)$,
and the \emph{empirical risk minimizer} is
$\hat\theta_{S,\rho}
\in \arg\min_{\theta\in\Theta}
\tfrac{1}{m}\sum_{i=1}^{m}\ell_\rho(z_i;\theta)$
with $z_i\stackrel{\text{i.i.d.}}{\sim}\nu$.
Since $\ell_\rho$ is coordinate-invariant
(Lemma~\ref{lem:coord-invariance}), $R_{\nu,\rho}$ is
reparametrization-invariant whenever $\nu$ is the pullback of a
fixed measure on~$M$, in particular for the normalized Riemannian
volume $d\nu_\star \propto \sqrt{\det g_\star}\,dz$.
For the generalization analysis in
Section~\ref{sec:generalization}, we take $\nu$ to be
Lebesgue on $[0,1]^d$; the two are equivalent up to bounded
constants since $g_\star$ has bounded singular values on
compact~$\Om$.

The target chart~$\phi_\star$ is unknown in practice.
The training data $\{x_i, b(x_i), \Lambda(x_i)\}_{i=1}^m$ from
the ATLAS exploration phase provide two ingredients for
realizing the $\rho$-loss; the drift $b(x_i)$ enters in
Stage~2 (below).
Writing $z_i = \pi_\theta(x_i)$ and
$\hat P_i = U_d U_d^\top$ for the rank-$d$ spectral projector
of~$\Lambda(x_i)$ (Section~\ref{ssec:tangent-from-cov}):
\begin{itemize}
\item \emph{Reconstruction}\;
$\cL_R^{(i)} = \norm{\phi_\theta(z_i) - x_i}^2$:
a surrogate for the $\rho$-loss term
$\norm{\phi_\theta(z) - \phi_\star(z)}^2$, since
$x_i = \phi_\star(z_i^\star)$ for the true latent
$z_i^\star = \pi_\star(x_i)$.
The two coincide when $\pi_\theta = \pi_\star$.
\item \emph{Tangent-bundle}\;
$\cL_T^{(i)} = \tfrac{1}{2}\norm{P_{\phi_\theta}(z_i) - \hat P_i}_F^2$:
replaces the unknown $P_{\phi_\star}$ with the data-derived~$\hat P_i$.
\end{itemize}
We augment the $\rho$-loss with an
\emph{inverse-consistency penalty}~\cite{greer2021icon},
$\cL_F^{(i)} = \norm{D\pi_\theta(x_i)\,D\phi_\theta(z_i) - I_d}_F^2$,
which enforces the first-order inverse identity
$D\pi\,D\phi = I_d$, ensuring that the encoder and decoder
are consistent as approximate chart maps.
The condition $D\pi\,D\phi = I_d$ is coordinate-invariant
(it holds in all parameterizations or none),
though the Frobenius penalty itself is not;
$\cL_F$ is a practical regularizer, not part of the
$\rho$-metric or the generalization theory.
The full training objective for Stage~1 (chart learning) is
\begin{equation}\label{eq:training-objective}
  \cL(\theta)
  = \frac{1}{m}\sum_{i=1}^{m}
    \bigl[\,
      \cL_R^{(i)}
      + \lambda_T\,\cL_T^{(i)}
      + \lambda_F\,\cL_F^{(i)}
    \,\bigr].
\end{equation}
Stages~2 and~3 freeze the chart and fit latent SDE coefficients
using coordinate-invariant losses
(Lemma~\ref{lem:coord-invariance}\,(ii)--(iii)).
Stage~2 regresses a drift network $\hat\mu_\omega(z)$
onto the encoder-pullback target~\eqref{eq:enc-pull-target}
under $\norm{\cdot}_g^2$;
Stage~3 regresses a diffusion network
$\hat\sigma_\psi(z)$ onto the pulled-back covariance
$\Sigma(z_i) := D\pi_\theta(x_i)\,\Lambda(x_i)\,D\pi_\theta(x_i)^\top$
under $\tr\bigl((g\,\cdot\,)^2\bigr)$.
The geometric losses are coordinate-invariant:
$\cL_R$ is computed in ambient~$\RD$;
$\cL_T$ uses the ambient projector
(Lemma~\ref{lem:coord-invariance}\,(i));
Stage~2 and~3 use metric-weighted norms
(Lemma~\ref{lem:coord-invariance}\,(ii)--(iii)).
The law of the learned ambient process is therefore
independent of the latent parameterization.
The auxiliary penalty $\cL_F$ enforces a
coordinate-invariant condition ($D\pi\,D\phi=I_d$) but its
Frobenius value is chart-dependent.
The full pipeline is summarized in Algorithm~\ref{alg:pipeline}.

\begin{algorithm}[t]
\caption{Three-stage pipeline: geometric chart learning and latent SDE estimation}
\label{alg:pipeline}
\begin{algorithmic}[1]
\REQUIRE Ambient observations $\{x_i\}_{i=1}^m\subset\RD$ with
  drift $b(x_i)$ and covariance $\Lambda(x_i)$
  \COMMENT{e.g.\ from short-burst ensembles~\cite{Miles17,yym}}
\REQUIRE Penalty weights $\lambda_T,\lambda_F$;
  epochs $E$
\ENSURE Chart $(\pi_\theta,\phi_\theta)$; latent SDE coefficients
  $(\hat\mu_\omega,\hat\sigma_\psi)$
\STATE \textbf{Stage~1: Chart learning} \COMMENT{Geometric autoencoder}
\STATE Train $(\pi_\theta,\phi_\theta)$ in two phases
  (warmup $E_1$, fine-tune $E_2$ epochs),
  both minimizing
  $\;\cL=\cL_R+\lambda_T\cL_T+\lambda_F\cL_F$
  via~\eqref{eq:training-objective}
\STATE \textbf{Stage~2: Encoder-pullback drift fitting} \COMMENT{Frozen chart}
\STATE Encode $z_i\gets\pi_\theta(x_i)$; compute encoder-pullback target
  $\mu(z_i) = D\pi_\theta(x_i)\,b(x_i) + \tfrac{1}{2}\frobinner{\Lambda(x_i)}{\nabla^2\pi_\theta(x_i)}$
  via~\eqref{eq:enc-pull-target}
\STATE Train $\hat\mu_\omega$ to minimize
  $\sum_i\norm{\hat\mu_\omega(z_i)-\mu(z_i)}_{g(z_i)}^2$
\STATE \textbf{Stage~3: Latent diffusion fitting} \COMMENT{Frozen chart}
\STATE Compute target $\Sigma(z_i) = D\pi_\theta(x_i)\,\Lambda(x_i)\,D\pi_\theta(x_i)^\top$
\STATE Train $\hat\sigma_\psi$ to minimize
  $\sum_i\tr\bigl((g(z_i)\,[\hat\sigma_\psi(z_i)\hat\sigma_\psi(z_i)^\top - \Sigma(z_i)])^2\bigr)$
\end{algorithmic}
\end{algorithm}

Na\"ively, the tangent-bundle penalty forms $D\times D$ projection
matrices ($O(D^2 d)$ time, $O(D^2)$ memory).
The following proposition shows that all stages can be evaluated
with $O(Dd^2)$ flops and $O(Dd)$ memory.

\begin{prop}[Computational cost]\label{prop:efficient-tangent}%
\label{prop:hessian-free}
  Every loss and target in the three-stage pipeline
 can be evaluated in $O(Dd^2)$
  flops per sample, with $O(Dd)$ for the drift loss, without
  forming any $D\times D$ matrix.
  Specifically, with $g=D\phi^\top D\phi\in\R^{d\times d}$:
  \textbf{(R)} reconstruction is $O(D)$;
  \textbf{(T)} tangent-bundle is $O(Dd^2)$;
  \textbf{(F)} inverse-consistency is $O(Dd^2)$;
  \textbf{(Stage~2)} the encoder-pullback target is $O(Dd^2)$
  via $O(d^2)$ Hessian--vector products at $O(D)$ each;
  the drift loss itself is $O(Dd)$ given
  the precomputed Jacobian $D\phi\in\R^{D\times d}$;
  \textbf{(Stage~3)} pulled-back covariance target and
  diffusion loss are each $O(Dd^2)$.
\end{prop}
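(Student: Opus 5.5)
The plan is to treat each loss term separately and show that every quantity can be routed through $d\times d$ Gram matrices or $D\times d$ thin factors, never forming a $D\times D$ matrix. Two computational facts do all the work. First, a product of a $D\times d$ matrix with a $d\times d$ matrix costs $O(Dd^2)$. Second, a Gram product $A^\top B$ with $A,B\in\R^{D\times d}$ costs $O(Dd^2)$ and returns a $d\times d$ matrix. Inverses and square roots of $d\times d$ matrices cost $O(d^3)\le O(Dd^2)$ since $d<D$. The Jacobians $D\phi_\theta(z_i)\in\R^{D\times d}$ and $D\pi_\theta(x_i)\in\R^{d\times D}$ are obtained by $d$ forward- or reverse-mode passes. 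We count these in the usual convention as $O(Dd)$ per network pass, and treat network evaluation cost as a fixed factor.

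\textbf{(R)} needs only a length-$D$ difference and its squared norm, so it is $O(D)$.

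\textbf{(T)} is the one case with a genuine idea. Write $P_\theta=HH^\top$ with $H=D\phi\,g^{-1/2}$, where $g=D\phi^\top D\phi$ costs $O(Dd^2)$ and $g^{-1/2}$ costs $O(d^3)$. Also $\hat P_i=U_dU_d^\top$ with $U_d^\top U_d=I_d$. By Lemma~\ref{lem:OP-frob-distance},
\[
\tfrac12\norm{P_\theta-\hat P_i}_F^2=d-\norm{H^\top U_d}_F^2
= d-\tr\bigl(g^{-1}\,(D\phi^\top U_d)(D\phi^\top U_d)^\top\bigr).
\]
This requires only the $d\times d$ product $D\phi^\top U_d$, which costs $O(Dd^2)$. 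The main obstacle is the spectral factor $U_d$. If $\Lambda(x_i)$ is stored densely, its eigendecomposition is a one-time preprocessing cost independent of $\theta$. Alternatively, short-burst estimators supply $\Lambda$ in low-rank form $\eta\eta^\top$, from which $U_d$ is a thin QR/SVD in $O(Dd^2)$. I would state the per-sample cost under the convention that $U_d$, or equivalently a $D\times d$ factor of $\Lambda$, is precomputed data. The per-iteration loss then never touches a $D\times D$ array.

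\textbf{(F)} The product $D\pi\,D\phi$ is a $d\times D$ times $D\times d$ product, costing $O(Dd^2)$; subtracting $I_d$ and taking the Frobenius norm is $O(d^2)$.

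\textbf{Stage 3.} Write $\Lambda=\eta\eta^\top$ with a thin factor $\eta\in\R^{D\times d}$ (e.g.\ $U_d\diag(\lambda_{1:d})^{1/2}$). Then $\Sigma=(D\pi\,\eta)(D\pi\,\eta)^\top$, which costs $O(Dd^2)$. The loss $\tr((g\,[\hat\sigma\hat\sigma^\top-\Sigma])^2)$ involves only $d\times d$ products once $g$ is formed in $O(Dd^2)$.

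\textbf{Stage 2.} The term $D\pi\,b$ is $O(Dd)$. For the Hessian term, expand $\Lambda=\sum_{k=1}^d \eta_k\eta_k^\top$, so that
\[
\frobinner{\Lambda}{\nabla^2\pi^j}=\sum_k \eta_k^\top\nabla^2\pi^j\,\eta_k .
\]
One forward-over-reverse Hessian--vector product of the scalar $\pi^j$ along $\eta_k$ costs $O(D)$ in network-pass units, and dotting with $\eta_k$ costs $O(D)$. Over $d$ outputs and $d$ directions this gives $O(d^2)$ HVPs and $O(Dd^2)$ total. The drift loss is $\norm{\hat\mu-\mu}_g^2=\norm{D\phi(\hat\mu-\mu)}^2$, a single $D\times d$ matrix--vector product costing $O(Dd)$ given $D\phi$. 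This avoids even forming $g$ and yields the stated $O(Dd)$ bound.

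Since no step forms a $D\times D$ object and every stored array is $O(Dd)$, the memory claim follows as well.
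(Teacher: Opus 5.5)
Your proposal is correct and follows the paper's proof term by term: the same trace identity $d-\tr(g^{-1}CC^\top)$ with $C=D\phi^\top U_d$ for (T), the rank-$d$ factorization of $\Lambda$ for the Stage~3 target, $d^2$ Hessian--vector products for the Stage~2 target, and the rewriting $\norm{r}_g^2=\norm{D\phi\,r}^2$ for the drift loss. The differences are cosmetic. You obtain the (T) identity from Lemma~\ref{lem:OP-frob-distance} rather than by expanding the trace directly. You also state explicitly a convention the paper leaves implicit, namely that $U_d$, or a thin factor of $\Lambda$, is precomputed data.
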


\begin{proof}
  Write $U_d\in\R^{D\times d}$ for the top-$d$ eigenvectors
  of~$\Lambda$, $P=U_d\,U_d^\top$, $\hat P=D\phi\,g^{-1}D\phi^\top$,
  and $C=D\phi^\top U_d\in\R^{d\times d}$.

\smallskip
  \textbf{(R)}
  $\cL_R = \norm{\phi_\theta(z)-x}^2$ is a single $D$-vector
  difference and dot product: $O(D)$.

\smallskip
  \textbf{(T)}
  Since $\hat P$ and $P$ are both orthogonal projections of
  rank~$d$, idempotency gives $\hat P^2=\hat P$, $P^2=P$, and
  $\tr(\hat P)=\tr(P)=d$.
  Expanding:
  $\norm{\hat P-P}_F^2
  = \tr(\hat P^2) - 2\,\tr(\hat P\,P) + \tr(P^2)
  = 2d - 2\,\tr(\hat P\,P)$.
  Substituting $\hat P=D\phi\,g^{-1}D\phi^\top$ and
  $P=U_d\,U_d^\top$ and applying the cyclic property of the trace:
  $\tr(\hat P\,P)
  = \tr\bigl(D\phi\,g^{-1}D\phi^\top U_d\,U_d^\top\bigr)
  = \tr\bigl(g^{-1}\,(D\phi^\top U_d)(D\phi^\top U_d)^\top\bigr)
  = \tr(g^{-1}CC^\top)$,
  so
  \begin{equation}\label{eq:tangent-trace}
    \tfrac{1}{2}\norm{\hat P-P}_F^2
    = d - \tr\bigl(g^{-1}C\,C^\top\bigr).
  \end{equation}
  Forming $C=D\phi^\top U_d$ costs $O(Dd^2)$;
  $g^{-1}CC^\top$ is $d\times d$,
  so the trace is $O(d^3)\subset O(Dd^2)$.
  When the inverse-consistency penalty drives
  $D\pi\,D\phi\approx I_d$, we have
  $D\pi\approx g^{-1}D\phi^\top$ and the loss
  simplifies to $d - \tr(D\pi\,U_d\,U_d^\top D\phi)$,
  which is the form used in Algorithm~\ref{alg:pipeline}.

\smallskip
  \textbf{(F)}
  $\cL_F = \norm{D\pi\,D\phi - I_d}_F^2$:
  $D\pi\in\R^{d\times D}$ times $D\phi\in\R^{D\times d}$
  is a $d\times d$ product costing $O(Dd^2)$.

\smallskip
  \textbf{(Stage~2)}
  \emph{Target.}
  The encoder-pullback drift~\eqref{eq:enc-pull-target} contains
  the Itô correction $\frobinner{\Lambda}{\nabla^2\pi^j}$.
  Since $\Lambda$ has rank~$d$ on the manifold,
  its spectral decomposition
  $\Lambda = \sum_{m=1}^d \lambda_m u_m u_m^\top$ gives
  $\frobinner{\Lambda}{\nabla^2\pi^j}
  = \sum_{m=1}^d \lambda_m\,u_m^\top(\nabla^2\pi^j)\,u_m$,
  where each $u_m^\top(\nabla^2\pi^j)\,u_m$ is a single
  forward-over-reverse Hessian--vector product (HVP), avoiding
  the full $d\times D\times D$ encoder Hessian.
  Over $j=1,\dots,d$ encoder outputs this is $d^2$ HVPs at $O(D)$
  each, totalling $O(Dd^2)$.
  \emph{Loss.}
  $\norm{r}_g^2 = r^\top g\,r = \norm{D\phi\,r}^2$
  rewrites the metric-weighted norm as an ambient Euclidean norm.
  Computing $D\phi\,r$ is a matrix--vector product in $O(Dd)$;
  the squared norm costs $O(D)$.

\smallskip
  \textbf{(Stage~3)}
  The pulled-back covariance target
  $\Sigma(z_i) = D\pi\,\Lambda\,D\pi^\top$ is naively $O(dD^2)$
  because $\Lambda\in\R^{D\times D}$.
  Since $\rnk(\Lambda)=d$, write
  $\Lambda = U_d\,\diag(\lambda_1,\dots,\lambda_d)\,U_d^\top$
  and set $B = D\pi\,U_d\in\R^{d\times d}$ at cost $O(Dd^2)$;
  then $\Sigma = B\,\diag(\lambda)\,B^\top$ in $O(d^3)$.
  For the loss, $g = D\phi^\top D\phi$ is precomputed in $O(Dd^2)$;
  the product $g\,\Delta\Sigma$ is a $d\times d$ matrix
  multiplication costing $O(d^3)$.
  Squaring $(g\,\Delta\Sigma)^2$ and taking the trace are
  likewise $O(d^3) \subset O(Dd^2)$.
\end{proof}

\section{Generalization Theory}\label{sec:generalization}

The previous section constructed a pipeline
whose geometric losses ($\cL_R$, $\cL_T$, Stage~2, Stage~3)
are coordinate-invariant and depend only on point-cloud
positions, local covariance eigenvectors, ambient drift,
and the learned chart; no ground-truth Jacobians or
curvature labels are required.
Yang et al.~\cite{yang2025deeperwiderperspectiveoptimal} establish
optimal generalization rates for deep networks trained in the full
Sobolev $H^k$ norm, where the training loss requires labels for all
derivatives up to order~$k$.
Our $\rho$-loss is strictly less supervised than $H^1$
training, which would require full Jacobian labels
$D\phi_\star(x_i)$.
A natural question is whether this weaker supervision comes at a
statistical cost.

Perhaps surprisingly, the answer is that no rate is lost.
We show that $\rho$-ERM achieves the \emph{same} rate (up to
logarithmic factors) as full $H^1$ training.
The key new ingredient is a bridge lemma
(Lemma~\ref{lem:rho-approx-from-W1infty}) that converts a
$W^{1,\infty}$ approximation guarantee, available from
\cite{yang2025deeperwiderperspectiveoptimal}, into a
$\rho$-approximation guarantee.
Once this bridge is in place, the estimation error is controlled
by a covering-number argument over the first-order feature class,
which reduces to the entropy bounds already established
in~\cite{yang2025deeperwiderperspectiveoptimal}.
This improves on the chart autoencoder
rate of Liu et
al.~\cite{liu2023deepnonparametricestimationintrinsic}, which
achieves $m^{-2/(d+2)}\log^4 m$ from reconstruction loss alone:
geometric supervision via the tangent projector replaces the
fixed smoothness $n=2$ in their bound with the true regularity~$n$
of the target chart.
Section~\ref{ssec:bias-analysis} then analyzes the systematic bias
that arises when the decoder-side drift formula is used in place of
the encoder-pullback target.
We begin with three standing assumptions.\label{ssec:assumptions}

\begin{assumption}[Target chart regularity]\label{ass:target-chart}
Let $\Om\subset\Rd$ be a bounded Lipschitz domain and let
$\phi_\star:\Om\to\RD$ be the ground-truth chart.  Assume
$\phi_\star\in E^d(s_0)\cap W^{n,\infty}(\Om;\RD)$ for some
$s_0>0$ and $n>1$, and after rescaling,
$ \norm{\phi_\star}_{W^{n,\infty}(\Om)}\le 1$.

The well-conditioned chart class $E^d(s_0)$ from
Section~\ref{ssec:rho-metric-def} ensures that
$\sigmin(D\phi_\star)\ge s_0$ a.e.\ on~$\Om$, so the $\rho$-metric
is well-defined on a neighborhood of~$\phi_\star$.
\end{assumption}

\begin{assumption}[Hypothesis class]\label{ass:rho-class}
Fix constants $B\ge 1$ and $s\in(0,s_0]$.  For each architecture
$(N,L)$ (width $N$, depth $L$), let
$\cH(N,L)=\{\phi_\theta:\Om\to\RD:\theta\in\Theta_{N,L}\}$ be a
DeNN hypothesis set.  The \emph{constrained} class used for
ERM is
\[
  \cH_{B,s}(N,L)
  :=\Big\{\phi_\theta\in\cH(N,L):\
    \norm{\phi_\theta}_{W^{1,\infty}(\Om)}\le B
    \ \text{and}\ \phi_\theta\in E^d(s)\Big\}.
\]
Since $\cH_{B,s}(N,L)\subset E^d(s)$, the $\rho$-metric and the
projector $P_{\phi_\theta}$ are well-defined for every member.
\end{assumption}

\begin{assumption}[$W^{1,\infty}$ approximation]\label{ass:W1infty-approx}
Assume the DeNN architecture is chosen as in
\cite{yang2025deeperwiderperspectiveoptimal}
(width $N=O(\log L)$, depth~$L$, parameter count
$W=O(L(\log L)^3)$).
By the $W^{1,\infty}$ approximation result
(Proposition~A.2 and Theorem~3.2
of~\cite{yang2025deeperwiderperspectiveoptimal}),
there exists a
constant $C_{\mathrm{app}}>0$ depending only on $(d,D,n)$ and fixed
architectural constants such that, for each $(N,L)$, one can find
$\theta^\sharp\in\Theta_{N,L}$ with
$\phi_{\theta^\sharp}\in\cH_{B,s}(N,L)$ and
$\norm{\phi_{\theta^\sharp}-\phi_\star}_{W^{1,\infty}(\Om)}
\le
C_{\mathrm{app}}
\bigl(\frac{W}{(\log W)^2}\bigr)^{-{2(n-1)}/{d}}$.
Once the right-hand side is at most~$s_0/2$, Weyl's perturbation
bound~\cite{horn2013matrix} gives
$\phi_{\theta^\sharp}\in E^d(s_0/2)$, so membership in the
constrained class is automatic.\label{rem:replace-Ed}
\end{assumption}

The singular-value bound defining $E^d(s)$ depends on the choice of
coordinates on~$\Om$; the latent domain with its Euclidean structure
is fixed as part of the statistical setup.
As noted in Section~\ref{sec:rho-metric}, the $\rho$-loss
and Stage~2/3 losses are
coordinate-invariant
(Lemma~\ref{lem:coord-invariance}), and under
bounded-distortion reparametrisations the hypothesis class is
stable: $\phi\in E^d(s)\Rightarrow\phi\circ\psi\in E^d(as)$ when
$\sigmin(D\psi)\ge a$.

\subsection{Main generalization results}%
\label{ssec:rho-from-W1infty}\label{ssec:main-gen-thm}

The first ingredient is a bridge lemma showing that
$W^{1,\infty}$-closeness to a well-conditioned target implies
$\rho$-closeness.

\begin{lem}\label{lem:rho-approx-from-W1infty}
Let $\Om\subset\Rd$ be measurable with $0<\abs{\Om}<\infty$ and let
$\phi_\star:\Om\to\RD$ be a $C^1$ chart such that for all $x\in \Om$,
the Jacobian $J_\star(x):=D\phi_\star(x)\in\R^{D\times d}$ has full
rank~$d$ and
$s_0 \;\le\; \sigmin(J_\star(x))
  \;\le\; \sigmax(J_\star(x)) \;\le\; r_0$
  for some constants $0<s_0\le r_0<\infty$. 
For any $C^1$ map $\phi:\Om\to\RD$ with $D\phi(x)$ full rank, define
$P_\phi(x)
  \;:=\;
D\phi(x)\big(D\phi(x)^\top D\phi(x)\big)^{-1}D\phi(x)^\top$ and
  $\rho(\phi,\phi_\star)^2
  \;:=\;
  \norm{\phi-\phi_\star}_{L^2(\Om)}^2
  +\tfrac12\norm{P_\phi-P_{\phi_\star}}_{F,L^2(\Om)}^2$.
Assume there exists a neural-network realizable
$\phi=\phi_{N,L}$ satisfying
\begin{equation}\label{eq:W1infty-approx}
  \norm{\phi-\phi_\star}_{W^{1,\infty}(\Om)} \;\le\; \delta,
\end{equation}
and suppose $\delta \le s_0/2$.  Then
\textup{\textbf{(i)}}~For all $x\in \Om$,
$\tfrac{s_0}{2}
\le \sigmin(D\phi(x))
\le \sigmax(D\phi(x))
\le r_0+\delta$,
so $D\phi(x)$ is full rank and $P_\phi$ is well-defined.
\textup{\textbf{(ii)}}~There is a constant
$C_{\rho}(s_0,\Om):=\abs{\Om}\bigl(1+\frac{4}{s_0^2}\bigr)$
such that
\begin{equation}\label{eq:rho-from-W1infty}
  \rho(\phi,\phi_\star)^2
  \;\le\;
  C_{\rho}(s_0,\Om)\,
  \norm{\phi-\phi_\star}_{W^{1,\infty}(\Om)}^2
  \;\le\;
  C_{\rho}(s_0,\Om)\,\delta^2.
\end{equation}

Consequently, if an approximation theorem yields
$\delta \lesssim N^{-2(n-1)/d}L^{-2(n-1)/d}$ for
$\phi_\star\in W^{n,\infty}$, then
  $\rho(\phi,\phi_\star)^2
  \;\lesssim\;
  N^{-4(n-1)/d}L^{-4(n-1)/d}$.
In the DeNN regime where $N=O(\log L)$ and the parameter count
satisfies $W=O(N^2L\log L)$, the above can be rewritten (up to
logarithmic factors) as
$\rho(\phi,\phi_\star)^2
  \;\lesssim\;
  \Big(\frac{W}{(\log W)^2}\Big)^{-\,\frac{4(n-1)}{d}}$.
\end{lem}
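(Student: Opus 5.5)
The argument has two parts: a pointwise singular-value perturbation step for (i), and Lemma~\ref{lem:OP-lipschitz} combined with integration of sup-norm bounds for (ii). The rate statements then follow by substitution.

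For \textbf{(i)}, fix $x\in\Om$. From \eqref{eq:W1infty-approx} we have $\norm{D\phi(x)-J_\star(x)}_2\le\norm{D\phi(x)-J_\star(x)}_F\le\delta$, up to the convention used for the $W^{1,\infty}$ seminorm on matrix-valued maps. If the paper's convention uses a different matrix norm, we absorb a dimension constant into $\delta$; I would take the Frobenius or operator norm as the convention, consistent with Lemma~\ref{lem:Ed-closed}. Weyl's inequality, already used in the proof of Lemma~\ref{lem:Ed-closed}, says that $\sigmin$ and $\sigmax$ are $1$-Lipschitz. Hence $\sigmin(D\phi(x))\ge s_0-\delta\ge s_0/2$ and $\sigmax(D\phi(x))\le r_0+\delta$. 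In particular $D\phi(x)$ has full rank, $D\phi(x)^\top D\phi(x)$ is invertible, and $P_\phi$ is well defined. This also places $D\phi(x)\in R^d(s_0/2)$, and of course $J_\star(x)\in R^d(s_0)\subset R^d(s_0/2)$.

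For \textbf{(ii)}, both Jacobians lie in $R^d(s_0/2)$, so Lemma~\ref{lem:OP-lipschitz} with $s=s_0/2$ gives, pointwise,
\[
\norm{P_\phi(x)-P_{\phi_\star}(x)}_F\le\frac{2\sqrt2}{s_0}\norm{D\phi(x)-J_\star(x)}_F ,
\]
and therefore $\tfrac12\norm{P_\phi-P_{\phi_\star}}_F^2\le\frac{4}{s_0^2}\norm{D\phi-J_\star}_F^2\le\frac{4}{s_0^2}\norm{\phi-\phi_\star}_{W^{1,\infty}}^2$. We also have $\abs{\phi(x)-\phi_\star(x)}^2\le\norm{\phi-\phi_\star}_{W^{1,\infty}}^2$. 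Integrating both bounds over $\Om$ multiplies each by $\abs{\Om}$, which yields $\rho^2\le\abs{\Om}(1+4/s_0^2)\norm{\phi-\phi_\star}_{W^{1,\infty}}^2\le C_\rho\delta^2$.

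The \textbf{consequences} follow by substituting the assumed approximation rate for $\delta$, where the condition $\delta\le s_0/2$ holds for $N,L$ large. For $\delta\lesssim (NL)^{-2(n-1)/d}$, squaring gives $\rho^2\lesssim (NL)^{-4(n-1)/d}$. In the DeNN regime $N=O(\log L)$ and $W=O(N^2L\log L)$, so $L\gtrsim W/(\log W)^3$ and $NL$ is comparable to $W/(\log W)^2$ up to logarithmic factors, which produces the stated $W$-form.

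The only delicate point is the norm convention in (i): Weyl's inequality requires the spectral norm of the Jacobian difference to be at most $\delta$. I would make explicit that the $W^{1,\infty}$ norm controls $\operatorname{ess\,sup}\norm{D\phi-D\phi_\star}_F$, which dominates the spectral norm. The remaining steps are routine bookkeeping.
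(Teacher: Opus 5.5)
Your proposal is correct and follows the paper's proof essentially line for line: Weyl's inequality with $\norm{D\phi(x)-J_\star(x)}_2\le\norm{D\phi(x)-J_\star(x)}_F\le\delta$ for (i), then Lemma~\ref{lem:OP-lipschitz} at $s=s_0/2$ (pointwise factor $8/s_0^2$, halved to $4/s_0^2$) plus the $L^\infty$ bound on the reconstruction term, each integrated over $\Om$ to pick up the factor $\abs{\Om}$. The only differences are cosmetic. You bound directly by $\norm{\phi-\phi_\star}_{W^{1,\infty}(\Om)}$, which makes the middle inequality of \eqref{eq:rho-from-W1infty} explicit where the paper bounds by $\delta$, and you sketch the rate substitution, which the paper states without proof.
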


\begin{proof}
Write $J(x):=D\phi(x)$ and $E(x):=J(x)-J_\star(x)$.

\noindent \textbf{(i))}
For each fixed $x\in \Om$, Weyl's perturbation bound for singular
values gives
$\abs{\sigma_i(J(x))-\sigma_i(J_\star(x))}
  \;\le\; \norm{E(x)}_2$,
for each singular value index~$i$.  Since
$\norm{E(x)}_2\le \norm{E(x)}_F
 \le \norm{E}_{F,\infty}\le \delta$
by~\eqref{eq:W1infty-approx}, we obtain
$\sigmin(J(x))
  \;\ge\; \sigmin(J_\star(x))-\delta
  \;\ge\; s_0-\delta
  \;\ge\; \frac{s_0}{2}$,
and similarly
$\sigmax(J(x))
  \;\le\; \sigmax(J_\star(x))+\delta
  \;\le\; r_0+\delta$. 
This proves~(i).

\noindent \textbf{(ii))}
Let $f(X):=X(X^\top X)^{-1}X^\top$ be the projector map.  By the
Lipschitz bound for the projector on matrices with
$\sigmin(\cdot)\ge s_0/2$ (Lemma~\ref{lem:OP-lipschitz}), we have
 $ \norm{f(X)-f(Y)}_F
  \;\le\;
  \frac{\sqrt{2}}{s_0/2}\,\norm{X-Y}_F
  \;=\;
  \frac{2\sqrt{2}}{s_0}\,\norm{X-Y}_F$.
Applying this with $X=J(x)$ and $Y=J_\star(x)$ yields, for all
$x\in \Om$,
  $\norm{P_\phi(x)-P_{\phi_\star}(x)}_F
  \;=\;\norm{f(J(x))-f(J_\star(x))}_F
  \;\le\;\frac{2\sqrt{2}}{s_0}\,\norm{J(x)-J_\star(x)}_F
  \;\le\;\frac{2\sqrt{2}}{s_0}\,\delta$.
Therefore
  $\norm{P_\phi-P_{\phi_\star}}_{F,L^2(\Om)}^2
  =\int_\Om \norm{P_\phi(x)-P_{\phi_\star}(x)}_F^2\,dx
  \;\le\;
  \abs{\Om}\cdot \frac{8}{s_0^2}\,\delta^2$.

Also by~\eqref{eq:W1infty-approx},
  $\norm{\phi-\phi_\star}_{L^2(\Om)}^2
  \;\le\;
  \abs{\Om}\cdot \norm{\phi-\phi_\star}_{L^\infty(\Om)}^2
  \;\le\;
  \abs{\Om}\,\delta^2$.
Combining the two bounds,
  $\rho(\phi,\phi_\star)^2
  =\norm{\phi-\phi_\star}_{L^2(\Om)}^2
   +\tfrac12\norm{P_\phi-P_{\phi_\star}}_{F,L^2(\Om)}^2
  \;\le\;
  \abs{\Om}\delta^2
+\tfrac12\abs{\Om}\frac{8}{s_0^2}\delta^2
  \;=\;
  \abs{\Om}\Big(1+\frac{4}{s_0^2}\Big)\delta^2$,
which is~\eqref{eq:rho-from-W1infty} and completes the proof.
\end{proof}

With this bridge in hand, we state the main result
for an \emph{oracle} $\rho$-ERM:
$\ell_\rho(z;\theta)$ evaluates $\phi_\theta$ against
$\phi_\star$ at latent points $z\sim\nu$ known to the
oracle, a standard device in covering-number oracle
inequalities; quasi-uniform ATLAS
designs~\cite{Gyrfi2002ADT} achieve comparable rates.
The implemented loss differs in two ways.
First, the learned encoder replaces the true latent
coordinate.  On the well-conditioned class, both the
reconstruction and projector terms in $\ell_\rho$ are
Lipschitz in the evaluation point, so the practical
$\rho$-risk exceeds the oracle risk by a term
controlled by
$\E\norm{\pi_\theta(\phi_\star(z))-z}^2$;
when $D\pi\,D\phi\approx I_d$ (enforced by $\cL_F$),
this gap is small because the encoder is an approximate
local inverse.
Second, if $\hat P_i$ is formed from an empirical
covariance $\hat\Lambda_i$, the Davis--Kahan
$\sin\Theta$ theorem yields
$\norm{\hat P_i - P_{\phi_\star}(z_i)}_F
 \le C\norm{\hat\Lambda_i-\Lambda_i}_{\mathrm{op}}
     /(\lambda_d-\lambda_{d+1})$,
where $\lambda_d{-}\lambda_{d+1}$ is the eigengap
separating tangent from normal eigenvalues; thus the
projector error vanishes when the covariance estimator
is accurate and the eigengap stays bounded away from
zero.  

\begin{thm}[Generalization in $\rho$ has the same order as
$H^1$]\label{thm:rho-gen-same-order-rigorous}
Under Assumptions~\ref{ass:target-chart}--\ref{ass:W1infty-approx},
let $\hat\theta$ be the $\rho$-ERM over
$\cH_{B,s_0/2}(N,L)$ with $\{z_i\}_{i=1}^m$ drawn i.i.d.\ from a
probability measure $\nu$ on~$\Om$.
Assume $N,L$ are large enough that
$C_{\mathrm{app}}\,N^{-{2(n-1)}/{d}}\,L^{-{2(n-1)}/{d}}\le s_0/2$.
Then there exists a constant $C>0$ depending only on
$(d,D,n,B,s_0)$ such that
\begin{equation}\label{eq:rho-NL-bound-rigorous}
  \E\,R_{\nu,\rho}(\hat\theta)
  \ \le\
  C\Bigg[
    N^{-\frac{4(n-1)}{d}}\,L^{-\frac{4(n-1)}{d}}
    \;+\;
    \frac{N^2L^2\,\log_2 L\,\log_2 N}{m}\,\log m
  \Bigg],
\end{equation}
where the expectation is over $(z_i)_{i=1}^m\stackrel{\text{i.i.d.}}{\sim}\nu$.

In particular, in the DeNN regime $N=O(\log L)$ and parameter count
$W=O(N^2L\log L)$, the bound~\eqref{eq:rho-NL-bound-rigorous} can be
rewritten (up to logarithmic factors) as
\begin{equation}\label{eq:rho-WM-bound-rigorous}
  \E\,R_{\nu,\rho}(\hat\theta)
  \ \le\
  C\Bigg[
    \Big(\tfrac{W}{(\log W)^2}\Big)^{-\frac{4(n-1)}{d}}
    \;+\;
    \frac{W^2}{m}\,\log m
  \Bigg],
\end{equation}
and optimizing over $W$ yields the same sample-only rate (up to logs)
as $H^1$-Sobolev training:
  $\E\,R_{\nu,\rho}(\hat\theta)
  \ \lesssim\ m^{-\frac{2(n-1)}{2(n-1)+d}}$.
\end{thm}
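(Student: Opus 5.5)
The plan is a standard oracle inequality: split the $\rho$-risk of the ERM into approximation error plus estimation error, bound the first with Lemma~\ref{lem:rho-approx-from-W1infty} and the second by a covering-number argument over a first-order feature class. Since $\nu$ is a probability measure on $\Om$ and $R_{\nu,\rho}(\theta)=\int\ell_\rho\,d\nu$, I first note that $\ell_\rho$ is uniformly bounded on $\cH_{B,s_0/2}(N,L)$. The reconstruction term is at most $(B+1)^2$ because $\norm{\phi_\star}_{W^{n,\infty}}\le1$ and $\norm{\phi_\theta}_{W^{1,\infty}}\le B$. The projector term is at most $d$ by Lemma~\ref{lem:OP-frob-distance}. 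So $0\le\ell_\rho\le M:=(B+1)^2+d$.

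\textbf{Approximation error.} By Assumption~\ref{ass:W1infty-approx} and the size condition on $(N,L)$, there is $\theta^\sharp$ with $\phi_{\theta^\sharp}\in\cH_{B,s_0/2}(N,L)$ and $\norm{\phi_{\theta^\sharp}-\phi_\star}_{W^{1,\infty}}\le\delta:=C_{\mathrm{app}}N^{-2(n-1)/d}L^{-2(n-1)/d}\le s_0/2$. I then apply the pointwise argument of Lemma~\ref{lem:rho-approx-from-W1infty} with $\nu$ in place of Lebesgue measure, which replaces $\abs{\Om}$ by $\nu(\Om)=1$. This gives $R_{\nu,\rho}(\theta^\sharp)\le(1+4/s_0^2)\delta^2$, which is the first term of \eqref{eq:rho-NL-bound-rigorous}.

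\textbf{Estimation error.} I write
$R(\hat\theta)=[R(\hat\theta)-2\hat R(\hat\theta)]+2\hat R(\hat\theta)$ and use $\hat R(\hat\theta)\le\hat R(\theta^\sharp)$ together with $\E\hat R(\theta^\sharp)=R(\theta^\sharp)$. This reduces the problem to bounding $\E\sup_{\theta}[R(\theta)-2\hat R(\theta)]$. Because $\ell_\rho\ge0$ is bounded, a Bernstein-type uniform deviation bound of the Györfi et al.\ type \cite{Gyrfi2002ADT} applies, giving a bound of order $M\log\cN(\epsilon)/m+\epsilon$. Here $\cN(\epsilon)$ is the $L^\infty$ covering number of the loss class $\{\ell_\rho(\cdot;\theta)\}$.

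To control $\cN(\epsilon)$, I note that $\ell_\rho(z;\theta)$ is a Lipschitz function of the pair $(\phi_\theta(z),D\phi_\theta(z))$ on the set $\{\abs{\phi}\le B,\ \sigmin(D\phi)\ge s_0/2\}$. The Lipschitz constants come from $2(B+1)$ for the square, and from Lemma~\ref{lem:OP-lipschitz} with $s=s_0/2$ combined with $\norm{P}_F\le\sqrt d$ for the projector term. Hence a $W^{1,\infty}$ $\epsilon$-cover of $\cH_{B,s_0/2}(N,L)$, that is, a cover of the first-order feature class $\{(\phi_\theta,D\phi_\theta)\}$, yields an $O_{B,s_0,d}(\epsilon)$ cover of the loss class. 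For DeNNs, the log covering numbers of this first-order class are bounded in \cite{yang2025deeperwiderperspectiveoptimal} via pseudo-dimension $O(N^2L^2\log_2L\log_2N)$. Choosing $\epsilon=1/m$ then gives the second term, $N^2L^2\log_2L\log_2N\log m/m$.

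The main obstacle is this entropy step. The DeNN results bound the pseudo-dimension of each scalar component $\partial_j\phi^k_\theta$, whereas the loss involves $g^{-1}$ and a nonlinear projector. I would bypass this by covering each of the $D(d+1)$ coordinate classes separately and taking a product cover; this costs only a factor $D(d+1)$, which is absorbed into $C$. The Lipschitz reduction above then transfers the product cover to the loss class. The only care needed is that the cover elements themselves be kept in the well-conditioned set. I would handle this by taking covers internal to the class, which is legitimate because an external cover at scale $\epsilon/2$ yields an internal one at scale $\epsilon$.

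Finally, for \eqref{eq:rho-WM-bound-rigorous} I substitute $N=O(\log L)$ and $W\asymp N^2L\log L$, which gives $N^2L^2\log L\log N\lesssim W^2$ up to logs, and the approximation term is rewritten as in Lemma~\ref{lem:rho-approx-from-W1infty}. Balancing $W^{-4(n-1)/d}$ against $W^2/m$, i.e.\ choosing $W\asymp m^{d/(4(n-1)+2d)}$, gives $m^{-4(n-1)/(4(n-1)+2d)}=m^{-2(n-1)/(2(n-1)+d)}$ up to logarithmic factors.
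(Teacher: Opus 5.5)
This is essentially the paper's proof. You bound the loss, get the approximation term by integrating the pointwise bound from Lemma~\ref{lem:rho-approx-from-W1infty} against $\nu$, and get the estimation term from a fast-rate covering-number oracle inequality, after reducing the loss class to the first-order feature class $(\phi_\theta,D\phi_\theta)$ via the Lipschitz-in-features bound and the DeNN entropy bound of \cite{yang2025deeperwiderperspectiveoptimal}. Your $R-2\hat R$ decomposition and your product/internal-cover remark only make explicit steps the paper cites or leaves implicit.

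One point to tighten: use the uniform empirical covering numbers on the $m$ sample points (as the paper does and as pseudo-dimension bounds supply), with the symmetrized form of the Bernstein-type deviation inequality, rather than global $L^\infty$ or $W^{1,\infty}$ covers. Global sup-norm covers of derivative classes of ReLU-type networks can be infinite, because the derivative breakpoints move continuously with the parameters. Your Lipschitz transfer and internal-cover argument go through verbatim on the sample points.
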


The $\rho$-metric discards tangential stretching
information and is therefore strictly weaker than $H^1$
(Lemma~\ref{lem:rho-metric}\,(iii)).
That it achieves the same rate reflects two complementary
facts.
On the \emph{approximation} side, the
well-conditioning assumption $\sigmin(D\phi)\ge s$
makes the map $D\phi\mapsto P_\phi$ Lipschitz
(Lemma~\ref{lem:OP-lipschitz}), so any $W^{1,\infty}$
approximant at scale~$\delta$ automatically gives
$\rho$-error at scale~$\delta^2$ with no loss in
exponent.
On the \emph{estimation} side, the $\rho$-loss is a
Lipschitz function of the same first-order features
$(\phi,D\phi)$ that determine the $H^1$ covering number,
so the entropy bound is inherited unchanged.
In short, $\rho$ uses less supervision than
$H^1$ because the covariance data supplies the
tangent-space information that would otherwise require
Jacobian labels, yet on the well-conditioned class both
exponents remain first-order.

\begin{proof}
Let $\hat\theta$ be the $\rho$-ERM and write
  $R_{\nu,\rho}^\star
  :=\inf\{R_{\nu,\rho}(\theta):\phi_\theta\in\cH_{B,s_0/2}(N,L)\}$.
Let
  $\mathcal L_\rho:=\{\ell_\rho(\cdot;\theta):\phi_\theta\in\cH_{B,s_0/2}(N,L)\}$.
Since the squared loss is uniformly bounded,
$0\le \ell_\rho\le B_\rho^2$ by
eq.~\eqref{eq:loss-bounded} below, we may apply a fast-rate covering-number
oracle inequality for ERM with bounded square
loss~\cite[Theorem~11.3]{Gyrfi2002ADT,bartlett2005local}.
Concretely, there exist
constants $C,c_0>0$ such that
\begin{equation}\label{eq:oracle-erm}
  \E\,R_{\nu,\rho}(\hat\theta)
  \ \le\
  R_{\nu,\rho}^\star
  \;+\;
  C\,\frac{1}{m}\Big(\log \cN\big(c_0/m,\mathcal L_\rho,m\big)+1\Big).
\end{equation}
Since $R_{\nu,\rho}^\star\le R_{\nu,\rho}(\bar\theta)$ for any fixed
comparator $\bar\theta$ in the class, it remains to (i)~choose
$\bar\theta$ to control the approximation error and (ii)~bound
$\log\cN(\cdot,\mathcal L_\rho,m)$.

By Assumption~\ref{ass:W1infty-approx}, there exists $\theta^\sharp$ such that
$\phi_{\theta^\sharp}\in\cH_{B,s_0/2}(N,L)$ and
$\norm{\phi_{\theta^\sharp}-\phi_\star}_{W^{1,\infty}(\Om)}
  \le
  C_{\mathrm{app}}\,N^{-\frac{2(n-1)}{d}}\,L^{-\frac{2(n-1)}{d}}$.
Denote this bound by $\delta$; by hypothesis $\delta\le s_0/2$.
The proof of Lemma~\ref{lem:rho-approx-from-W1infty} gives the
pointwise bound $\ell_\rho(z;\theta^\sharp)\le(1+4/s_0^2)\,\delta^2$;
integrating against the probability measure~$\nu$ yields
\[
  R_{\nu,\rho}(\theta^\sharp)
  \le
  (1+4/s_0^2)\,\delta^2
  \le
  C\,N^{-\frac{4(n-1)}{d}}\,L^{-\frac{4(n-1)}{d}}.
\]
Let $\bar\theta:=\theta^\sharp$.  Then
\begin{equation}\label{eq:approx-term}
  R_{\nu,\rho}(\bar\theta)
  \le C\,N^{-\frac{4(n-1)}{d}}\,L^{-\frac{4(n-1)}{d}}.
\end{equation}

First note $\ell_\rho(z;\theta)$ is uniformly bounded: since
$\norm{\phi_\theta}_{L^\infty}\le B$ and
$\norm{\phi_\star}_{L^\infty}\le 1$, we have
$\norm{\phi_\theta-\phi_\star}\le B{+}1$, and since
$P_{\phi_\theta},P_{\phi_\star}$ are orthogonal projections,
$\norm{P_{\phi_\theta}-P_{\phi_\star}}_F\le 2\sqrt{d}$, hence
\begin{equation}\label{eq:loss-bounded}
  0\le \ell_\rho(z;\theta)\le B_\rho^2
  \qquad\text{for all }z\in \Om,\ \theta,
\end{equation}
for a constant $B_\rho=B_\rho(d,B)$.

Next, by the Lipschitz bound for the projector map on the set
$\{J:\sigmin(J)\ge s_0/2\}$,
  $\norm{P_{\phi_\theta}(z)-P_{\phi_{\theta'}}(z)}_F
  \le \frac{C}{s_0}\,
    \norm{D\phi_\theta(z)-D\phi_{\theta'}(z)}_F
  \quad\text{for a.e. }z$.
Using the elementary inequality
$\big|\norm{a}^2-\norm{b}^2\big|
 \le (\norm{a}+\norm{b})\,\norm{a-b}$
and the uniform bounds in $W^{1,\infty}$, there exists
$C=C(d,D,B,s_0)$ such that for a.e.\ $z$ and all
$\theta,\theta'$,
\begin{equation}\label{eq:loss-Lipschitz-features}
  \abs{\ell_\rho(z;\theta)-\ell_\rho(z;\theta')}
  \le
  C\Big(\norm{\phi_\theta(z)-\phi_{\theta'}(z)}
       +\norm{D\phi_\theta(z)-D\phi_{\theta'}(z)}_F\Big).
\end{equation}

Define the first-order feature map class
  $\mathcal F
  :=\Big\{z\mapsto
    (\phi_\theta(z),
     \partial_1\phi_\theta(z),\dots,
     \partial_d\phi_\theta(z)):
    \phi_\theta\in\cH_{B,s_0/2}(N,L)\Big\}$.
Then~\eqref{eq:loss-Lipschitz-features} implies the uniform covering
numbers satisfy
$\cN\big(\varepsilon,\mathcal L_\rho,m\big) \le \cN\big(c\,\varepsilon,\mathcal F,m\big)$
for some constant $c=c(d,D,B,s_0)>0$, where
$\cN(\cdot,\cdot,m)$ denotes the uniform covering number on
$m$ sample points.

Bounding $\cN(\varepsilon,\mathcal F,m)$ is exactly the
``first-derivative'' ($k{=}1$) instance of the Sobolev-loss
generalization machinery
in~\cite[Theorem~3.2]{yang2025deeperwiderperspectiveoptimal}.
That result bounds the covering number of the full DeNN
derivative class; since
$\cH_{B,s_0/2}(N,L)$ is a subclass, the same entropy
bound applies \emph{a fortiori}.
Moreover, \cite[Remark~3.1]{yang2025deeperwiderperspectiveoptimal}
explains that the same entropy bound holds for general i.i.d.\
sampling measures.
Specializing their result to the first-order feature class
$\mathcal F$ yields, for $\varepsilon$ of order $1/m$,
\begin{equation}\label{eq:entropy-final}
  \log \cN(\varepsilon,\mathcal L_\rho,m)
  \ \le\
  C\,N^2L^2\log_2 L\log_2 N\cdot \log m,
\end{equation}
where $C=C(d,D,B,s_0)$.
Inserting~\eqref{eq:entropy-final} into~\eqref{eq:oracle-erm} gives
\begin{equation}\label{eq:sample-term}
  \E\,R_{\nu,\rho}(\hat\theta)
  \ \le\
  R_{\nu,\rho}^\star
  \;+\;
  C\,\frac{N^2L^2\log_2 L\log_2 N}{m}\,\log m.
\end{equation}

Since $R_{\nu,\rho}^\star\le R_{\nu,\rho}(\bar\theta)$,
combining~\eqref{eq:approx-term} with~\eqref{eq:sample-term}
yields~\eqref{eq:rho-NL-bound-rigorous}.  In the DeNN regime
$N=O(\log L)$ and $W=O(N^2L\log L)$,
\eqref{eq:rho-WM-bound-rigorous} follows up to logarithmic factors.
Balancing the two terms in~\eqref{eq:rho-WM-bound-rigorous} gives the
sample-only rate $m^{-\frac{2(n-1)}{2(n-1)+d}}$ (up to logs),
matching the Sobolev $H^1$ training rate.
\end{proof}

\subsection{Bias analysis of decoder-side drift fitting}%
\label{ssec:bias-analysis}

Theorem~\ref{thm:rho-gen-same-order-rigorous} guarantees that the
oracle $\rho$-ERM chart converges at the optimal rate.
The next question is how to extract latent SDE coefficients from
this chart.
The latent drift admits two equivalent representations
as derived in Section~\ref{ssec:enc-pull}: the encoder-pullback
form~\eqref{eq:enc-pull} using $(D\pi, \nabla^2\pi)$, and the
decoder form of Lemma~\ref{lem:ambient-to-local} using the metric
pseudo-inverse $g^{-1}D\phi^\top$ and the decoder Hessian.
For an ideal autoencoder ($\pi=\phi^{-1}$) these coincide; for a
learned autoencoder they diverge, and the decoder-side target
carries a systematic bias.

\begin{prop}[Decoder-side bias decomposition]\label{prop:dec-bias}
  Let $(\phi_\theta,\pi_\theta)$ be a $C^2$ encoder--decoder pair
  with $D\phi_\theta$ of full column rank.
  Write $\hat\Sigma = g^{-1}D\phi_\theta^\top\Lambda\,D\phi_\theta\,g^{-1}$
  for the decoder-side latent covariance,
  $q(\hat\Sigma)$ for the corresponding It\^o correction,
  $\mu_{\mathrm{dec}} := g^{-1}D\phi_\theta^\top
  [b - \tfrac{1}{2}q(\hat\Sigma)]$
  for the decoder-side drift target, and
  $\mu_{\mathrm{enc}}$ for the encoder-pullback
  target~\eqref{eq:enc-pull-target}.
  Then
  \begin{equation}\label{eq:bias-decomposition}
    \mu_{\mathrm{dec}} - \mu_{\mathrm{enc}}
    = \underbrace{(g^{-1}D\phi^\top\!-D\pi)(b-\tfrac{1}{2}q)}_{\mathrm{(I)}}
    - \tfrac{1}{2}\,\underbrace{\frobinner{\hat\Sigma}{D^2(\pi\!\circ\!\phi)}}_{\mathrm{(II)}}
    - \tfrac{1}{2}\,\underbrace{\frobinner{\Lambda-D\phi\hat\Sigma D\phi^\top}{\nabla^2\!\pi}}_{\mathrm{(III)}},
  \end{equation}
  where (I)~is the pseudo-inverse\,${}\neq{}$\,encoder gap,
  (II)~the cycle-Hessian bias, and
  (III)~the covariance mismatch.
\end{prop}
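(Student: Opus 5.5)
The plan is a direct algebraic computation. No estimates are needed, since \eqref{eq:bias-decomposition} is an exact identity at each point $z$ with $x=\phi_\theta(z)$. Throughout I drop the subscript~$\theta$, write $q=q(\hat\Sigma)$, and understand $D\pi,\nabla^2\pi$ as evaluated at $x=\phi(z)$. The idea is to repeat the proof of Lemma~\ref{lem:qcov-change}, but without using $\pi\circ\phi=\id_\Om$. The second-order chain rule for the composition $\pi\circ\phi$ then leaves a residual: the Hessian of $\pi\circ\phi$, which is term (II).

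\textbf{Step 1 (isolate term (I)).} Add and subtract $D\pi\,(b-\tfrac12 q)$ in the definition of $\mu_{\mathrm{dec}}$:
\[
\mu_{\mathrm{dec}} = (g^{-1}D\phi^\top - D\pi)(b-\tfrac12 q) + D\pi\,b - \tfrac12 D\pi\,q .
\]
Subtracting $\mu_{\mathrm{enc}} = D\pi\,b + \tfrac12\frobinner{\Lambda}{\nabla^2\pi}$ from \eqref{eq:enc-pull-target} cancels the $D\pi\,b$ terms. This leaves
\[
\mu_{\mathrm{dec}}-\mu_{\mathrm{enc}} = \mathrm{(I)} - \tfrac12\bigl(D\pi\,q(\hat\Sigma) + \frobinner{\Lambda}{\nabla^2\pi}\bigr).
\]

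\textbf{Step 2 (second-order chain rule).} For each component $i$, the chain rule gives
\[
\partial_j\partial_m(\pi^i\circ\phi) = \frac{\partial^2\pi^i}{\partial x^k\partial x^l}\,\partial_j\phi^k\,\partial_m\phi^l + \frac{\partial\pi^i}{\partial x^k}\,\partial_j\partial_m\phi^k .
\]
I contract this with the symmetric matrix $\hat\Sigma_{jm}$. Unlike in Lemma~\ref{lem:qcov-change}, positive semidefiniteness is irrelevant here, because the argument is pure linear algebra. The contraction yields
\[
\frobinner{\hat\Sigma}{D^2(\pi^i\circ\phi)} = \frobinner{D\phi\,\hat\Sigma\,D\phi^\top}{\nabla^2\pi^i} + \bigl(D\pi\,q(\hat\Sigma)\bigr)_i ,
\]
that is, $D\pi\,q(\hat\Sigma) = \mathrm{(II)} - \frobinner{D\phi\hat\Sigma D\phi^\top}{\nabla^2\pi}$ after stacking over~$i$.

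\textbf{Step 3 (assemble).} Substitute Step~2 into the bracket of Step~1. The bracket becomes
\[
\mathrm{(II)} + \frobinner{\Lambda - D\phi\hat\Sigma D\phi^\top}{\nabla^2\pi} = \mathrm{(II)}+\mathrm{(III)},
\]
by linearity of the Frobenius pairing in its first argument. Multiplying by $-\tfrac12$ gives exactly \eqref{eq:bias-decomposition}.

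The only delicate point is Step~2: getting the index placement right, so that the contracted first term is $\frobinner{D\phi\hat\Sigma D\phi^\top}{\nabla^2\pi}$ (using the symmetry of $\hat\Sigma$) and the second is $D\pi\,q(\hat\Sigma)$ with $q^k=\frobinner{\hat\Sigma}{\nabla^2\phi^k}$. The $C^2$ hypothesis makes all Hessians well defined. Full column rank of $D\phi$ makes $g^{-1}$, and hence $\hat\Sigma$ and term (I), well defined.

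As a consistency check, I would also note the ideal case. When $\pi=\phi^{-1}$ and $\rng\Lambda\subset T_xM$, we have $g^{-1}D\phi^\top=D\pi$ on the tangent space, $D^2(\pi\circ\phi)=0$, and $D\phi\hat\Sigma D\phi^\top = P\Lambda P=\Lambda$. All three terms then vanish, recovering Lemma~\ref{lem:ambient-to-local}.
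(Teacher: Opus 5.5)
Your proof is correct and follows the paper's argument. You isolate the pseudo-inverse/encoder gap, then contract the second-order chain rule for $\pi\circ\phi$ with $\hat\Sigma$ to rewrite $D\pi\,q(\hat\Sigma)$ as term (II) minus $\frobinner{D\phi\hat\Sigma D\phi^\top}{\nabla^2\pi}$; the only difference is cosmetic, since you add and subtract $D\pi\,(b-\tfrac12 q)$ in one step while the paper splits the $b$- and $q$-parts first. Keep your explicit convention that $D\pi$ and $\nabla^2\pi$ are evaluated at $x=\phi(z)$: the chain-rule step needs it, the paper leaves it implicit, and for an imperfect autoencoder $\phi_\theta(\pi_\theta(x_i))\neq x_i$ in general.
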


\begin{proof}
Write $\phi,\pi,g,\hat\Sigma,q$ for
$\phi_\theta,\pi_\theta,g_\theta,\hat\Sigma_\theta,q(\hat\Sigma)$,
suppressing~$\theta$.
Subtracting $\mu_{\mathrm{enc}}$ from $\mu_{\mathrm{dec}}$ and
splitting the $b$-dependent and It\^o-correction terms gives
\[
  \mu_{\mathrm{dec}} - \mu_{\mathrm{enc}}
  = (g^{-1}D\phi^\top - D\pi)\,b
    - \tfrac{1}{2}\bigl[
      g^{-1}D\phi^\top q
      + \frobinner{\Lambda}{\nabla^2\pi}
    \bigr].
\]
For the bracket, write
$g^{-1}D\phi^\top q = D\pi\,q + (g^{-1}D\phi^\top{-}D\pi)\,q$
and evaluate $D\pi\,q$ via the second-derivative chain rule
applied to $h:=\pi\circ\phi$:
\[
  D^2 h^j_{ab}
  = (\nabla^2\pi^j)_{kl}\,(D\phi)_{ka}(D\phi)_{lb}
    + (D\pi)_{jk}\,(\nabla^2\phi^k)_{ab}.
\]
Contracting with $\hat\Sigma_{ab}$ and stacking over~$j$:
\[
  \frobinner{\hat\Sigma}{D^2(\pi\circ\phi)}
  = \frobinner{D\phi\,\hat\Sigma\,D\phi^\top}{\nabla^2\pi}
    + D\pi\,q.
\]
Substituting back:
$g^{-1}D\phi^\top q + \frobinner{\Lambda}{\nabla^2\pi}
= (g^{-1}D\phi^\top{-}D\pi)\,q
  + \frobinner{\hat\Sigma}{D^2(\pi\circ\phi)}
  + \frobinner{\Lambda - D\phi\,\hat\Sigma\,D\phi^\top}{\nabla^2\pi}$.
Collecting the $(g^{-1}D\phi^\top{-}D\pi)$ terms yields the
three-term decomposition.
\end{proof}

\noindent\textbf{Interpretation.}
All three terms are biases in the decoder-side formula
for a fixed learned chart.
Terms~(I) and~(III) are first-order:
$\cL_F$ reduces Term~(I) by enforcing
$D\pi\approx g^{-1}D\phi^\top$ on $\rng(D\phi)$.
By the It\^o decomposition,
$b-\tfrac12 q(\Sigma)=D\phi\,\mu$ is exactly tangential;
when~(GC) also holds,
$\hat\Sigma\approx\Sigma$ so
$q(\hat\Sigma)\approx q(\Sigma)$ and
$b-\tfrac12 q(\hat\Sigma)$ is approximately tangential,
making $\cL_F$ effective on the relevant subspace.
The diagnostic
$\cE=\|(I-P_\theta)(b-\tfrac12 q)\|^2$
in Section~\ref{sec:experiments} measures this
normal-space leakage, while (GC) reduces Term~(III)
by aligning $\rng(\Lambda)$ with $\rng(D\phi)$.
Term~(II), the \emph{cycle-Hessian bias} governed by
$D^2(\pi\circ\phi)$, is genuinely second-order: it
requires $\pi\circ\phi$ to be locally affine, which the
first-order penalties T and F do not enforce.
Empirically, $\norm{D^2(\pi\circ\phi)}_F\approx
0.9$--$1.3$ across trained T+F models, so this term
need not be small.
Directly penalizing $\norm{D^2(\pi\circ\phi)}_F^2$
during training degrades reconstruction and tangent
alignment, because the penalty competes with the
chart-fitting objective when the autoencoder is far
from an exact inverse.
By contrast, the encoder-pullback
target~\eqref{eq:enc-pull-target} is exact by It\^o's
formula applied to the learned encoder, bypasses the
decoder-side decomposition and the amplification of
Term~(II), and in Section~\ref{sec:experiments}
consistently improves trajectory error across all
$N{\times}D$ configurations.

\subsection{From coefficient convergence to weak convergence}%
\label{sec:convergence}

The convergence chain has three links, each proved
conditionally on the preceding one.
\begin{enumerate}[leftmargin=*,nosep]
\item Chart-level generalization
  (Theorem~\ref{thm:rho-gen-same-order-rigorous}):
  the $\rho$-ERM controls reconstruction and
  tangent-bundle error.
\item Coefficient propagation
  (Theorems~\ref{thm:ambient-to-local-conv}
  and~\ref{thm:local-to-ambient-conv} below):
  assuming $\phi_n\to\phi$ in $W^{2,\infty}$ with
  $\sigmin(D\phi_n)\ge s>0$, the induced local
  coefficients $(\Sigma_n^*,\mu_n^*)$ converge
  uniformly on compacta, and the reconstructed
  ambient coefficients recover $(b,\Lambda)$.
\item Weak convergence
  (Theorem~\ref{thm:weak-convergence} below):
  given uniform coefficient convergence,
  Stroock--Varadhan yields weak convergence of the
  learned dynamics.
\end{enumerate}
The gap in this chain is between links~(1) and~(2):
the $\rho$-metric is weaker than $H^1$ and does not
control second derivatives, so $W^{2,\infty}$
convergence of the training sequence is assumed as a
sufficient condition, not derived from the training
objective.
Uniform Hessian boundedness is automatic for networks
with bounded depth and weight norms (the hypothesis
class of Assumption~\ref{ass:W1infty-approx}), but
convergence of Hessians requires additional structure
not imposed by the current penalties.

\smallskip
We first state a technical lemma on Frobenius inner
products used in both directions.

\begin{lem}\label{lem:frobinner-convergence}
Let $K\subset\Rd$ be compact, $\phi\in C^2(K;\RD)$, and
$\phi_n\in C^2(K;\RD)$ with
$\sup_n\sup_K\norm{\nabla^2\phi_n^i}_F\le C_K$ for
each $i$.
Let $A\in C^1(\Rd,S_{++}^d)$ and
$A_n\!:\Rd\to S_{++}^d$ satisfy
(a)~$A_n\to A$ uniformly on~$K$ and
(b)~$\frobinner{A}{\nabla^2\phi_n^i}
\to\frobinner{A}{\nabla^2\phi^i}$ uniformly on~$K$.
Then $\frobinner{A_n}{\nabla^2\phi_n^i}
\to\frobinner{A}{\nabla^2\phi^i}$ uniformly on~$K$.
\end{lem}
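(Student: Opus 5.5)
The proof is a standard add-and-subtract argument. For each $i$ and each $z\in K$ we split
\[
  \frobinner{A_n}{\nabla^2\phi_n^i}-\frobinner{A}{\nabla^2\phi^i}
  = \frobinner{A_n-A}{\nabla^2\phi_n^i}
  + \Bigl(\frobinner{A}{\nabla^2\phi_n^i}-\frobinner{A}{\nabla^2\phi^i}\Bigr).
\]
Each of the two terms will then be bounded uniformly in $z\in K$.

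For the first term we apply Cauchy--Schwarz for the Frobenius inner product pointwise:
\[
  \abs{\frobinner{A_n-A}{\nabla^2\phi_n^i}}\le\norm{A_n(z)-A(z)}_F\,\norm{\nabla^2\phi_n^i(z)}_F\le C_K\,\sup_K\norm{A_n-A}_F .
\]
Here we use the hypothesis $\sup_n\sup_K\norm{\nabla^2\phi_n^i}_F\le C_K$. By (a) the right-hand side tends to $0$. Since all norms on $S^d$ are equivalent, it does not matter in which norm (a) is stated. The second term tends to $0$ uniformly on $K$ directly by hypothesis (b).

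Taking $\sup_K$ of the triangle inequality gives
\[
  \sup_K\abs{\frobinner{A_n}{\nabla^2\phi_n^i}-\frobinner{A}{\nabla^2\phi^i}}\le C_K\sup_K\norm{A_n-A}_F+\sup_K\abs{\frobinner{A}{\nabla^2\phi_n^i-\nabla^2\phi^i}}\to 0 .
\]
This holds for each $i=1,\dots,D$, and since $D$ is finite the convergence is also uniform jointly over $i$ if one stacks the components.

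There is no real obstacle. The only point to check is that the uniform Hessian bound is exactly what lets the perturbation $A_n-A$ be absorbed. We never need $\nabla^2\phi_n\to\nabla^2\phi$: hypothesis (b) replaces that with the weaker tested convergence against the fixed matrix field $A$. The regularity $A\in C^1$ and positivity $S_{++}^d$ play no role beyond ensuring that $A$ is bounded on $K$, which is not even needed given (b).
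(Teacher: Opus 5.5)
Your proof is correct and follows essentially the same route as the paper: the same add-and-subtract split into $\frobinner{A_n-A}{\nabla^2\phi_n^i}$ and $\frobinner{A}{\nabla^2\phi_n^i-\nabla^2\phi^i}$. You then bound the first term by Cauchy--Schwarz with the uniform Hessian bound $C_K$ and (a), and the second by (b) directly.
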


\begin{proof}
By bilinearity and an add-zero trick,
$\frobinner{A_n}{\nabla^2\phi_n^i}
-\frobinner{A}{\nabla^2\phi^i}
=\frobinner{A_n-A}{\nabla^2\phi_n^i}
+\frobinner{A}{\nabla^2\phi_n^i-\nabla^2\phi^i}$.
The second term vanishes uniformly by~(b).
By Cauchy--Schwarz the first is bounded by
$C_K\sup_K\norm{A_n-A}_F\to 0$ by~(a).
\end{proof}

\noindent\textbf{Ambient-to-local convergence.}
The next lemma controls the local covariance in terms
of the decoder and ambient covariance.

\begin{lem}\label{lem:local-cov-control}
Let $K\subset\Om$ be compact.
Let $\Sigma,\hat\Sigma$ denote local-coordinate
$d\times d$ covariance fields, and let
$\Lambda,\hat\Lambda$ denote the corresponding
ambient $D\times D$ covariance fields related via
$\Sigma=(D\phi)^\dagger\Lambda((D\phi)^\dagger)^\top$
and $\hat\Sigma=(D\hat\phi)^\dagger
\hat\Lambda((D\hat\phi)^\dagger)^\top$.
Suppose $\phi,\hat\phi$ satisfy
$\sigmin(D\phi),\sigmin(D\hat\phi)\ge s>0$ and
$\sigmax(D\phi),\sigmax(D\hat\phi)\le R$ on~$K$,
with $\norm{\Sigma}_F,\norm{\hat\Sigma}_F\le \tilde c_0$.
Then
$\norm{\Sigma-\hat\Sigma}_F
\le C_0\norm{D\phi-D\hat\phi}_F
+ C_1\norm{\Lambda-\hat\Lambda}_F$,
where
$C_0=2\tfrac{\tilde c_0\sqrt{2}}{s}
[s^{-2}+\sqrt{2}\,R^2 s^{-4}]$ and $C_1=d/s^2$.
\end{lem}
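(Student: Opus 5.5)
We write $A:=(D\phi)^\dagger=g^{-1}D\phi^\top$ and $\hat A:=(D\hat\phi)^\dagger$, so that $\Sigma=A\Lambda A^\top$ and $\hat\Sigma=\hat A\hat\Lambda\hat A^\top$. The plan is an add-and-subtract split
\[
  \Sigma-\hat\Sigma
  = \bigl(A\Lambda A^\top-\hat A\Lambda\hat A^\top\bigr)
  + \hat A(\Lambda-\hat\Lambda)\hat A^\top .
\]
The first bracket carries the chart error and the second carries the covariance error. The second term is immediate: since $\norm{\hat A}_2=\sigmin(D\hat\phi)^{-1}\le s^{-1}$, we get $\norm{\hat A(\Lambda-\hat\Lambda)\hat A^\top}_F\le s^{-2}\norm{\Lambda-\hat\Lambda}_F$. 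This already fits inside $C_1=d/s^2$, so the stated constant has slack, possibly coming from a Frobenius-versus-operator conversion in the authors' version.

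The first bracket is where the constant $C_0$ and the hypothesis $\norm{\Sigma}_F,\norm{\hat\Sigma}_F\le\tilde c_0$ should enter. The key observation is that $\Lambda$ can be rewritten through $\Sigma$ itself. Under tangentiality of $\Lambda$ to $\rng D\phi$ we have $\Lambda=D\phi\,\Sigma\,D\phi^\top$, so
\[
  \hat A\Lambda\hat A^\top=(\hat A D\phi)\,\Sigma\,(\hat A D\phi)^\top .
\]
Writing $M:=\hat A D\phi$, and noting $AD\phi=I_d$, the bracket becomes $\Sigma-M\Sigma M^\top$. Expanding,
\[
  \Sigma-M\Sigma M^\top=(I-M)\Sigma+M\Sigma(I-M)^\top ,
\]
so its Frobenius norm is at most $\tilde c_0\,\norm{I-M}_2\,(1+\norm{M}_2)$, with $\norm{M}_2\le R/s$. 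Since $\hat A D\hat\phi=I_d$, we have $I-M=\hat A(D\hat\phi-D\phi)$, which gives $\norm{I-M}\le s^{-1}\norm{D\phi-D\hat\phi}_F$. If the tangentiality identity is not available in the lemma's setting, I would instead bound $\norm{A-\hat A}$ directly.

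For that direct bound I would use the pseudo-inverse perturbation estimate. Write
\[
  A-\hat A=g^{-1}(D\phi-D\hat\phi)^\top+(g^{-1}-\hat g^{-1})D\hat\phi^\top,
  \qquad
  g^{-1}-\hat g^{-1}=g^{-1}(\hat g-g)\hat g^{-1},
\]
and bound $\norm{\hat g-g}\le 2R\norm{D\phi-D\hat\phi}$. This yields $\norm{A-\hat A}_F\le c\,[s^{-2}+R^2s^{-4}]\norm{D\phi-D\hat\phi}_F$ after crude bounds, which matches the bracket $[s^{-2}+\sqrt2R^2s^{-4}]$ appearing in $C_0$. The factor $\sqrt2/s$ in $C_0$ presumably arises from the projector Lipschitz bound, Lemma~\ref{lem:OP-lipschitz}, applied to $P_\phi-P_{\hat\phi}$. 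I would then write $\Lambda$ via $\Sigma$ and $\hat\Sigma$, so that $\tilde c_0$ replaces any bound on $\norm{\Lambda}$, and symmetrize to produce the factor $2$.

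The main obstacle is bookkeeping. The goal is to keep $\Lambda$ controlled only through $\tilde c_0$, since no bound on $\norm{\Lambda}$ is assumed. The rewriting $\Lambda=D\phi\Sigma D\phi^\top$ (or the projector form $P_\phi\Lambda P_\phi$) is what makes that possible. Getting exactly the stated constants is secondary; any bound of the stated form with constants depending only on $(s,R,\tilde c_0,d)$ suffices for the subsequent convergence theorems.
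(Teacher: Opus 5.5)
Your main argument is correct, and it takes a genuinely different route from the paper's.

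\textbf{The two routes.} The paper writes $A\Lambda A^\top-\hat A\Lambda\hat A^\top=(A-\hat A)\Lambda A^\top+\hat A\Lambda(A-\hat A)^\top$. It then combines the pseudo-inverse Lipschitz bound $\norm{A-\hat A}_F\le[s^{-2}+\sqrt2R^2s^{-4}]\norm{D\phi-D\hat\phi}_F$ with $\norm{A}_F\le\sqrt d/s$. That route needs no tangentiality, but it controls the chart term through $\norm{\Lambda}$. The stated $C_0=2\tilde c_0(\sqrt2/s)[\cdots]$ is what it gives if $\tilde c_0$ bounds $\norm{\Lambda}_F$ and $\sqrt2$ is $\sqrt d$ with $d=2$. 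So the factor $\sqrt2/s$ comes from $\norm{(D\phi)^\dagger}_F\le\sqrt d/s$, not from Lemma~\ref{lem:OP-lipschitz} as you guess. You instead substitute $\Lambda=D\phi\,\Sigma\,D\phi^\top$ and use $\Sigma-M\Sigma M^\top=(I-M)\Sigma+M\Sigma(I-M)^\top$ with $I-M=\hat A(D\hat\phi-D\phi)$. This needs no pseudo-inverse perturbation estimate and gives
\[
  \norm{\Sigma-\hat\Sigma}_F\le\frac{\tilde c_0}{s}\Bigl(1+\frac{R}{s}\Bigr)\norm{D\phi-D\hat\phi}_F+\frac{1}{s^2}\norm{\Lambda-\hat\Lambda}_F .
\]
In this version the hypothesis on $\norm{\Sigma}_F$ is what actually gets used, and only for the unhatted $\Sigma$.

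\textbf{The stated constants cannot be proved.} Your $C_0$ does not imply the stated one: with $R=s$ it is larger once $s^2>2+\sqrt2$. So you have not proved the lemma as written. But no proof can, because with $\tilde c_0$ bounding $\norm{\Sigma}_F$ the stated inequality is false. Under $(\phi,\hat\phi)\mapsto(c\phi,c\hat\phi)$ with $\Lambda=\hat\Lambda$ fixed, the left side scales like $c^{-2}$ and the stated $C_0$-term like $c^{-4}$. Concretely, take $d=1$, $D=2$, $D\phi=2e_1$, $D\hat\phi=2(1+\epsilon)e_1$, $\Lambda=\hat\Lambda=e_1e_1^\top$, $s=2$, $R=2(1+\epsilon)$, $\tilde c_0=1/4$. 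The left side is $\approx\epsilon/2$ and the stated right side is $\approx(2+\sqrt2)\epsilon/8\approx0.43\,\epsilon$. Your bound gives $\approx\epsilon/2$, sharp to first order. Present your constants as the correct replacement rather than trying to reverse-engineer the stated ones.

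\textbf{Tangentiality is indispensable, and your fallback does not avoid it.} After bounding $\norm{A-\hat A}$ you still need $\norm{\Lambda}$. Writing $\Lambda$ via $\Sigma$ is tangentiality again, and it costs an extra factor $R^2$. Without tangentiality no bound of this form holds. Take $D\phi=e_1$, $D\hat\phi=e_1+\epsilon e_2$, $\Lambda=\hat\Lambda=\epsilon^{-2}e_2e_2^\top$. Then $\Sigma=0$ and $\hat\Sigma=(1+\epsilon^2)^{-2}$, so $s=1$, $R\approx1$, $\tilde c_0=1$ stay fixed. The left side is $\approx1$ while the right side is $O(\epsilon)$. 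So make $\rng\Lambda\subset\rng D\phi$ an explicit hypothesis; it is implicit in Lemma~\ref{lem:local-to-ambient}. Note that Theorem~\ref{thm:ambient-to-local-conv} takes $\Lambda\in S^D_{++}$. There only the paper's route applies, with $\tilde c_0\ge\sup_K\norm{\Lambda}_F$.
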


\begin{proof}
Using $\Sigma=(D\phi)^\dagger\Lambda((D\phi)^\dagger)^\top$,
expand $\Sigma-\hat\Sigma$ into a $\Lambda$-difference
term and a pseudo-inverse-difference term.
The pseudo-inverse is locally Lipschitz on
well-conditioned matrices:
$\norm{(D\phi)^\dagger-(D\hat\phi)^\dagger}_F
\le[s^{-2}+\sqrt2 R^2 s^{-4}]
\norm{D\phi-D\hat\phi}_F$, and
$\norm{(D\phi)^\dagger}\le\sqrt{d}/s$.
Substituting yields the stated constants.
\end{proof}

For the drift,
$\mu-\hat\mu = D\phi^\dagger b - D\hat\phi^\dagger\hat b
+\tfrac12(D\hat\phi^\dagger\hat q-D\phi^\dagger q)$
gives, by the same well-conditioning,
\begin{equation}\label{eq:drift-control}
  \norm{\mu-\hat\mu}
  \le \tilde C_0\norm{D\phi-D\hat\phi}_F
    + \tilde C_1\norm{b-\hat b}_2
    + \tilde C_2\norm{\hat q-q}_2.
\end{equation}

\begin{thm}\label{thm:ambient-to-local-conv}
Let $K\subset\Om$ be compact, $\phi\in C^3(\Om;\RD)$
with $\sigmin(D\phi)\ge s>0$ on~$K$,
$b\in C^1(\Om;\RD)$, $\Lambda\in C^1(\Om;S_{++}^D)$.
Denote by $(\mu,\Sigma)$ the local coefficients
induced by the chart~$\phi$ and the ambient data
$(b,\Lambda)$ via
$\mu=(D\phi)^\dagger(b-\tfrac12 q)$ and
$\Sigma=(D\phi)^\dagger\Lambda((D\phi)^\dagger)^\top$,
and analogously $(\hat\mu_n,\hat\Sigma_n)$ from
$(\hat\phi_n,\hat b_n,\hat\Lambda_n)$.
Let $\hat\phi_n$ satisfy $\sigmin(D\hat\phi_n)\ge s>0$
on~$K$ with
$\sup_n\sup_K\norm{\nabla^2\hat\phi_n^i}_F\le C_K$, and
assume
$\hat\phi_n\to\phi$ in $W^{1,\infty}(K)$,
$\hat b_n\to b$, $\hat\Lambda_n\to\Lambda$ uniformly
on~$K$, and for all bounded
$A\in C_b^1(\Om,S_{++}^d)$,
$\frobinner{A}{\nabla^2\hat\phi_n^i}\to
\frobinner{A}{\nabla^2\phi^i}$ uniformly on~$K$.
Then $\hat\mu_n\to\mu$ and $\hat\Sigma_n\to\Sigma$
uniformly on~$K$.
\end{thm}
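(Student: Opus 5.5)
The proof handles the covariance first, because the drift bound \eqref{eq:drift-control} needs $\hat\Sigma_n\to\Sigma$ to control the It\^o-correction term.

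\textbf{Uniform bounds on $K$.} Since $\phi\in C^3$ and $K$ is compact, $\sigmax(D\phi)\le R_0$ on $K$. Uniform convergence $D\hat\phi_n\to D\phi$ then gives $\sigmax(D\hat\phi_n)\le R:=R_0+1$ for all large $n$. The lower bound $\sigmin\ge s$ holds for both maps by hypothesis. Hence the pseudo-inverses are uniformly bounded, with $\norm{(D\hat\phi_n)^\dagger}_F\le\sqrt d/s$. Because $\Lambda$ and $\hat\Lambda_n$ are uniformly bounded on $K$ (continuity and uniform convergence), $\norm{\Sigma}_F$ and $\norm{\hat\Sigma_n}_F$ are bounded by a common $\tilde c_0$. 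These are the hypotheses of Lemma~\ref{lem:local-cov-control}.

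\textbf{Covariance.} Apply Lemma~\ref{lem:local-cov-control} pointwise on $K$ and take suprema:
\[
\sup_K\norm{\Sigma-\hat\Sigma_n}_F\le C_0\sup_K\norm{D\phi-D\hat\phi_n}_F+C_1\sup_K\norm{\Lambda-\hat\Lambda_n}_F\to0 .
\]

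\textbf{It\^o correction.} Next show $\hat q_n:=q(\hat\Sigma_n;\hat\phi_n)\to q:=q(\Sigma;\phi)$ uniformly on $K$, using Lemma~\ref{lem:frobinner-convergence} componentwise with $A=\Sigma$ and $A_n=\hat\Sigma_n$.

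Hypothesis (a) of that lemma is the covariance step.

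Hypothesis (b) requires $\Sigma$ to be admissible in the assumed weak-Hessian convergence, i.e.\ $\Sigma\in C_b^1(\Om,S_{++}^d)$. Regularity holds because $\phi\in C^3$ makes $(D\phi)^\dagger$ a $C^2$ function, so $\Sigma=(D\phi)^\dagger\Lambda((D\phi)^\dagger)^\top$ is $C^1$. Positive definiteness holds because $\Lambda\in S_{++}^D$ and $(D\phi)^\dagger$ has full row rank. Boundedness is only available on $K$. I expect this to be the main obstacle: I would either multiply $\Sigma$ by a cutoff equal to $1$ on $K$ and add $\epsilon I$ so it stays in $S_{++}^d$, or note that only values on $K$ matter. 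In the $\epsilon I$ route the error is handled by the Hessian bound $C_K$ and the linearity of the pairing, followed by letting $\epsilon\to0$.

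The uniform Hessian bound required by Lemma~\ref{lem:frobinner-convergence} is exactly the assumed $\sup_n\sup_K\norm{\nabla^2\hat\phi_n^i}_F\le C_K$. Note that $A$ is fixed as $\Sigma$ rather than $\hat\Sigma_n$; this is the reason the lemma is needed and a direct weak-convergence argument would not suffice.

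\textbf{Drift.} Use the estimate \eqref{eq:drift-control}, whose constants depend only on $s$, $R$ and bounds on $b$ and $q$, all uniform on $K$:
\[
\sup_K\norm{\mu-\hat\mu_n}\le\tilde C_0\sup_K\norm{D\phi-D\hat\phi_n}_F+\tilde C_1\sup_K\norm{b-\hat b_n}+\tilde C_2\sup_K\norm{\hat q_n-q}.
\]
Each term on the right tends to zero by the $W^{1,\infty}$ convergence, the uniform convergence of $\hat b_n$, and the It\^o-correction step. Hence $\hat\mu_n\to\mu$ and $\hat\Sigma_n\to\Sigma$ uniformly on $K$.
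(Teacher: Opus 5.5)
Your proof is correct and follows the paper's route: Lemma~\ref{lem:local-cov-control} gives $\hat\Sigma_n\to\Sigma$, Lemma~\ref{lem:frobinner-convergence} with $A_n=\hat\Sigma_n$ and $A=\Sigma$ gives $\hat q_n\to q$, and \eqref{eq:drift-control} then gives the drift convergence. The extra checks you add fill in details the paper leaves implicit: the uniform $\sigmax$ bound, the common $\tilde c_0$, and making $\Sigma$ an admissible test field via a cutoff (this works provided the cutoff is supported in a neighbourhood of $K$ on which $D\phi$ stays full rank).
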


\begin{proof}
By Lemma~\ref{lem:local-cov-control}, $D\hat\phi_n\to
D\phi$ and $\hat\Lambda_n\to\Lambda$ in $L^\infty(K)$
imply $\hat\Sigma_n\to\Sigma$.
For the drift,
Lemma~\ref{lem:frobinner-convergence} applied with
$A_n=\hat\Sigma_n$, $A=\Sigma$ and the uniform Hessian
bound gives $\hat q_n\to q$ uniformly.
Inserting into~\eqref{eq:drift-control} together with
the $W^{1,\infty}$ and $b$-convergence assumptions
yields $\hat\mu_n\to\mu$.
\end{proof}

\noindent\textbf{Local-to-ambient convergence.}
The reverse direction closes the loop: once Stage 2/3
surrogates fit the converging targets, the reconstructed
ambient coefficients recover $(b,\Lambda)$.

\begin{thm}\label{thm:local-to-ambient-conv}
Let $K\subset\Om$ be compact,
$\phi\in C^3(\Om,\RD)$ with $D\phi$ of rank~$d$,
$\mu\in C^1(\Om,\Rd)$, $\Sigma\in C^1(\Om,S_{++}^d)$,
and define
$b=D\phi\,\mu+\tfrac12 q$ and
$\Lambda=D\phi\,\Sigma D\phi^\top$.
Suppose $(\phi_n,\mu_n,\Sigma_n)$ satisfy, uniformly
on~$K$: $\phi_n\to\phi$, $D\phi_n\to D\phi$,
$\mu_n\to\mu$, $\Sigma_n\to\Sigma$, and for all
bounded $A\in C_b^1(\Om,S_{++}^d)$,
$\frobinner{A}{\nabla^2\phi_n^i}\to
\frobinner{A}{\nabla^2\phi^i}$.
Assume
$\sup_n\sup_K\norm{\nabla^2\phi_n^i}_F<\infty$ and
that $\mu_n,\Sigma_n$ are uniformly bounded on~$K$.
Setting $b_n=D\phi_n\mu_n+\tfrac12 q_n$ and
$\Lambda_n=D\phi_n\Sigma_n D\phi_n^\top$,
we have $b_n\to b$ and $\Lambda_n\to\Lambda$
uniformly on~$K$.
\end{thm}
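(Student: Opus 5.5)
The argument handles the two outputs separately, both by add-and-subtract splittings combined with uniform boundedness on $K$. Throughout, $q_n$ denotes the It\^o correction built from $\phi_n$ and $\Sigma_n$, so $q_n^i=\frobinner{\Sigma_n}{\nabla^2\phi_n^i}$. The quantity $q$ is built from $\phi$ and $\Sigma$, as in Lemma~\ref{lem:local-to-ambient}.

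\textbf{Covariance.} We split
\[
\Lambda_n-\Lambda=(D\phi_n-D\phi)\Sigma_nD\phi_n^\top+D\phi(\Sigma_n-\Sigma)D\phi_n^\top+D\phi\,\Sigma(D\phi_n-D\phi)^\top .
\]
Since $D\phi$ is continuous on the compact set $K$, it is bounded there, and $D\phi_n\to D\phi$ uniformly, so $\sup_n\sup_K\norm{D\phi_n}_F<\infty$. By hypothesis $\Sigma_n$ is uniformly bounded, and $\Sigma$ is bounded on $K$ by continuity. Submultiplicativity of the Frobenius norm then bounds each of the three terms by a constant times $\sup_K\norm{D\phi_n-D\phi}_F$ or $\sup_K\norm{\Sigma_n-\Sigma}_F$. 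Both suprema tend to zero, which gives $\Lambda_n\to\Lambda$ uniformly on $K$.

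\textbf{Drift, first-order part.} We write
\[
b_n-b=(D\phi_n-D\phi)\mu_n+D\phi(\mu_n-\mu)+\tfrac12(q_n-q).
\]
The first two terms tend to zero uniformly by the same boundedness argument, now using the uniform bound on $\mu_n$.

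\textbf{Drift, It\^o correction.} This is the only delicate step, since the Hessians $\nabla^2\phi_n^i$ are only assumed bounded and weakly convergent, not uniformly convergent. We apply Lemma~\ref{lem:frobinner-convergence} with $A_n=\Sigma_n$ and $A=\Sigma$.
\begin{itemize}
\item Its hypothesis~(a) is the assumed uniform convergence $\Sigma_n\to\Sigma$.
\item The uniform Hessian bound is assumed.
\item Hypothesis~(b) asks for $\frobinner{\Sigma}{\nabla^2\phi_n^i}\to\frobinner{\Sigma}{\nabla^2\phi^i}$ uniformly on $K$. This follows from the weak-Hessian assumption provided $\Sigma$ is an admissible test field.
\end{itemize}

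\textbf{Main obstacle: admissibility of $\Sigma$.} We have $\Sigma\in C^1(\Om,S_{++}^d)$, but it need not be bounded with bounded derivative on all of $\Om$. To fix this, take a smooth cutoff $\chi$ equal to $1$ on a neighbourhood of $K$ and compactly supported in $\Om$. Set $A=\chi\Sigma+(1-\chi)I_d$. This field is $C^1_b$ and $S_{++}^d$-valued, because it is a convex combination of positive definite matrices. It coincides with $\Sigma$ on $K$, so the hypothesis applied to this $A$ gives~(b) on $K$.

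Lemma~\ref{lem:frobinner-convergence} then yields $q_n^i\to q^i$ uniformly on $K$ for every $i$. Combined with the first-order part, this gives $b_n\to b$ uniformly on $K$, completing the proof.
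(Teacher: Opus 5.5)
Your proof is correct and follows the paper's argument: the same add-and-subtract split for the drift, Lemma~\ref{lem:frobinner-convergence} with $A_n=\Sigma_n$ for the It\^o correction, and a three-term telescoping of $\Lambda_n-\Lambda$ that differs from the paper's only cosmetically. Your cutoff field $\chi\Sigma+(1-\chi)I_d$ supplies a detail the paper leaves implicit. It shows that $\Sigma$, which need not be bounded on all of $\Om$, can be replaced on $K$ by an admissible bounded $C^1$ test field.
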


\begin{proof}
Write
$D\phi_n\mu_n-D\phi\mu
=(D\phi_n-D\phi)\mu_n+D\phi(\mu_n-\mu)$;
both terms vanish uniformly by hypothesis.
Lemma~\ref{lem:frobinner-convergence} with
$A_n=\Sigma_n$ gives $q_n\to q$, hence $b_n\to b$.
For the covariance, the identity
$\Lambda_n-\Lambda
=D\phi_n[(\Sigma_n-\Sigma)D\phi_n^\top
+\Sigma(D\phi_n-D\phi)^\top]
+(D\phi_n-D\phi)\Sigma D\phi^\top$
with uniform bounds on $D\phi_n,\Sigma$ yields
$\Lambda_n\to\Lambda$.
\end{proof}

The following proposition bridges the two theorems
from a single joint hypothesis.

\begin{prop}\label{prop:strong-assumption-bridge}
Fix a compact $K\subset\Om$.
Suppose $\phi_n\in C^2$, $\phi\in C^3$, and
$\mu_n,\mu,\Sigma_n,\Sigma$ are $C^1$ with
$\norm{\phi_n-\phi}_{W^{2,\infty}(K)}
+\norm{\mu_n-\mu}_{L^\infty(K)}
+\norm{\Sigma_n-\Sigma}_{L^\infty(K)}\to 0$
and $\inf_n\inf_K\sigmin(D\phi_n)\ge s>0$.
Then the hypotheses of both
Theorems~\ref{thm:ambient-to-local-conv}
and~\ref{thm:local-to-ambient-conv} hold on~$K$.
In particular, $b_n\to b$ and $\Lambda_n\to\Lambda$
uniformly on~$K$.
\end{prop}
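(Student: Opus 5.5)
The plan is to verify the hypotheses of Theorem~\ref{thm:local-to-ambient-conv} and Theorem~\ref{thm:ambient-to-local-conv} one at a time, using only the joint $W^{2,\infty}$ hypothesis and compactness of~$K$. The conclusion $b_n\to b$, $\Lambda_n\to\Lambda$ then follows directly from Theorem~\ref{thm:local-to-ambient-conv}.

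\textbf{Hypotheses of Theorem~\ref{thm:local-to-ambient-conv}.}
\begin{enumerate}[leftmargin=*,nosep]
\item $W^{2,\infty}(K)$ convergence gives $\phi_n\to\phi$, $D\phi_n\to D\phi$ and $\nabla^2\phi_n^i\to\nabla^2\phi^i$ uniformly on~$K$. The last gives $\sup_n\sup_K\norm{\nabla^2\phi_n^i}_F<\infty$, since a convergent sequence in $L^\infty$ is bounded and $\phi\in C^3$ has a bounded Hessian on~$K$.
\item For any bounded $A\in C^1_b$, Cauchy--Schwarz gives
\[
\abs{\frobinner{A}{\nabla^2\phi_n^i-\nabla^2\phi^i}}\le\norm{A}_{L^\infty}\norm{\nabla^2\phi_n^i-\nabla^2\phi^i}_{L^\infty(K)}\to0 .
\]
This is the weak-Hessian hypothesis.
\item $\mu_n\to\mu$ and $\Sigma_n\to\Sigma$ uniformly by assumption. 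Their uniform boundedness follows because $\mu,\Sigma$ are continuous on compact~$K$ and the perturbations tend to zero in $L^\infty$.
\item $D\phi$ has rank~$d$: Weyl's inequality passes $\sigmin(D\phi_n)\ge s$ to the uniform limit, so $\sigmin(D\phi)\ge s$.
\end{enumerate}

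\textbf{Hypotheses of Theorem~\ref{thm:ambient-to-local-conv}.} Take the sequence $\hat\phi_n:=\phi_n$ with $\hat b_n:=b_n$ and $\hat\Lambda_n:=\Lambda_n$.
\begin{enumerate}[leftmargin=*,nosep]
\item The lower bound $\sigmin(D\hat\phi_n)\ge s$ is assumed, and $\sigmin(D\phi)\ge s$ on $K$ was shown above.
\item The uniform Hessian bound and $W^{1,\infty}$ convergence come from step~1 above.
\item The weak-Hessian condition was established in step~2 above.
\item $\hat b_n\to b$ and $\hat\Lambda_n\to\Lambda$ uniformly is exactly the conclusion of Theorem~\ref{thm:local-to-ambient-conv}, already obtained.
\end{enumerate}
So the two theorems are chained in this order: local-to-ambient first, then ambient-to-local.

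\textbf{Main obstacle: regularity mismatches.} Theorem~\ref{thm:ambient-to-local-conv} asks for $b\in C^1$ and $\Lambda\in C^1(\Om;S^D_{++})$, but $\Lambda=D\phi\,\Sigma\,D\phi^\top$ has rank $d<D$, so it is only positive semidefinite.
\begin{itemize}[leftmargin=*,nosep]
\item I would argue that positive definiteness is never used in that proof. Lemma~\ref{lem:local-cov-control} and the drift bound~\eqref{eq:drift-control} need only boundedness and uniform convergence of $\Lambda$, so the hypothesis should be read as $S^D_+$.
\item $b=D\phi\,\mu+\tfrac12 q$ is $C^1$ because $\phi\in C^3$ and $\mu,\Sigma\in C^1$. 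Likewise $\Lambda\in C^1$.
\item Lemma~\ref{lem:frobinner-convergence} is stated with $A$ valued in $S^d_{++}$. I would note that its proof uses only bilinearity, so this restriction can also be relaxed wherever it is invoked.
\end{itemize}
With these remarks in place, both theorems apply.
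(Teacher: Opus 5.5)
Your proposal is correct and follows the paper's argument. $W^{2,\infty}$ convergence gives uniform Hessian convergence, hence the uniform Hessian bound and, via Cauchy--Schwarz, the weak-Hessian condition; uniform convergence on compact $K$ bounds $\mu_n,\Sigma_n$; and the hypotheses of both theorems are then checked directly. Your explicit chaining ($\hat\phi_n:=\phi_n$, $\hat b_n:=b_n$, $\hat\Lambda_n:=\Lambda_n$), the Weyl argument for $\sigmin(D\phi)\ge s$ on $K$, and the remark that the $S^D_{++}$ requirement on the rank-$d$ field $\Lambda$ must be read as $S^D_+$ all usefully fill in what the paper's ``direct comparison'' leaves implicit.
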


\begin{proof}
$W^{2,\infty}(K)$ convergence gives
$\nabla^2\phi_n^i\to\nabla^2\phi^i$ uniformly,
implying uniform Hessian boundedness and
Cauchy--Schwarz:
$\abs{\frobinner{A}{\nabla^2\phi_n^i-\nabla^2\phi^i}}
\le\norm{A}_{F,\infty}\norm{\nabla^2\phi_n^i
-\nabla^2\phi^i}_{F,\infty}\to 0$.
Uniform convergence on compact~$K$ gives uniform
boundedness of $\mu_n,\Sigma_n$.
Direct comparison with the hypotheses of the two
theorems completes the argument.
\end{proof}

Given link~(2) above, the following theorem
completes the convergence chain.

\begin{thm}[Weak convergence]\label{thm:weak-convergence}
Let $Z^{(n)}$ be processes solving the martingale problems for
$(\mu_n,\Sigma_n)$ on~$\Om$, and set
$X^{(n)}:=\phi_n(Z^{(n)})$.
Let $Z$ solve the martingale problem for $(\mu,\Sigma)$, and set
$X:=\phi(Z)$.
Assume there exists a compact set $K\subset \Om$ such that
\[
  \Prob\bigl(Z_t^{(n)}\in K \text{ for all } t\in[0,T]\bigr)=1,
  \qquad
  \Prob\bigl(Z_t\in K \text{ for all } t\in[0,T]\bigr)=1
\]
for every~$n$.
Assume further that $(\mu,\Sigma)$ are continuous on~$K$ with
$\Sigma$ uniformly positive definite, and that
$\mu_n\to\mu$, $\Sigma_n\to\Sigma$, and $\phi_n\to\phi$
uniformly on~$K$.
Finally, assume that for each starting point $z\in K$, the martingale
problem for $(\mu_n,\Sigma_n)$ admits a solution $\Prob_n^z$, and the
target martingale problem for $(\mu,\Sigma)$ admits a unique solution
$\Prob^z$.
Fix a common initial point $z_0\in K$ and set
$Z_0^{(n)}=Z_0=z_0$.
Then
\[
  \cL(Z^{(n)}) \Rightarrow \cL(Z)
  \quad\text{on } C([0,T];\Rd), \quad  \cL(X^{(n)}) \Rightarrow \cL(X)
  \quad\text{on } C([0,T];\RD).
\]
\end{thm}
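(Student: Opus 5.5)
The argument has two parts: convergence of the latent laws by the Stroock--Varadhan tightness-plus-uniqueness scheme, and transfer to the ambient processes through the maps $\phi_n\to\phi$.

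\textbf{Step 1 (tightness).} Since all $Z^{(n)}$ remain in the compact $K$ almost surely on $[0,T]$, and $\mu_n\to\mu$, $\Sigma_n\to\Sigma$ uniformly on $K$ with continuous limits, the coefficients are uniformly bounded on $K$, say by $\sup_n\sup_K(\norm{\mu_n}+\norm{\Sigma_n}_F)\le C$. From the martingale problem, each coordinate satisfies $Z^{(n),i}_t=z_0^i+\int_0^t\mu_n^i(Z^{(n)}_r)\,dr+M^{(n),i}_t$, with martingale $M^{(n),i}$ of quadratic variation $\int\Sigma_n^{ii}$. Burkholder--Davis--Gundy then gives $\E\abs{Z^{(n)}_t-Z^{(n)}_u}^4\le C'\abs{t-u}^2$ uniformly in $n$. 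With the fixed initial point $z_0$, Kolmogorov's criterion yields tightness of $\{\cL(Z^{(n)})\}$ on $C([0,T];\Rd)$.

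\textbf{Step 2 (identification of limit points).} Let $\Prob^\infty$ be a subsequential limit. For $f\in C_c^\infty(\Rd)$ set $L_nf=\mu_n\cdot\nabla f+\tfrac12\frobinner{\Sigma_n}{\nabla^2 f}$, and similarly $Lf$. Take the standard test functional $\E[(f(Z_t)-f(Z_u)-\int_u^t L_nf(Z_r)\,dr)\prod_k h_k(Z_{u_k})]=0$ for $u_k\le u$ and bounded continuous $h_k$. Replace $L_n$ by $L$ at a cost of at most $T\sup_K\abs{L_nf-Lf}\to 0$; this step uses the confinement to $K$. The remaining functional is bounded and continuous on path space, once $L f$ is extended continuously off $K$, which is harmless because $K$ is closed and the support constraint passes to the limit by the portmanteau theorem. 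Passing to the limit shows that $\Prob^\infty$ solves the $(\mu,\Sigma)$ martingale problem from $z_0$. By the assumed uniqueness, $\Prob^\infty=\Prob^{z_0}$, so every subsequence has a further subsequence converging to the same law, and hence $\cL(Z^{(n)})\Rightarrow\cL(Z)$.

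\textbf{Step 3 (ambient laws).} Write $X^{(n)}=\Phi_n(Z^{(n)})$, where $\Phi_n(\omega)=\phi_n\circ\omega$ on $C([0,T];K)$. Since $\phi$ is continuous on compact $K$, hence uniformly continuous, and $\phi_n\to\phi$ uniformly on $K$, we have $\Phi_n(\omega_n)\to\Phi(\omega)$ whenever $\omega_n\to\omega$ uniformly with values in $K$. The extended continuous mapping theorem (Billingsley), applied to paths supported in $C([0,T];K)$, then gives $\cL(X^{(n)})\Rightarrow\cL(X)$. Alternatively, Skorokhod representation followed by the triangle inequality $\norm{\phi_n(\omega_n)-\phi(\omega)}_\infty\le\sup_K\abs{\phi_n-\phi}+\norm{\phi(\omega_n)-\phi(\omega)}_\infty$ gives the same conclusion.

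The main obstacle is the limit passage in Step 2. Since $\mu,\Sigma$ are only defined and continuous on $K$, continuity of the test functional must be handled with care. I would extend $\mu,\Sigma$ continuously to $\Rd$ (Tietze) and argue that both the prelimit and limit measures charge only $C([0,T];K)$, so that the extension does not affect the martingale problem.
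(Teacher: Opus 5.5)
Your proof is correct, and for the latent laws it takes a genuinely different route from the paper. The paper proves neither tightness nor identification of limit points. It extends $(\mu_n,\Sigma_n)$ and $(\mu,\Sigma)$ from $K$ to bounded continuous coefficients on $\Rd$, using a bounded linear extension operator followed by the nearest-point projection onto $S_+^d$, which carries uniform convergence on $K$ over to the extensions. It observes that confinement to $K$ leaves the laws on $[0,T]$ unchanged, and then cites the Stroock--Varadhan convergence theorem (Theorem~\ref{thm:stroock-varadhan}). You instead re-derive that theorem in the case at hand:
\begin{itemize}
\item BDG and Kolmogorov for tightness;
\item the martingale-problem test functionals, with $L_nf$ compared to $Lf$ only on $K$;
\item portmanteau to keep every limit point on the closed set $C([0,T];K)$;
\item uniqueness at $z_0$.
\end{itemize}
The citation is far shorter. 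Read literally, though, the cited theorem requires well-posedness of the \emph{extended} martingale problem from every starting point in $\Rd$. The hypotheses only give uniqueness for the original coefficients from points of $K$. Your route avoids this. Because every limit point lives on $C([0,T];K)$, it solves the original problem, so only the assumed uniqueness at $z_0$ is used. Nothing about the coefficients off $K$ matters beyond a bounded continuous extension of $Lf$ that makes the test functional continuous. For the ambient laws, your Skorokhod/triangle-inequality variant is essentially the paper's argument: the deterministic bound $\sup_K\norm{\phi_n-\phi}$ plus the continuous mapping theorem for $\gamma\mapsto\phi\circ\gamma$. Both tacitly use that $\phi$ is continuous on $K$.

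One caveat applies to both proofs alike. Neither uses the uniform positive definiteness of $\Sigma$, and that hypothesis is in fact incompatible with almost-sure confinement to a compact $K$ when $T>0$. Suppose $\Sigma\ge\lambda I$ on $K$. The martingale part $M^1$ of $Z^1$ is then bounded on $[0,T]$, because $Z$ and the drift are, while $\langle M^1\rangle_T\ge\lambda T$. By Dambis--Dubins--Schwarz, a Brownian motion $B$ would then satisfy $\sup_{s\le\lambda T}\abs{B_s}\le\mathrm{diam}(K)+T\sup_K\norm{\mu}$ almost surely, which is false. Your argument goes through verbatim with that assumption removed, and that is the form in which the statement has content.
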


\begin{proof}
Since $Z^{(n)}$ and $Z$ remain in~$K$ almost surely up to time~$T$,
we may extend $(\mu_n,\Sigma_n)$ and $(\mu,\Sigma)$ from~$K$ to
bounded continuous coefficients on~$\Rd$
(preserving positive semidefiniteness)
without changing the laws of the processes on~$[0,T]$;
see Appendix~\ref{app:deferred-convergence} for details.
The uniform convergence on~$K$ ensured by
the coefficient-level convergence assumptions transfers to these
extensions, so the Stroock--Varadhan theorem~\cite{stroock1979multidimensional} yields
$\Prob_n^z\Rightarrow \Prob^z$ on $C([0,T];\Rd)$.

For the ambient processes, uniform convergence $\phi_n\to\phi$ on~$K$
implies
\[
  \sup_{t\le T}\norm{\phi_n(Z_t^{(n)})-\phi(Z_t^{(n)})}_2
  \le \sup_{x\in K}\norm{\phi_n(x)-\phi(x)}_2
  \longrightarrow 0
\]
deterministically.
Applying the continuous mapping theorem to the fixed continuous map
$\gamma\mapsto \phi\circ\gamma$ and combining with the display above
gives $\phi_n(Z^{(n)})\Rightarrow \phi(Z)$ on $C([0,T];\RD)$.
\end{proof}

\begin{cor}[MFPT convergence]\label{cor:mfpt-convergence}
  Under the hypotheses of Theorem~\ref{thm:weak-convergence},
  let $\tau_r^{(n)}=\inf\{t\ge 0:\norm{X_t^{(n)}-X_0}_2\ge r\}\wedge T$
  and $\tau_r$ the corresponding hitting time for~$X$,
  both capped at~$T$.
  Assume the limiting process crosses the sphere
  $\{\norm{x-X_0}=r\}$ transversally almost surely:
  writing
  $\tau_r^+=\inf\{t:\norm{X_t-X_0}>r\}\wedge T$,
  \[
    \Prob\bigl(\tau_r < T \;\text{and}\; \tau_r = \tau_r^+\bigr)=1.
  \]
  Then $\tau_r^{(n)}\Rightarrow\tau_r$ in distribution
  and $\E[\tau_r^{(n)}]\to\E[\tau_r]$.
\end{cor}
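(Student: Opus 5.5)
The proof goes through the continuous mapping theorem applied to the hitting-time functional on path space, followed by bounded convergence for the means. For $\gamma\in C([0,T];\RD)$ define $\Phi(\gamma):=\inf\{t\ge0:\norm{\gamma(t)-\gamma(0)}_2\ge r\}\wedge T$. Then $\tau_r^{(n)}=\Phi(X^{(n)})$ and $\tau_r=\Phi(X)$. Because $Z^{(n)}_0=Z_0=z_0$, the starting points are $X^{(n)}_0=\phi_n(z_0)$ and $X_0=\phi(z_0)$. These differ only by a deterministic vanishing shift, and $\Phi$ already uses each path's own starting point, so no separate correction is needed. By Theorem~\ref{thm:weak-convergence}, $\cL(X^{(n)})\Rightarrow\cL(X)$ on $C([0,T];\RD)$. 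It therefore suffices to show that $\Phi$ is continuous (in the sup norm) at $\Prob_X$-almost every path. The mapping theorem then gives $\tau_r^{(n)}\Rightarrow\tau_r$.

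The key step, and the only real obstacle, is continuity of $\Phi$ on the set $G:=\{\gamma:\Phi(\gamma)<T,\ \Phi(\gamma)=\Phi^+(\gamma)\}$, where $\Phi^+(\gamma)=\inf\{t:\norm{\gamma(t)-\gamma(0)}>r\}\wedge T$. The transversality hypothesis says exactly that $\Prob(X\in G)=1$. Fix $\gamma\in G$ with $\tau:=\Phi(\gamma)$, and let $\gamma_k\to\gamma$ uniformly. Write $h(t)=\norm{\gamma(t)-\gamma(0)}$ and $h_k(t)=\norm{\gamma_k(t)-\gamma_k(0)}$; then $h_k\to h$ uniformly, with error at most $2\norm{\gamma_k-\gamma}_\infty$.

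For the lower semicontinuity bound, fix $\varepsilon>0$. On $[0,\tau-\varepsilon]$ the continuous function $h$ is strictly below $r$, so by compactness $\max h\le r-\eta$ for some $\eta>0$. For large $k$ this gives $h_k<r$ on that interval, hence $\Phi(\gamma_k)\ge\tau-\varepsilon$.

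For the upper semicontinuity bound, $\tau=\Phi^+(\gamma)<T$ means there is some $t\in(\tau,\tau+\varepsilon)$, with $t<T$, at which $h(t)>r$. For large $k$ we then have $h_k(t)>r$, so $\Phi(\gamma_k)\le\tau+\varepsilon$.

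Hence $\Phi(\gamma_k)\to\Phi(\gamma)$. The case $\tau=0$ cannot occur here, because $h(0)=0<r$. The point to check carefully is exactly this use of $\Phi^+$ to exclude tangential touching: a path that reaches $r$ and then retreats is a discontinuity point of $\Phi$, and the hypothesis assigns such paths probability zero.

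Finally, $0\le\tau_r^{(n)}\le T$ for every $n$, so the family is uniformly bounded. Convergence in distribution of uniformly bounded random variables implies convergence of their means, since $x\mapsto x$ is bounded and continuous on $[0,T]$. Therefore $\E[\tau_r^{(n)}]\to\E[\tau_r]$. Measurability of $\Phi$ (for instance, that it is Borel as a hitting time of a closed set by continuous paths) is routine and will only be noted briefly.
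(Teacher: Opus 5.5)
Your proposal is correct and follows the paper's proof: the continuous mapping theorem applied to the capped hitting-time functional, which is continuous at the transversal paths that the hypothesis makes $\Prob_X$-full, followed by bounded convergence of the $[0,T]$-valued times to get convergence of the means. The only difference is that you verify continuity of $\Phi$ on $G$ directly through your two semicontinuity bounds, whereas the paper cites Whitt's Theorem~13.6.1 for that step.
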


\begin{proof}
  The hitting-time functional
  $\gamma\mapsto\inf\{t:\norm{\gamma(t)-\gamma(0)}\ge r\}\wedge T$
  is continuous at every path that crosses the sphere
  immediately (i.e.\ $\tau_r=\tau_r^+<T$); see
  \cite[Thm.~13.6.1]{whitt2002stochastic}.
  By the transversality assumption the limit~$X$ lies a.s.\ in the
  continuity set, so the continuous mapping theorem gives
  $\tau_r^{(n)}\Rightarrow\tau_r$.
  Since $\tau_r^{(n)}\in[0,T]$, the family is uniformly bounded
  and hence uniformly integrable; convergence of means follows.
\end{proof}


\section{Experiments}\label{sec:experiments}

The convergence chain of
Sections~\ref{sec:rho-metric}--\ref{sec:generalization}
has three links:
(i)~$\rho$-metric chart quality
$\to$ (ii)~coefficient convergence
$\to$ (iii)~weak/MFPT convergence.
This paper controls link~(i) via
Theorem~\ref{thm:rho-gen-same-order-rigorous};
link~(ii) is established in
Theorems~\ref{thm:ambient-to-local-conv}
and~\ref{thm:local-to-ambient-conv} under a
$W^{2,\infty}$ chart-convergence assumption
(Proposition~\ref{prop:strong-assumption-bridge});
link~(iii) is Theorem~\ref{thm:weak-convergence} and
Corollary~\ref{cor:mfpt-convergence}.
We validate each link experimentally.
The ablation study in Section~\ref{ssec:ablation} measures
chart quality via reconstruction error, tangent-space error,
and tangent-space fidelity~$\cE$, as well as end-to-end
coefficient accuracy via $\cE_b$ and $\cE_\Lambda$.
Section~\ref{ssec:mfpt} tests the full chain from
chart to dynamics via MFPT under both smooth and
metastable regimes.
Post-training diagnostics verify the well-conditioning
assumption $\sigmin(D\phi)\ge s$
of Assumption~\ref{ass:target-chart}.

\subsection{Setup}\label{ssec:setup}\label{ssec:surfaces}

\noindent \textbf{Surfaces and embedding.}
We evaluate on four Monge-patch surfaces
$\phi(u,v)=(u,\,v,\,f(u,v))^\top$ with $(u,v)\in[-1,1]^2$:
paraboloid ($f=u^2+v^2$, positive Gaussian curvature),
hyperbolic paraboloid ($f=u^2-v^2$, negative curvature),
quartic dome ($f=(u^2{+}v^2)-(u^4{+}v^4)/2$, sign-changing curvature),
sinusoidal ($f=\sin(u+v)$, intrinsically flat).
Each surface is embedded in $\R^D$ by appending $K_F$
Fourier coordinate pairs, giving $D=3+2K_F$;
the Monge-patch bound $\sigmin(D\phi)\ge 1$ is preserved.
We test $D\in\{11,201\}$.

\noindent \textbf{Two dynamics.}
We consider two latent SDEs with complementary
challenges.

\emph{(i) Overdamped Langevin (MB).}
The M\"uller--Brown potential~\cite{MullerBrown1979}
is rescaled to $(u,v)\in[-1,1]^2$ via
$x=2.25\,u-0.25$ and $y=2.25\,v+1.0$, divided by
$V_0{=}200$, giving drift
$\mu=-\nabla V_{\mathrm{MB}}$ and isotropic diffusion
$\sigma=\sqrt{2k_BT}\,I_2$ with $k_BT{=}0.10$.
The rescaled potential has three metastable wells and
two saddle-point transition channels; this tests
whether the learned SDE captures inter-well
transition rates.

\emph{(ii) Rotation drift with state-dependent
diffusion (Rot).}
The drift and diffusion are
$\mu(u,v)=(-v,\,u)^\top$ and
\[
  \sigma(u,v) =
  \begin{pmatrix}
    1+u^2/4 & u+v \\
    0 & 1+v^2/4
  \end{pmatrix}.
\]
This exercises all three pipeline stages with
anisotropic, position-dependent noise.
Full SDE parameters and MB coefficients are in
Appendix~\ref{app:mb-coeffs}.

\noindent \textbf{Training and penalties.}
All experiments use $10$ random seeds and follow
Algorithm~\ref{alg:pipeline}.
Rotation uses $N{=}50$ training points, the sparse-data
regime where geometric penalties matter most; MB uses
$N{=}200$, enough to populate the three metastable wells.
The ablation study in Section~\ref{ssec:ablation} compares
six conditions:
\emph{baseline} (reconstruction loss only),
\emph{T} (tangent-bundle penalty~$\cL_T$ from
Section~\ref{ssec:tangent-from-cov}, $\lambda_T{=}1$),
\emph{F} (inverse-consistency penalty
$\cL_F{=}\|D\pi\,D\phi{-}I_d\|_F^2$, $\lambda_F{=}1$),
\emph{C} (contractive penalty~\cite{contractiveAutoEncoders},
$\cL_C{=}\|D\pi\|_F^2$, $\lambda_C{=}0.01$),
\emph{T+F} (both $\cL_T$ and $\cL_F$), and
\emph{ATLAS}~\cite{yym} (the local-chart baseline of
Section~\ref{sec:intro}, using Gaussian-kernel blending
of drift and diffusion at $N$ landmarks with oracle
coefficients).
Architecture details are in Appendix~\ref{app:architecture}.

\noindent \textbf{Oracle coefficients.}
All methods receive exact ambient drift $b(x_i)$ and
covariance $\Lambda(x_i)$ computed from the known SDE,
isolating the effect of chart-quality regularization
from coefficient-estimation noise.
In practice, $(b,\Lambda)$ would be estimated from
short-burst trajectory data~\cite{Miles17,yym};
quantifying the resulting perturbation to the
pipeline losses is an open problem.

\noindent \textbf{Landmarks and baselines.}
Landmarks are placed via a greedy $\delta$-net in
the induced Riemannian metric: candidates are accepted
greedily so that consecutive landmarks are separated
by at least~$\delta$, with $\delta$ chosen by binary
search to yield approximately~$N$ points.
This quasi-uniform design follows the ATLAS
construction~\cite{Miles17} and provides good
geometric coverage at a given landmark budget.
ATLAS is included in the coefficient ablation but
excluded from MFPT tables because its ambient-space
simulation becomes infeasible at $D{=}201$ and its
local-chart interpolation degrades under sparse
landmarks.

\noindent \textbf{Statistical tests.}
Unless otherwise noted, significance is assessed by
one-sided paired Wilcoxon signed-rank tests for
directional AE-vs-baseline comparisons.
Table~\ref{tab:rotation_mfpt} (rotation MFPT) uses
paired $t$-tests with common Brownian noise and
reports means.
No multiple-comparison correction is applied; marginal
$p$-values near $0.05$ in Table~\ref{tab:mb_mfpt}
should be interpreted as directional evidence.

\noindent \textbf{Metrics.}
We evaluate at five levels:
(i)~\emph{chart quality}: reconstruction error
$\|x-\phi(\pi(x))\|^2$ and tangent-space error
$\|P_\theta-P\|_F^2$ on $500$ held-out test points,
reported as per-seed medians;
(ii)~\emph{tangent-space fidelity}:
$\cE := \|(I - P_\theta)(b - \tfrac{1}{2}q)\|^2$,
the normal-space residual of the It\^o-corrected drift;
(iii)~\emph{end-to-end coefficient quality}:
$\cE_b := \|D\phi\,\hat\mu + \tfrac{1}{2}q(\hat\sigma\hat\sigma^\top) - b\|^2$
(ambient drift error) and
$\cE_\Lambda := \|D\phi\,\hat\sigma\hat\sigma^\top D\phi^\top - \Lambda\|_F^2$
(ambient covariance error), reported as per-seed
medians over $200$ held-out evaluation points;
(iv)~\emph{trajectory quality}: we use two MFPT
observables.
Under rotation drift, \emph{radial MFPT} at radius~$r$
measures the mean first-passage time to ambient
distance~$r$ from the start, averaged over $500$
trajectories with $T{=}2$ and $\Delta t{=}0.01$;
under the transversality condition of
Corollary~\ref{cor:mfpt-convergence}, convergence of
means follows from weak convergence.
Under MB Langevin, we report \emph{per-pair inter-well
MFPT} for the W0${\to}$W1 and W0${\to}$W2 transitions,
using $2000$ trajectories with $T{=}50$ and
$\Delta t{=}0.005$.
The learned and ground-truth processes are driven by
the same Brownian increments (common random numbers)
so that the reported difference isolates model error
from Monte~Carlo variance.
By the Kramers rate~\cite{Kramers1940,Hanggi1990}
$k\sim\exp(-\Delta V/k_BT)$, inter-well MFPT is
exponentially sensitive to drift errors in the
barrier region, making it a stringent empirical
stress test beyond the fixed-horizon guarantee of
Theorem~\ref{thm:weak-convergence}.
Both MFPT metrics are reported as relative error
vs.\ ground-truth simulation on the true manifold;
further algorithmic details (core-set well assignment,
dwell confirmation, censoring) are in
Appendix~\ref{app:mfpt}.
(v)~\emph{Extrapolation}: reconstruction error at
increasing distance~$\delta$ beyond the $[-1,1]^2$
training domain, evaluated on $10$~seeds at $D{=}11$.

\subsection{Chart and coefficient ablation}\label{ssec:ablation}

Table~\ref{tab:ablation} reports chart quality,
measured by reconstruction error, tangent error, and
$\cE$ on the
paraboloid at $D{=}11$ and $D{=}201$ under both dynamics
with $10$~seeds each; full results on all four surfaces are
in Appendix~\ref{app:full-ablation}.

\begin{table}[ht]
\centering
\caption{Ablation study on the paraboloid, $10$~seeds,
medians reported. \textbf{Bold}~= best per column.
${}^{**}p{<}0.01$ vs.\ baseline, paired Wilcoxon test.
$\cE$: tangent-space fidelity;
$\cE_b$, $\cE_\Lambda$: true end-to-end ambient
drift and covariance errors from the full pipeline.
F and C coincide at MB $D{=}201$: without tangent
alignment, both collapse across all $10$~seeds to a
degenerate chart with near-zero decoder-Jacobian
singular values, underscoring the need for the
tangent-bundle penalty.}
\label{tab:ablation}
\footnotesize
\begin{tabular}{l ccccc ccccc}
\toprule
\multicolumn{11}{c}{\emph{Rotation drift} ($N{=}50$)} \\
\cmidrule(lr){1-11}
 & \multicolumn{5}{c}{$D{=}11$} & \multicolumn{5}{c}{$D{=}201$} \\
\cmidrule(lr){2-6} \cmidrule(lr){7-11}
 & Rec.\ & Tang.\ & $\cE$ & $\cE_b$ & $\cE_\Lambda$
 & Rec.\ & Tang.\ & $\cE$ & $\cE_b$ & $\cE_\Lambda$ \\
\midrule
ATLAS    & $.008$ & $.185$ & $.104$ & $\mathbf{.10}$ & $2.95$
         & $.001$ & $.139$ & $\mathbf{.140}$ & $\mathbf{.14}$ & $3.15$ \\
baseline & $.001$ & $.029$ & $.536$ & $3.67$ & $4.54$
         & ${\scriptstyle <}.001$ & $.093$ & $1.67$ & $7.54$ & $5.79$ \\
\textbf{T}$^{**}$ & $\mathbf{{\scriptstyle <}.001}$ & $\mathbf{.002}$ & $\mathbf{.032}$ & $3.51$ & $1.48$
         & $\mathbf{{\scriptstyle <}.001}$ & $\mathbf{.004}$ & $.320$ & $9.35$ & $\mathbf{2.34}$ \\
F        & $.017$ & $.490$ & $5.60$ & $5.46$ & $22.7$
         & $.006$ & $2.00$ & $11.7$ & $11.7$ & $53.5$ \\
C        & $.001$ & $.055$ & $.736$ & $2.62$ & $6.27$
         & $.006$ & $1.64$ & $10.2$ & $12.2$ & $48.6$ \\
\textbf{T+F}$^{**}$ & $.001$ & $.005$ & $.063$ & $.41$ & $\mathbf{1.19}$
         & $.001$ & $.035$ & $.540$ & $1.86$ & $3.83$ \\
\midrule
\multicolumn{11}{c}{\emph{MB Langevin} ($N{=}200$)} \\
\cmidrule(lr){1-11}
 & \multicolumn{5}{c}{$D{=}11$} & \multicolumn{5}{c}{$D{=}201$} \\
\cmidrule(lr){2-6} \cmidrule(lr){7-11}
 & Rec.\ & Tang.\ & $\cE$ & $\cE_b$ & $\cE_\Lambda$
 & Rec.\ & Tang.\ & $\cE$ & $\cE_b$ & $\cE_\Lambda$ \\
\midrule
ATLAS    & $.001$ & $.012$ & $.773$ & $.465$ & ${\scriptstyle <}.001$
         & ${\scriptstyle <}.001$ & $.014$ & $.779$ & $.511$ & ${\scriptstyle <}.001$ \\
baseline & ${\scriptstyle <}.001$ & $.003$ & $.006$ & $.029$ & $.001$
         & ${\scriptstyle <}.001$ & $.007$ & $.014$ & $.040$ & ${\scriptstyle <}.001$ \\
\textbf{T}$^{**}$ & $\mathbf{{\scriptstyle <}.001}$ & $\mathbf{{\scriptstyle <}.001}$ & $\mathbf{{\scriptstyle <}.001}$ & $.017$ & $\mathbf{{\scriptstyle <}.001}$
         & $\mathbf{{\scriptstyle <}.001}$ & $\mathbf{{\scriptstyle <}.001}$ & $\mathbf{{\scriptstyle <}.001}$ & $.024$ & $\mathbf{{\scriptstyle <}.001}$ \\
F        & ${\scriptstyle <}.001$ & $.015$ & $.039$ & $.088$ & $.006$
         & $.002$ & $2.00$ & $11.3$ & $10.8$ & $.426$ \\
C        & $.002$ & $.430$ & $1.64$ & $3.95$ & $.062$
         & $.002$ & $2.00$ & $11.3$ & $10.8$ & $.426$ \\
\textbf{T+F}$^{**}$ & $\mathbf{{\scriptstyle <}.001}$ & $\mathbf{{\scriptstyle <}.001}$ & $\mathbf{{\scriptstyle <}.001}$ & $\mathbf{.011}$ & $\mathbf{{\scriptstyle <}.001}$
         & $\mathbf{{\scriptstyle <}.001}$ & $\mathbf{{\scriptstyle <}.001}$ & $\mathbf{{\scriptstyle <}.001}$ & $\mathbf{.022}$ & $\mathbf{{\scriptstyle <}.001}$ \\
\bottomrule
\end{tabular}
\end{table}

The tangent-bundle penalty~T directly minimizes the
projector term of the $\rho$-metric
(Section~\ref{ssec:tangent-from-cov}).
At $D{=}11$, T alone reduces tangent error $15{\times}$
(from $.029$ to $.002$) and $\cE$ by $17{\times}$.
Under MB~Langevin ($N{=}200$), T and T+F both achieve
$\cE < 10^{-3}$, while baseline sits at $.006$ and
ATLAS at $.77$, two to three orders of magnitude worse.
Adding~F yields the best reconstruction, extrapolation
(Figure~\ref{fig:extrapolation}), and end-to-end drift
accuracy $\cE_b$ ($0.41$ vs T's $3.51$ at $D{=}11$),
because the encoder-pullback target requires the
inverse consistency that F enforces.

F alone and~C penalize $D\pi$ without a
dynamics-informed tangent target, so they do not reduce
the projector term that drives the $\rho$-metric.
Under MB at $D{=}201$, both collapse to a degenerate AE
with $\cE > 10$ and tangent error saturating at $2.0$,
indicating that the learned tangent space is essentially
unrelated to the true one.
ATLAS~\cite{yym} interpolates oracle coefficients via
Gaussian kernels and achieves the lowest $\cE_b$ under
rotation ($0.10$--$0.14$), confirming that its
coefficient approximation is locally accurate.
However, its tangent-space fidelity is limited by
kernel smoothing: $\cE{=}0.15$ under rotation
(vs $0.053$ for T) and $\cE{=}0.77$ under MB
(vs $<10^{-3}$ for T+F).
The three computational bottlenecks from
Section~\ref{sec:intro} persist: exponential landmark
scaling, per-step re-projection onto~$M$, and
$O(mD^2)$ simulation cost, making ATLAS infeasible
at $D{=}201$.

The columns $\cE_b$ and $\cE_\Lambda$ report the true
ambient drift and covariance errors of the full
three-stage pipeline.
Under rotation, T+F reduces $\cE_b$ by
$9{\times}$ at $D{=}11$ and $4{\times}$ at $D{=}201$
relative to baseline.
T alone achieves good $\cE_\Lambda$ but
worse $\cE_b$ than baseline, because the
encoder-pullback drift target assumes $D\pi\,D\phi\approx I_d$;
without F enforcing this, the targets are biased and the
drift network learns biased coefficients.
This validates the complementary roles of T and~F:
T aligns the tangent space, F ensures the encoder--decoder
consistency that the pullback formula requires.
Under MB, all AE conditions achieve very small $\cE_b$
and $\cE_\Lambda$, with T+F yielding the best drift
accuracy ($\cE_b{=}.011$ at $D{=}11$).
Post-training diagnostics confirm that
$\sigmin(D\phi_\theta)\ge s>0$ holds across all
T+F models, validating the hypothesis of
Assumption~\ref{ass:target-chart}.

Figure~\ref{fig:extrapolation} shows that
T+F consistently extrapolates best, with error growing
$2$--$3{\times}$ slower than baseline at $\delta{=}0.3$.
F alone produces the worst extrapolation: without
tangent alignment, the decoder diverges off-manifold.

\begin{figure}[ht]
\centering
\includegraphics[width=\textwidth]{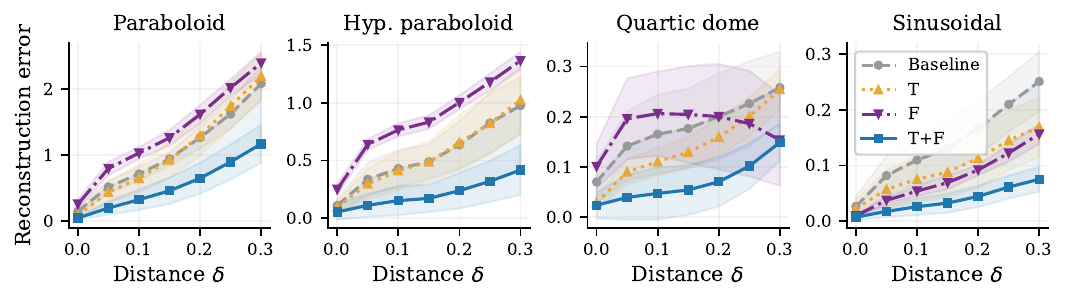}
\caption{Reconstruction error vs extrapolation distance~$\delta$
beyond the $[-1,1]^2$ training domain ($D{=}11$, $10$~seeds,
mean $\pm$ std).
T+F extrapolates best on all four surfaces.}
\label{fig:extrapolation}
\end{figure}

\subsection{From coefficients to dynamics}\label{ssec:mfpt}\label{ssec:mb_mfpt}\label{ssec:radial_mfpt}

We now test whether improved chart and coefficient
quality translates to better dynamics.
The ablation above shows that F~alone and~C fail to
learn a meaningful tangent space, and ATLAS cannot
simulate at $D{=}201$, so the dynamics
comparison focuses on baseline, T, and T+F.

\paragraph{Rotation drift}
Table~\ref{tab:rotation_mfpt} reports radial MFPT
relative error at $r{=}2$ across all four surfaces.
T+F reduces error by $50$--$70\%$ relative to baseline
and is significant ($p{<}0.05$) on all surfaces at both
dimensions, with T~alone showing intermediate gains at
$D{=}201$.
Radial MFPT is a smooth functional with no barrier
sensitivity, so the $\cE_b$ improvement from the
ablation translates directly to dynamics, consistent
with Corollary~\ref{cor:mfpt-convergence}.

\begin{table}[ht]
\centering
\caption{Radial MFPT relative error~(\%) at $r{=}2$
under rotation drift ($N{=}50$, $\delta$-net landmarks,
$10$~seeds, mean).
${}^{**}p{<}0.01$, ${}^{*}p{<}0.05$ vs.\ baseline,
paired $t$-test with common noise.}
\label{tab:rotation_mfpt}
\small
\begin{tabular}{l ccc ccc}
\toprule
 & \multicolumn{3}{c}{$D{=}11$} & \multicolumn{3}{c}{$D{=}201$} \\
\cmidrule(lr){2-4} \cmidrule(lr){5-7}
Surface & base & T & \textbf{T+F} & base & T & \textbf{T+F} \\
\midrule
Paraboloid   & $39.8$ & $37.6$ & $\mathbf{16.0}^{*}$
             & $38.3$ & $38.9$ & $\mathbf{19.2}^{**}$ \\
Hyp.\ parab. & $49.9$ & $32.2^{*}$ & $\mathbf{18.7}^{**}$
             & $51.9$ & $32.3^{*}$ & $\mathbf{20.6}^{**}$ \\
Quartic dome & $25.7$ & $27.1$ & $\mathbf{11.0}^{**}$
             & $29.3$ & $14.8^{**}$ & $\mathbf{12.1}^{**}$ \\
Sinusoidal   & $16.9$ & $16.1$ & $\mathbf{6.0}^{**}$
             & $25.9$ & $19.3^{**}$ & $\mathbf{7.7}^{**}$ \\
\bottomrule
\end{tabular}
\end{table}

\paragraph{Metastable dynamics}
Inter-well MFPT goes beyond the fixed-horizon weak
convergence of Theorem~\ref{thm:weak-convergence}:
the Kramers rate~\cite{Kramers1940,Hanggi1990}
$k\sim\exp(-\Delta V/k_BT)$ makes it exponentially
sensitive to drift errors in the barrier region.
We simulate $2000$ trajectories starting near well~W0
with paired Brownian noise to isolate model error from
Monte~Carlo variance and report per-pair MFPT for the
W0${\to}$W1 and W0${\to}$W2 transitions; see the
Appendix~\ref{app:mfpt} for details.

Table~\ref{tab:mb_mfpt} shows that T+F achieves the
lowest MFPT error on most surface--transition pairs,
though several $p$-values are near $0.05$ and no
multiple-comparison correction is applied, so these
results should be interpreted as directional evidence
rather than definitive.
On the paraboloid, W0${\to}$W1 error drops from
$5.1\%$ to $2.2\%$ at $D{=}11$ ($p{<}0.05$) and
W0${\to}$W2 from $4.3\%$ to $1.7\%$ at $D{=}201$
($p{<}0.05$).
On the hyperbolic paraboloid at $D{=}201$, T+F
reduces W0${\to}$W1 from $6.6\%$ to $4.1\%$
($p{<}0.05$) and T alone reaches $2.9\%$ on
W0${\to}$W2.
Quartic dome and sinusoidal results are directionally
consistent, with T+F achieving the lowest error on
most transitions and all conditions in the $1$--$5\%$
range.
Preliminary oracle-drift experiments (replacing the
learned Stage~2 with exact latent drift) suggest that,
at $D{=}201$, the remaining error is limited by
drift learning near the saddle rather than by
autoencoder quality.

\begin{table}[ht]
\centering
\caption{Per-pair MFPT error~(\%) under MB Langevin
($N{=}200$, $\delta$-net landmarks, $10$~seeds, median).
Only W0${\to}$W1 and W0${\to}$W2 reported, since
all trajectories start at W0.
\textbf{Bold}~= lowest error per row.
${}^{*}p{<}0.05$ vs.\ baseline,
one-sided paired Wilcoxon with common noise.}
\label{tab:mb_mfpt}
\small
\begin{tabular}{ll ccc ccc}
\toprule
 & & \multicolumn{3}{c}{$D{=}11$} & \multicolumn{3}{c}{$D{=}201$} \\
\cmidrule(lr){3-5} \cmidrule(lr){6-8}
Surface & Pair & base & T & T+F & base & T & T+F \\
\midrule
Paraboloid & W0${\to}$W1
  & $5.1$ & $4.4^{*}$ & $\mathbf{2.2}^{*}$
  & $4.1$ & $4.9$ & $\mathbf{3.2}$ \\
 & W0${\to}$W2
  & $2.7$ & $\mathbf{1.3}$ & $2.2$
  & $4.3$ & $2.6$ & $\mathbf{1.7}^{*}$ \\
\midrule
Hyp.\ parab. & W0${\to}$W1
  & $5.3$ & $4.8$ & $\mathbf{4.3}$
  & ${6.6}$ & $4.8$ & $\mathbf{4.1}^*$ \\
 & W0${\to}$W2
  & $3.3$ & $\mathbf{2.1}$ & $2.7$
  & $6.8$ & $\mathbf{2.9}$ & $4.0^{*}$ \\  
\midrule
Quartic dome & W0${\to}$W1
  & ${4.3}$ & $4.5$ & $\mathbf{3.7}$
  & ${5.5}$ & $4.9$ & $\mathbf{4.3}$ \\
 & W0${\to}$W2
  & $2.1$ & $2.3$ & $\mathbf{1.4}^*$
  & ${4.4}$ & $4.3$ & $\mathbf{2.9}$ \\
\midrule
Sinusoidal & W0${\to}$W1
  & $2.5$ & $4.3$ & $\mathbf{2.1}$
  & $2.8$ & $2.9$ & $\mathbf{2.3}$ \\
 & W0${\to}$W2
  & $3.1$ & $1.8$ & $\mathbf{1.2}^*$
  & $2.5$ & $2.1$ & $\mathbf{1.2}^*$ \\
\bottomrule
\end{tabular}
\end{table}

\section{Discussion and Future Work}\label{sec:discussion}

We have developed a geometric regularization framework for autoencoders trained on
data from non-singular diffusions on Riemannian submanifolds.
The $\rho$-metric controls first-order geometry through the tangent-bundle penalty,
achieves the same generalization rate as Sobolev $H^1$ training
for chart quality in an oracle $\rho$-ERM
(Theorem~\ref{thm:rho-gen-same-order-rigorous}),
and under stronger $W^{2,\infty}$ regularity, chart-level errors
propagate controllably to the ambient SDE coefficients
as shown in Section~\ref{sec:convergence}.
The encoder-pullback drift target~\eqref{eq:enc-pull-target}
provides exact latent drift via It\^o's formula applied to the
learned encoder, avoiding the systematic bias of the decoder-side
formula (Proposition~\ref{prop:dec-bias}).

The two penalties T and F serve complementary roles.
T is chart-invariant by construction, built from $\Lambda$
(Lemma~\ref{lem:coord-invariance}), and controls tangent-space
orientation.
F enforces the coordinate-invariant condition
$D\pi\,D\phi = I_d$; the Frobenius penalty
$\norm{D\pi\,D\phi-I_d}_F^2$ itself is not invariant
under reparametrization, but this is moot in practice
because the latent coordinates are learned jointly and
only one chart is ever trained.
Together, T+F produces charts with well-aligned tangent bundles and
accurate encoder--decoder inverses, which determine the quality of
the induced latent SDE.
To assess the downstream effect on dynamics, we use MFPT,
which depends only on the law of the learned process and
avoids the coupling dependence of pathwise metrics.
Under rotation drift, T+F translates directly to better
dynamics on all four surfaces.
Under metastable Langevin dynamics, T+F achieves the
lowest MFPT error on most surface--transition pairs.
T+F also extrapolates $2$--$3{\times}$ better than
baseline beyond the training domain
(Figure~\ref{fig:extrapolation}), a prerequisite for
multi-chart extensions.

\noindent \textbf{Future work.}
The single-chart theory extends to a finite atlas via
chart-wise penalties weighted by a partition of unity,
requiring control of transition-map Lipschitz constants.
The encoder Hessian gap not targeted by T+F could be
addressed by second-order drift objectives such as
penalizing pullback-drift
roughness~\cite{Bittracher2018} or operator-aligned
regression preserving dominant
timescales~\cite{Klus2018}.
Finally, replacing oracle $(b,\Lambda)$ with estimates
bootstrapped from short trajectory
bursts~\cite{Miles17,yym} and quantifying the resulting
perturbation to the $\rho$-loss is an important open
problem.

\appendix

\section{Deferred proofs}\label{app:deferred}

\subsection{Error propagation: additional details}\label{app:deferred-convergence}

The coefficient-propagation theorems and bridging
proposition are stated and proved in the main text
(Section~\ref{sec:convergence}).
We record here only the discussion of the
$W^{2,\infty}$ assumption and the pathwise $L^p$
error bound that complements the weak-convergence
result.

\noindent \textbf{Discussion of the $W^{2,\infty}$ assumption.}
The bridging proposition requires $\phi_n\to\phi$ in $W^{2,\infty}$.
Theorem~\ref{thm:rho-gen-same-order-rigorous} controls the $\rho$-risk
(reconstruction~$+$ tangent-bundle error), which lies strictly
between $L^2$ and $H^1$; it does not imply $W^{1,\infty}$ or
$W^{2,\infty}$ convergence.
The additional second-order control is needed because the Stage~2
drift target involves
$\langle\Sigma,\nabla^2\phi\rangle$, so uniform convergence of
the decoder Hessian is required to pass to the limit.
Three remarks are in order.
\begin{enumerate}[leftmargin=*]
\item \emph{Uniform Hessian boundedness.}
  For feed-forward networks with bounded weights and bounded
  activation derivatives (true of tanh, sigmoid, and softplus),
  all second derivatives of $\phi_\theta$ are automatically
  bounded by a constant depending only on the weight norms and
  depth.
  Thus the uniform Hessian bound
  $\sup_n\sup_K\|\nabla^2\hat\phi_n^i\|_F\le C_K$
  holds whenever the depth and weight-norm bounds are
  uniform in~$n$ (as is the case for the hypothesis class
  of Assumption~\ref{ass:rho-class}).
\item \emph{$W^{2,\infty}$ convergence.}
  Uniform Hessian boundedness does not imply that Hessians
  \emph{converge}.
  In general, the $\rho$-loss minimizers $\phi_n$ need not
  converge in $W^{2,\infty}$ even if they converge in
  $W^{1,\infty}$.
  We therefore present $W^{2,\infty}$ convergence as a
  \emph{sufficient condition} for the propagation chain,
  not as a consequence of the training objective.
\item \emph{Modular design.}
  The convergence pipeline is deliberately modular:
  Theorem~4.5 controls chart quality (link~(i)),
  the bridging proposition controls coefficient
  propagation under $W^{2,\infty}$ (link~(ii)),
  and Stroock--Varadhan gives weak convergence (link~(iii)).
  Closing the gap between links~(i) and~(ii)---showing that
  $\rho$-minimizers enjoy $W^{2,\infty}$ convergence, or
  finding a weaker sufficient condition---is an open problem
  noted in Section~\ref{sec:convergence}.
\end{enumerate}

\noindent \textbf{Weak convergence via Stroock--Varadhan.}
\label{ssec:weak-convergence}

The classical Stroock--Varadhan theorem~\cite{stroock1979multidimensional} provides weak convergence of
solutions to the martingale problem under uniform convergence of
coefficients.

\begin{thm}[Stroock--Varadhan]\label{thm:stroock-varadhan}
Let $\Lambda:\RD\to S_+^D$ and $b:\RD\to\RD$ be locally bounded
measurable functions which are continuous in~$x$, and assume that for
each $x\in\RD$ the martingale problem for $(b,\Lambda)$ starting
from~$x$ has exactly one solution $\Prob^x$.  Suppose that for each
$n\ge 1$, $\Lambda_n:\RD\to S_+^D$ and $b_n:\RD\to\RD$ are
measurable functions satisfying, for all $R>0$:
\[
  \sup_{n\ge 1}\sup_{\norm{x}\le R}
  \big[\norm{b_n(x)}_2+\norm{\Lambda_n(x)}_F\big]<\infty
\]
and
\[
  \lim_{n\to\infty}\sup_{\norm{x}\le R}
  \big(\norm{\Lambda_n(x)-\Lambda(x)}_F
      +\norm{b_n(x)-b(x)}_2\big) = 0.
\]
Let $\Prob_n^x$ be a solution to the martingale problem for
$(b_n,\Lambda_n)$ starting from~$x$.  Then
$\Prob_n^x\to\Prob^x$ weakly as measures on path space.
\end{thm}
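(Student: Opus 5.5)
This is the classical Stroock--Varadhan stability result (their Theorem~11.1.4), stated on $\RD$. The plan is a tightness-plus-uniqueness argument on path space $C([0,\infty);\RD)$.

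\textbf{Step 1: tightness of $\{\Prob_n^x\}$.}
I would first localize. Fix $R$ and let $\tau_R=\inf\{t:\norm{X_t}\ge R\}$. By the first displayed hypothesis, $b_n$ and $\Lambda_n$ are bounded on the ball $\{\norm{x}\le R\}$ uniformly in $n$. For $f\in C_c^\infty$ the martingale property gives
\[
f(X_{t\wedge\tau_R})-\int_0^{t\wedge\tau_R}L_nf(X_s)\,ds
\]
as a martingale under $\Prob_n^x$, where $L_nf=b_n\cdot\nabla f+\tfrac12\langle\Lambda_n,\nabla^2f\rangle_F$. Applying this to $f$ and to $f^2$ (equivalently, working with the quadratic variation) yields a uniform estimate
\[
\E_n\bigl[\norm{X_{t\wedge\tau_R}-X_{s\wedge\tau_R}}^4\bigr]\le C_R|t-s|^2 .
\]
The same bounds control $\Prob_n^x(\tau_R\le T)$ uniformly in $n$ as $R\to\infty$. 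I would do this in the standard way, using test functions $f$ that equal $\norm{x-x_0}^2$ near the start. Alternatively, tightness follows from Stroock--Varadhan's Theorem~1.4.6 criterion: for each $f\in C_c^\infty$ there is a constant $A_f$, uniform in $n$, such that $f(X_t)+A_ft$ is a submartingale. Aldous/Kolmogorov then gives tightness of the stopped processes, and non-explosion control gives tightness of the unstopped ones.

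\textbf{Step 2: identify limit points.}
Let $\Prob^*$ be a subsequential limit. For $f\in C_c^\infty$, times $s<t$, and a bounded continuous $\mathcal F_s$-measurable functional $G$, we have
\[
\E_n\Bigl[\Bigl(f(X_t)-f(X_s)-\int_s^tL_nf(X_u)\,du\Bigr)G\Bigr]=0 .
\]
I would replace $L_n$ by $L$. The error is bounded by $\norm{G}_\infty (t-s)\sup_{\mathrm{supp} f}\abs{L_nf-Lf}$, which tends to $0$ by the uniform convergence on compacts. Since $b$ and $\Lambda$ are continuous, the functional $\omega\mapsto\bigl(f(\omega_t)-f(\omega_s)-\int_s^tLf(\omega_u)\,du\bigr)G(\omega)$ is bounded and continuous on path space. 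Weak convergence therefore passes the identity to $\Prob^*$. Hence $\Prob^*$ solves the martingale problem for $(b,\Lambda)$ from $x$. By uniqueness $\Prob^*=\Prob^x$. Every subsequence thus has a further subsequence converging to $\Prob^x$, so the whole sequence converges.

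\textbf{Main obstacle.}
The delicate step is tightness without global bounds, i.e.\ ruling out explosion uniformly in $n$ using only local uniform bounds. I would handle it through the localized martingale estimate above together with the convergence hypothesis. One caveat: local bounds alone do not prevent explosion, so the limit being conservative must come from the well-posedness (non-explosive uniqueness) of the limit problem. The argument is to show that the hitting probabilities $\Prob_n^x(\tau_R\le T)$ converge to $\Prob^x(\tau_R\le T)$ via the stopped processes, and that this limit tends to $0$ as $R\to\infty$. In the paper's application to Theorem~\ref{thm:weak-convergence} the coefficients are extended to be bounded, so this issue disappears.
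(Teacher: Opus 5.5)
Your proposal is correct and is essentially the classical tightness-plus-uniqueness proof of Stroock--Varadhan's Theorem~11.1.4. The paper does not reprove that theorem; it simply cites it. The one step to state more carefully is uniform non-explosion: $\tau_R$ is only lower semicontinuous on path space, so the stopped laws do not give convergence of hitting probabilities. What they give, via portmanteau on the closed set $\{\sup_{t\le T}\norm{X_t}\ge R\}$ together with uniqueness of the stopped martingale problem (which follows from well-posedness), is $\limsup_{n}\Prob_n^x(\tau_R\le T)\le\Prob^x(\tau_{R'}\le T)$ for all but countably many $R'<R$, and this is enough for compact containment.
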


\noindent\textbf{Coefficient extension.}
To apply Theorem~\ref{thm:stroock-varadhan}, it suffices to
extend $(\mu_n,\Sigma_n)$ and $(\mu,\Sigma)$ from~$K$ to
bounded continuous coefficients on~$\Rd$; Stroock--Varadhan
requires only continuity and local boundedness.
A bounded linear extension operator
$E\!:C(K)\to C_b(\Rd)$ extends each field, and
composing the matrix extensions with the nearest-point
projection onto $S_+^d$ preserves positive
semidefiniteness and compact-uniform convergence.
Since the processes remain in~$K$ a.s., their laws are
unchanged.

The proof of the weak convergence theorem appears in
Section~\ref{sec:convergence}.
The pathwise error bound below complements the weak convergence
result; its proof uses standard BDG and Gronwall tools.

\begin{thm}[$L^p$ pathwise error bound]\label{thm:pathwise-error}
Let $X_t$ and $\hat X_t$ be strong solutions of
$dX=\mu(X)\,dt+\sigma(X)\,dW$ and
$d\hat X=\hat\mu(\hat X)\,dt+\hat\sigma(\hat X)\,dW$
driven by the same Brownian motion~$W$,
with $X_0=\hat X_0$ and
$\sigma=\Sigma^{1/2}$,
$\hat\sigma=\hat\Sigma^{1/2}$, both principal square roots.
Assume there exists a compact set~$\cK\subset\Rd$ such that
\[
  \Prob\bigl(X_s,\hat X_s\in\cK \text{ for all } s\in[0,T]\bigr)=1.
\]
Suppose
\[
  \norm{\hat\mu-\mu}_{L^\infty(\cK)}\le\epsilon_\mu,
  \qquad
  \norm{\hat\Sigma-\Sigma}_{L^\infty(\cK)}\le\epsilon_\Sigma.
\]
Assume $\mu,\hat\mu$ are $C_\mu$-Lipschitz and
$\sigma,\hat\sigma$ are $C_\sigma$-Lipschitz on~$\cK$.
Assume moreover that there exists $\lambda_0>0$ such that
\[
  \lambda_{\min}(\Sigma(x)),\ \lambda_{\min}(\hat\Sigma(x))
  \ge \lambda_0
  \qquad\text{for all } x\in\cK.
\]
Then for any $p\ge 2$ and $0\le t\le T$, there exists
$C_2=C_2(p,d,T,C_\mu,C_\sigma)$ such that
\begin{equation}\label{eq:pathwise-error-bound}
  \E\big[(\Delta_t^*)^p\big]
  \le C_2\bigg[\epsilon_\mu^p
    + \frac{1}{2^p\lambda_0^{p/2}}\,\epsilon_\Sigma^p\bigg],
\end{equation}
where $\Delta_t^*=\sup_{s\le t}\norm{X_s-\hat X_s}$.
\end{thm}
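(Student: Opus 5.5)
The plan is the standard synchronous-coupling argument. Write $\Delta_s = X_s-\hat X_s$. Since both processes are driven by the same $W$ and start at the same point,
\[
  \Delta_t=\int_0^t\bigl(\mu(X_s)-\hat\mu(\hat X_s)\bigr)\,ds
  +\int_0^t\bigl(\sigma(X_s)-\hat\sigma(\hat X_s)\bigr)\,dW_s .
\]
The first step splits each integrand by an add-zero trick:
\[
  \mu(X_s)-\hat\mu(\hat X_s)=\bigl[\mu(X_s)-\mu(\hat X_s)\bigr]+\bigl[\mu(\hat X_s)-\hat\mu(\hat X_s)\bigr],
\]
and similarly for $\sigma$. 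Because both paths stay in~$\cK$ almost surely, the Lipschitz hypotheses on $\cK$ apply. The first bracket is at most $C_\mu\norm{\Delta_s}$ and the second at most $\epsilon_\mu$. Likewise, the diffusion terms are bounded by $C_\sigma\norm{\Delta_s}$ and $\norm{\sigma-\hat\sigma}_{L^\infty(\cK)}$.

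The second step converts the covariance error $\epsilon_\Sigma$ into an error on the principal square roots. I would use the standard perturbation bound for matrix square roots of uniformly positive definite matrices. If $A,B\succeq\lambda_0 I$, then $A^{1/2}-B^{1/2}$ solves the Sylvester equation
\[
  A^{1/2}X+XB^{1/2}=A-B .
\]
Since $A^{1/2}$ and $B^{1/2}$ have spectra at least $\sqrt{\lambda_0}$, one gets
\[
  \norm{A^{1/2}-B^{1/2}}\le \frac{\norm{A-B}}{2\sqrt{\lambda_0}} .
\]
In the Frobenius norm this follows by diagonalising both roots and dividing entrywise by $\sqrt{a_i}+\sqrt{b_j}\ge 2\sqrt{\lambda_0}$. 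Hence $\norm{\sigma-\hat\sigma}_{L^\infty(\cK)}\le \epsilon_\Sigma/(2\sqrt{\lambda_0})$. This produces exactly the factor $\epsilon_\Sigma^p/(2^p\lambda_0^{p/2})$ in \eqref{eq:pathwise-error-bound}. This step is the one specific to the statement, and I expect it to be the main point requiring care. The key is to get the constant $1/2$ from the Sylvester-equation argument rather than a cruder Lipschitz estimate. Any leftover norm-equivalence factors, such as dimension constants, get absorbed into $C_2(p,d,\dots)$.

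The third step is routine. Take $\sup_{s\le t}$, raise to the power $p$, and use $(a+b+c+e)^p\le 4^{p-1}(a^p+b^p+c^p+e^p)$. Bound the drift integrals by H\"older in time, which gives factors of $t^{p-1}\le T^{p-1}$. Bound the stochastic integrals by the Burkholder--Davis--Gundy inequality followed by H\"older, which gives $C_p T^{p/2-1}\int_0^t\E\norm{\cdot}^p\,ds$. The result is
\[
  \E(\Delta_t^*)^p\le C\bigl[\epsilon_\mu^p+\epsilon_\Sigma^p/(2^p\lambda_0^{p/2})\bigr]+C'\int_0^t\E(\Delta_s^*)^p\,ds .
\]
Before applying Gronwall, I would check that $\E(\Delta_t^*)^p<\infty$; this holds because both processes lie in the compact set $\cK$, so $\Delta^*$ is bounded. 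Gronwall's inequality then yields the claim with $C_2=Ce^{C'T}$, which depends only on $(p,d,T,C_\mu,C_\sigma)$.
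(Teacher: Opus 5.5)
Your proposal is correct and follows essentially the same route as the paper: synchronous coupling, BDG plus H\"older in time, an add-and-subtract split into Lipschitz feedback and sup-norm approximation error, the positive-definite square-root perturbation bound $\norm{\hat\sigma-\sigma}\le\epsilon_\Sigma/(2\sqrt{\lambda_0})$, and Gronwall. The only differences are that you prove the square-root inequality (via the Sylvester equation) and check $\E(\Delta_t^*)^p<\infty$ before invoking Gronwall; the paper takes both for granted.
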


\begin{proof}
Write $\Delta_s = X_s - \hat X_s$.
The BDG inequality~\cite{RogersWilliams2000} and
H\"{o}lder's inequality give, for $C=C(p,d,T)$,
\begin{equation}\label{eq:BDG-application}
  \E\big[(\Delta_t^*)^p\big]
  \le C\int_0^t
    \E\Big[\norm{\hat\mu(\hat X_s)-\mu(X_s)}^p
        +\norm{\hat\sigma(\hat X_s)-\sigma(X_s)}_F^p\Big]ds.
\end{equation}
The drift and diffusion coefficient errors decompose via triangle
inequality into approximation error plus Lipschitz feedback:
\[
  \norm{\hat\mu(\hat x)-\mu(x)}
  \le \epsilon_\mu + C_\mu\norm{\hat x - x},
  \qquad
  \norm{\hat\sigma(\hat x)-\sigma(x)}_F
  \le \tfrac{\epsilon_\Sigma}{2\sqrt{\lambda_0}}
    + C_\sigma\norm{\hat x - x},
\]
where $C_\mu,C_\sigma$ are Lipschitz constants on~$\cK$ and the
diffusion bound uses the perturbation inequality for
positive-definite square roots with eigenvalues $\ge\lambda_0$.
Substituting into~\eqref{eq:BDG-application} and applying
$(a+b)^p\le 2^{p-1}(a^p+b^p)$ yields
\[
  \E\big[(\Delta_t^*)^p\big]
  \le C'\int_0^t \E\big[(\Delta_s^*)^p\big]\,ds
    + C''\,T\,\bigg[\epsilon_\mu^p
    + \frac{\epsilon_\Sigma^p}{2^p\lambda_0^{p/2}}\bigg].
\]
Gronwall's lemma~\cite[Lemma~11.11]{RogersWilliams2000}
gives~\eqref{eq:pathwise-error-bound}.
\end{proof}

\noindent\textbf{Practical limitations of the pathwise bound.}
Theorem~\ref{thm:pathwise-error} is mathematically correct but
provides an \emph{upper bound} that is not tight in practice.
We conducted a controlled diagnostic experiment to test whether
improving the coefficient errors
$\epsilon_\mu,\epsilon_\Sigma$ translates to pathwise trajectory
improvement; the results reveal three structural reasons why the
bound is loose in the learned-chart setting.

\medskip\noindent\textbf{(i)~Gronwall amplification.}
The constant $C_2$ in~\eqref{eq:pathwise-error-bound} arises from
Gronwall's lemma and grows as $e^{C' T}$, where $C'$ depends on the
Lipschitz constants $C_\mu,C_\sigma$ of both the true and learned
coefficients.
In the end-to-end pipeline, the learned coefficients are obtained
by inverting the ambient SDE through the learned chart via the Jacobian,
Hessian, and metric inverse, each of which has its own Lipschitz constant.
For a neural network with Tanh activations on a compact domain, these
constants are moderate but their product enters the exponential,
making $C_2$ substantially larger than~$1$ even at $T=1$.
As a result, a $30$--$60\%$ reduction in $\epsilon_\mu$ can be
multiplied by a Gronwall factor large enough that the bound
does not meaningfully tighten.

\medskip\noindent\textbf{(ii)~Chart reconstruction error dominates.}
The bound~\eqref{eq:pathwise-error-bound} assumes that the two
processes $X,\hat X$ evolve in the \emph{same} coordinate
system and differ only in their coefficients.
In the learned-chart pipeline, however, the ground-truth process is
simulated in the true local coordinates $(u,v)$, while the learned
process evolves in the latent coordinates~$z$ and is mapped to ambient
space via the learned decoder~$\phi_\theta$.
Even if the latent-space coefficients were exact, the trajectory
error would include a \emph{chart reconstruction component}:
at each time step the decoded position
$\phi_\theta(\pi_\theta(x))$ deviates from the true position~$x$
by the autoencoder's reconstruction error.
This reconstruction error accumulates over $T/\Delta t$ Euler
steps and, in our experiments, accounts for the majority of the
observed trajectory error.

\medskip\noindent\textbf{(iii)~Magnitude mismatch.}
The coefficient errors $\epsilon_\mu,\epsilon_\Sigma$ enter the
bound at the same power~$p$, but in practice the two terms
contribute very differently: the
\emph{absolute} magnitudes can differ by an order of magnitude,
so the bound treats them symmetrically up to the factor
$\lambda_0^{-p/2}$, and even a large relative improvement in the
smaller term has negligible effect on the bound's value.

The pathwise bound establishes the \emph{consistency} of the
learned dynamics: if $\epsilon_\mu,\epsilon_\Sigma\to 0$,
then $\Delta_t^*\to 0$. However, the multiplicative constants and the
chart-reconstruction floor prevent it from being an actionable
predictor of trajectory quality at finite sample size.
The distributional convergence of
the weak convergence theorem (Theorem~\ref{thm:weak-convergence}), which avoids these pathwise
difficulties, provides a more practically relevant guarantee.

\section{Additional experiments}\label{app:supp-experiments}

\subsection{M\"uller--Brown potential}\label{app:mb-coeffs}

The standard M\"uller--Brown potential~\cite{MullerBrown1979} is
\[
  V_{\mathrm{MB}}(x,y)
  = \sum_{i=1}^{4} A_i
    \exp\!\bigl[a_i(x-x_i^0)^2
      + b_i(x-x_i^0)(y-y_i^0)
      + c_i(y-y_i^0)^2\bigr],
\]
with the coefficients in Table~\ref{tab:mb_coeffs}.
We rescale to $(u,v)\in[-1,1]^2$ via the affine map
\[
  x = 2.25\,u - 0.25, \qquad y = 2.25\,v + 1.0,
\]
and divide by $V_0=200$ to obtain the rescaled potential
$\tilde V(u,v)=V_{\mathrm{MB}}(x(u),y(v))/V_0$.
The overdamped Langevin SDE on the Monge-patch surface is
\[
  d\begin{pmatrix}u\\v\end{pmatrix}
  = -\nabla\tilde V\,dt
  + \sqrt{2k_BT}\,I_2\,dW_t,
  \qquad k_BT = 0.10.
\]

\begin{table}[ht]
\centering
\caption{M\"uller--Brown potential coefficients.}
\label{tab:mb_coeffs}
\small
\begin{tabular}{crrrrrr}
\toprule
$i$ & $A_i$ & $a_i$ & $b_i$ & $c_i$ & $x_i^0$ & $y_i^0$ \\
\midrule
1 & $-200$ & $-1$   & $0$  & $-10$  & $1$    & $0$ \\
2 & $-100$ & $-1$   & $0$  & $-10$  & $0$    & $0.5$ \\
3 & $-170$ & $-6.5$ & $11$ & $-6.5$ & $-0.5$ & $1.5$ \\
4 & $15$   & $0.7$  & $0.6$& $0.7$  & $-1$   & $1$ \\
\bottomrule
\end{tabular}
\end{table}

\subsection{Landmark sampling}\label{app:landmarks}

Training landmarks are sampled via greedy $\delta$-net subsampling
in local coordinates, following the ATLAS
construction~\cite{Miles17}.
A large uniform candidate pool is generated in the training
domain ($[-1,1]^2$ for rotation, $[-0.55,0.55]^2$ for MB),
and each point is accepted if a local metric approximation
$\sqrt{\Delta u^\top g(\bar u)\,\Delta u}$, where $g$ is
the induced metric evaluated at the midpoint $\bar u$,
to all previously accepted points exceeds~$\delta$.
The separation $\delta$ is chosen by binary search to yield
approximately $N$ landmarks.
The candidate pool size is $\max(10\,000,\,100N)$.

\subsection{Architecture and training}\label{app:architecture}

The encoder $\pi_\theta:\R^D\to\R^d$ and decoder $\phi_\theta:\R^d\to\R^D$
are feedforward networks with two hidden layers and $\tanh$ activations;
width is $64$ at $D{=}11$ and $256$ at $D{=}201$.
The Stage~2 drift network $\hat\mu_\omega:\R^d\to\R^d$ has hidden
layers $[64,64]$ under rotation and $[256,256,256]$ under MB~Langevin.
The Stage~3 diffusion network $\hat\sigma_\psi:\R^d\to\R^{d\times d}$
has hidden layers $[64,64]$ in all conditions.
All hidden layers use $\tanh$ activations.

Hyperparameters differ by dynamics.
Under \emph{rotation drift}: Stage~1 trains for $500$ epochs
(Adam, lr${}=0.005$, batch size~$20$);
Stages~2 and~3 each train for $300$ epochs (Adam, lr${}=0.001$)
with the AE frozen.
Under \emph{MB~Langevin}: Stage~1 trains for $4000$ epochs
(Adam, lr${}=0.005$, batch size~$20$);
Stages~2 and~3 each train for $3000$ epochs (Adam, lr${}=0.001$)
with the AE frozen.

\subsection{MFPT computation}\label{app:mfpt}

We describe the trajectory simulation and MFPT extraction procedure
in detail, as these methodological choices affect the reported results.

\emph{Ground-truth simulation.}
The reference SDE is integrated in the true local coordinates
$(u,v)$ using Euler--Maruyama with step size $\Delta t = 0.005$
(MB) or $\Delta t = 0.01$ (rotation), driven by i.i.d.\ Gaussian
increments $\Delta W_k\sim\mathcal{N}(0,I_d)$.

\emph{Learned simulation.}
The learned latent SDE $(\hat\mu_\omega,\hat\sigma_\psi)$ is
integrated from encoded initial conditions
$z_0 = \pi_\theta(x_0)$ using the \textbf{same Brownian
increments} $\Delta W_k$ as the ground truth via common random
numbers.
This paired-noise design reduces Monte~Carlo variance in the
MFPT error estimate, so that the reported difference isolates
model error from sampling noise.
The learned latent trajectory is decoded to ambient coordinates
via $\phi_\theta$, and the first two components $(u,v)$ are
used for well assignment.

\emph{MB initial conditions.}
All $2000$ trajectories start near well~W0:
$x_0 \sim \mathcal{N}(z_{\mathrm{W0}},\, 0.01\,I_2)$
clipped to $[-0.55,0.55]^2$, where
$z_{\mathrm{W0}} \approx (-0.137, 0.196)$ in rescaled
MB coordinates.
The three well centers in rescaled $(u,v)$ are
W0${}=(-0.137, 0.196)$,
W1${}=(0.388, -0.432)$,
W2${}=(0.089, -0.237)$.

\emph{Core-set well assignment (MB).}
Each time step is assigned to well~$i\in\{0,1,2\}$ if the decoded
$(u,v)$ lies within Euclidean radius $r_{\mathrm{core}}=0.08$ of the
$i$-th well center.
Steps outside all cores are labeled $-1$ for unassigned.
The dwell confirmation ($n_{\mathrm{dwell}}=10$ consecutive
steps in a core) is applied during passage extraction, not
during per-step assignment.

\emph{Pairwise MFPT extraction.}
For each trajectory, we scan the well-assignment sequence and
record confirmed passages: a passage from well~$i$ to well~$j$
begins when the trajectory is in core~$i$ and ends at the first
confirmed arrival in core~$j$, i.e.\ $n_{\mathrm{dwell}}$
consecutive steps in~$j$.
The passage time is backdated to the start of the dwell window.
Multiple passages per trajectory are collected since the scan resumes
from the arrival time.
The pairwise MFPT $\tau_{ij}$ is the mean over all
recorded $i\to j$ passages across $2000$ trajectories.

\emph{Error metric.}
Since all trajectories start near W0, only the first-passage
transitions W0${\to}$W1 and W0${\to}$W2 have model-independent
initial conditions.
Follow-on transitions (e.g.\ W1${\to}$W0) start from wherever the
learned dynamics places the trajectory after the first arrival,
introducing a model-dependent selection bias that contaminates
the comparison: the entrance distribution in well~$j$ differs
between GT and learned dynamics, so subsequent MFPTs out of~$j$
are not directly comparable.
We therefore report the per-pair relative error
\[
  \mathrm{MFPT}_{\mathrm{err}}^{(i\to j)}
  = \frac{|\hat\tau_{ij} - \tau_{ij}|}{\tau_{ij}},
\]
separately for W0${\to}$W1 and W0${\to}$W2 in Section~\ref{ssec:mfpt}.
Pairs with non-finite values, indicating no observed transitions, are
excluded and flagged.

\emph{Censoring.}
Trajectories whose decoded $(u,v)$ ever leaves
$[-1,1]^2$ or contains non-finite values are censored:
their well assignments are set to $-1$ for all time steps,
excluding them from all MFPT metrics.
The censoring is applied identically to GT and learned
trajectories as a matched treatment.
The exit fraction is reported per condition.

\emph{Radial MFPT (rotation drift).}
Under rotation dynamics, we instead measure the mean time for
a trajectory to first reach ambient distance~$r$ from its own
starting point.
The same paired-noise and censoring procedures apply.

\subsection{Full ablation across surfaces}\label{app:full-ablation}

Tables~\ref{tab:full-ablation-rot} and~\ref{tab:full-ablation-mb}
extend the ablation in Section~\ref{ssec:ablation} (paraboloid only)
to all four surfaces ($10$~seeds, medians),
including end-to-end coefficient errors $\cE_b$ and
$\cE_\Lambda$.

\begin{table}[ht]
\centering
\caption{Full ablation under rotation drift ($N{=}50$, $10$~seeds, medians).
\textbf{Bold}~= best per column.}
\label{tab:full-ablation-rot}
\footnotesize
\begin{tabular}{ll ccccc ccccc}
\toprule
 & & \multicolumn{5}{c}{$D{=}11$} & \multicolumn{5}{c}{$D{=}201$} \\
\cmidrule(lr){3-7} \cmidrule(lr){8-12}
Surface & Cond.\ & Tang.\ & $\cE$ & $\cE_\Sigma$
        & $\cE_b$ & $\cE_\Lambda$
        & Tang.\ & $\cE$ & $\cE_\Sigma$
        & $\cE_b$ & $\cE_\Lambda$ \\
\midrule
Paraboloid
 & ATLAS    & $.19$  & $.10$  & $.85$  & $\mathbf{.10}$ & $2.95$
            & $.14$  & $\mathbf{.14}$ & $1.07$ & $\mathbf{.14}$ & $3.15$ \\
 & baseline & $.029$ & $.54$  & $.63$  & $3.67$ & $4.54$
            & $.093$ & $1.67$ & $3.01$ & $7.54$ & $5.79$ \\
 & T        & $\mathbf{.002}$ & $\mathbf{.03}$ & $\mathbf{.04}$ & $3.51$ & $1.48$
            & $\mathbf{.004}$ & $.32$  & $\mathbf{.15}$ & $9.35$ & $\mathbf{2.34}$ \\
 & F        & $.49$  & $5.60$ & $20.4$ & $5.46$ & $22.7$
            & $2.00$ & $11.7$ & $53.5$ & $11.7$ & $53.5$ \\
 & C        & $.055$ & $.74$  & $1.53$ & $2.62$ & $6.27$
            & $1.64$ & $10.2$ & $41.8$ & $12.2$ & $48.6$ \\
 & T+F      & $.005$ & $.06$  & $.13$  & $.41$  & $\mathbf{1.19}$
            & $.035$ & $.54$  & $1.02$ & $1.86$ & $3.83$ \\
\midrule
Hyp.~parab.
 & ATLAS    & $.16$  & $\mathbf{.11}$ & $.82$  & $\mathbf{.11}$ & $\mathbf{2.65}$
            & $.13$  & $\mathbf{.14}$ & $.93$  & $\mathbf{.14}$ & $2.49$ \\
 & baseline & $\mathbf{.018}$ & $.39$  & $\mathbf{.54}$ & $4.82$ & $3.27$
            & $.086$ & $1.85$ & $2.20$ & $10.9$ & $6.70$ \\
 & T        & $.059$ & $8.27$ & $1.07$ & $48.1$ & $9.14$
            & $.158$ & $5.65$ & $3.66$ & $31.5$ & $18.5$ \\
 & F        & $.34$  & $1.98$ & $19.7$ & $2.10$ & $22.7$
            & $1.99$ & $5.05$ & $53.1$ & $5.07$ & $53.1$ \\
 & C        & $.039$ & $.57$  & $.83$  & $2.81$ & $4.47$
            & $1.34$ & $4.21$ & $47.6$ & $5.05$ & $49.2$ \\
 & T+F      & $.149$ & $1.29$ & $3.08$ & $5.89$ & $7.65$
            & $\mathbf{.023}$ & $.27$  & $\mathbf{.69}$ & $.96$  & $\mathbf{2.15}$ \\
\midrule
Quartic dome
 & ATLAS    & $.075$ & $.30$  & $.31$  & $\mathbf{.30}$ & $.54$
            & $.060$ & $\mathbf{.38}$ & $\mathbf{.44}$ & $\mathbf{.38}$ & $\mathbf{.71}$ \\
 & baseline & $.077$ & $1.07$ & $.66$  & $2.08$ & $1.06$
            & $.162$ & $2.16$ & $2.08$ & $3.72$ & $3.12$ \\
 & T        & $.010$ & $.18$  & $.14$  & $2.17$ & $.77$
            & $\mathbf{.024}$ & $1.37$ & $.49$  & $2.39$ & $.79$ \\
 & F        & $.37$  & $2.64$ & $6.24$ & $2.88$ & $6.43$
            & $2.00$ & $5.37$ & $20.8$ & $5.41$ & $20.8$ \\
 & C        & $.152$ & $1.61$ & $1.96$ & $2.80$ & $3.74$
            & $1.91$ & $4.97$ & $19.9$ & $5.41$ & $19.9$ \\
 & T+F      & $\mathbf{.009}$ & $\mathbf{.17}$ & $\mathbf{.12}$ & $.45$  & $\mathbf{.42}$
            & $.036$ & $.71$  & $.52$  & $1.22$ & $1.19$ \\
\midrule
Sinusoidal
 & ATLAS    & $.092$ & $.08$  & $.23$  & $.08$  & $.45$
            & $.071$ & $\mathbf{.11}$ & $.28$  & $\mathbf{.11}$ & $.48$ \\
 & baseline & $.038$ & $.32$  & $.47$  & $1.29$ & $.61$
            & $.144$ & $1.57$ & $2.19$ & $3.19$ & $2.28$ \\
 & T        & $.002$ & $\mathbf{.01}$ & $.04$  & $1.25$ & $\mathbf{.15}$
            & $\mathbf{.006}$ & $.17$  & $\mathbf{.11}$ & $2.10$ & $\mathbf{.20}$ \\
 & F        & $.20$  & $.93$  & $3.21$ & $.95$  & $3.40$
            & $2.00$ & $5.17$ & $32.0$ & $5.23$ & $32.0$ \\
 & C        & $.093$ & $.49$  & $1.33$ & $.67$  & $1.59$
            & $1.16$ & $3.04$ & $14.3$ & $4.99$ & $16.3$ \\
 & T+F      & $\mathbf{.001}$ & $.02$  & $\mathbf{.02}$ & $\mathbf{.07}$ & $.22$
            & $.023$ & $.34$  & $.51$  & $.67$  & $.69$ \\
\bottomrule
\end{tabular}
\end{table}

\begin{table}[ht]
\centering
\caption{Full ablation under MB Langevin ($N{=}200$, $10$~seeds, medians).
\textbf{Bold}~= best per column.}
\label{tab:full-ablation-mb}
\footnotesize
\begin{tabular}{ll ccccc ccccc}
\toprule
 & & \multicolumn{5}{c}{$D{=}11$} & \multicolumn{5}{c}{$D{=}201$} \\
\cmidrule(lr){3-7} \cmidrule(lr){8-12}
Surface & Cond.\ & Tang.\ & $\cE$ & $\cE_\Sigma$
        & $\cE_b$ & $\cE_\Lambda$
        & Tang.\ & $\cE$ & $\cE_\Sigma$
        & $\cE_b$ & $\cE_\Lambda$ \\
\midrule
Paraboloid
 & ATLAS    & $.001$ & $.43$  & $<.001$ & $.43$  & $<.001$
            & $.001$ & $.43$  & $<.001$ & $.43$  & $<.001$ \\
 & baseline & $.002$ & $.006$ & $<.001$ & $.03$  & $<.001$
            & $.003$ & $.007$ & $<.001$ & $.04$  & $<.001$ \\
 & T        & $\mathbf{<.001}$ & $\mathbf{<.001}$ & $<.001$ & $.02$  & $<.001$
            & $\mathbf{<.001}$ & $\mathbf{<.001}$ & $<.001$ & $.02$  & $<.001$ \\
 & F        & $.01$  & $.04$  & $.003$ & $.08$  & $.004$
            & $2.00$ & $11.3$ & $.42$  & $11.3$ & $.42$ \\
 & C        & $.44$  & $1.59$ & $.18$  & $5.98$ & $.18$
            & $2.00$ & $11.3$ & $.42$  & $11.3$ & $.42$ \\
 & T+F      & $<.001$ & $<.001$ & $<.001$ & $\mathbf{.01}$  & $<.001$
            & $<.001$ & $<.001$ & $<.001$ & $\mathbf{.01}$  & $<.001$ \\
\midrule
Hyp.~parab.
 & ATLAS    & $.001$ & $.38$  & $<.001$ & $.38$  & $<.001$
            & $.001$ & $.33$  & $<.001$ & $.33$  & $<.001$ \\
 & baseline & $.001$ & $.003$ & $<.001$ & $.03$  & $<.001$
            & $.007$ & $.02$  & $<.001$ & $.09$  & $.002$ \\
 & T        & $\mathbf{<.001}$ & $\mathbf{<.001}$ & $<.001$ & $.01$  & $<.001$
            & $\mathbf{<.001}$ & $\mathbf{<.001}$ & $<.001$ & $.02$  & $<.001$ \\
 & F        & $.01$  & $.03$  & $.003$ & $.07$  & $.003$
            & $2.00$ & $12.0$ & $.42$  & $12.0$ & $.42$ \\
 & C        & $.44$  & $1.22$ & $.18$  & $5.52$ & $.19$
            & $1.99$ & $11.9$ & $.42$  & $12.0$ & $.42$ \\
 & T+F      & $<.001$ & $<.001$ & $<.001$ & $\mathbf{.01}$  & $<.001$
            & $<.001$ & $<.001$ & $<.001$ & $\mathbf{.01}$  & $<.001$ \\
\midrule
Quartic dome
 & ATLAS    & $.001$ & $.36$  & $<.001$ & $.36$  & $<.001$
            & $.001$ & $.32$  & $<.001$ & $.32$  & $<.001$ \\
 & baseline & $.002$ & $.003$ & $<.001$ & $.03$  & $<.001$
            & $.004$ & $.008$ & $<.001$ & $.04$  & $<.001$ \\
 & T        & $\mathbf{<.001}$ & $\mathbf{<.001}$ & $<.001$ & $.02$  & $<.001$
            & $\mathbf{<.001}$ & $\mathbf{<.001}$ & $<.001$ & $.03$  & $<.001$ \\
 & F        & $.01$  & $.03$  & $.002$ & $.05$  & $.002$
            & $2.00$ & $11.0$ & $.37$  & $11.0$ & $.37$ \\
 & C        & $.39$  & $1.40$ & $.13$  & $5.11$ & $.13$
            & $2.00$ & $11.0$ & $.37$  & $11.0$ & $.37$ \\
 & T+F      & $<.001$ & $<.001$ & $<.001$ & $\mathbf{.01}$  & $<.001$
            & $<.001$ & $<.001$ & $<.001$ & $\mathbf{.01}$  & $<.001$ \\
\midrule
Sinusoidal
 & ATLAS    & $.001$ & $.38$  & $<.001$ & $.38$  & $<.001$
            & $.001$ & $.34$  & $<.001$ & $.34$  & $<.001$ \\
 & baseline & $.001$ & $.002$ & $<.001$ & $.03$  & $<.001$
            & $.004$ & $.008$ & $.001$ & $.06$  & $.001$ \\
 & T        & $\mathbf{<.001}$ & $\mathbf{<.001}$ & $<.001$ & $.02$  & $<.001$
            & $\mathbf{<.001}$ & $\mathbf{<.001}$ & $<.001$ & $.02$  & $<.001$ \\
 & F        & $.01$  & $.02$  & $.003$ & $.05$  & $.003$
            & $2.00$ & $13.3$ & $.68$  & $13.3$ & $.68$ \\
 & C        & $.05$  & $.10$  & $.02$  & $4.07$ & $.03$
            & $2.00$ & $13.8$ & $.68$  & $13.9$ & $.68$ \\
 & T+F      & $<.001$ & $<.001$ & $<.001$ & $\mathbf{.02}$  & $<.001$
            & $<.001$ & $.001$ & $<.001$ & $\mathbf{.02}$  & $<.001$ \\
\bottomrule
\end{tabular}
\end{table}

\bibliographystyle{plain}
\bibliography{refs}
\end{document}